\setlist{nosep}       %
\Crefname{section}{\mbox{\S\hspace*{-0.25ex}}}{\mbox{\S\hspace*{-0.25ex}}}
\Crefname{equation}{Eq.}{Eqs.}
\Crefname{figure}{Fig.}{Figs.}
\Crefname{table}{Tab.}{Tabs.}
\Crefname{appendix}{\S$\!$}{\S$\!$}
\theoremstyle{plain}
\newtheorem{theorem}{Theorem} %
\theoremstyle{definition}
\theoremstyle{remark}
\newcommand{\tactis}{\textsc{tact}{\relsize{-0.5}\textls[120]{i}}\textsc{s}}
\newcommand{\tactistt}{\textsc{tact}{\relsize{-0.5}\textls[120]{i}}\textsc{s-tt}}
\newcommand{\tactisgc}{\textsc{tact}{\relsize{-0.5}\textls[120]{i}}\textsc{s-gc}}
\newcommand{\tactisic}{\textsc{tact}{\relsize{-0.5}\textls[120]{i}}\textsc{s-ic}}
\newcommand{\besthp}[1]{\textsuperscript{\textbf{#1}}}
\newcommand{\paragraphtight}[1]{\par\textbf{#1}~~~}
\newcommand{\B}[1]{\mathbf{#1}}     %
\newtheorem*{rep@theorem}{\rep@title}
\newcommand{\newreptheorem}[2]{%
\newenvironment{rep#1}[1]{%
 \def\rep@title{#2 \ref{##1}}%
 \begin{rep@theorem}}%
 {\end{rep@theorem}}}
\begin{document}

\icmltitlerunning{Transformer-Attentional Copulas for Time Series}

\twocolumn[
\icmltitle{\tactis{}: Transformer-Attentional Copulas for Time Series}

\icmlsetsymbol{equal}{*}

\begin{icmlauthorlist}
\icmlauthor{Alexandre Drouin}{equal,snow}
\icmlauthor{Étienne Marcotte}{equal,snow}
\icmlauthor{Nicolas Chapados}{equal,snow}
\end{icmlauthorlist}

\icmlaffiliation{snow}{ServiceNow Research}

\icmlcorrespondingauthor{All authors}{firstname.lastname@servicenow.com}
\icmlkeywords{Time series, Deep Learning, Transformer, Machine Learning, ICML}

\vskip 0.3in
]

\printAffiliationsAndNotice{\icmlEqualContribution}

\begin{abstract} %
The estimation of time-varying quantities is a fundamental component of decision making in fields such as healthcare and finance.
However, the practical utility of such estimates is limited by how accurately they quantify predictive uncertainty.
In this work, we address the problem of estimating the joint predictive distribution of high-dimensional multivariate time series.
We propose a versatile method, based on the transformer architecture, that estimates joint distributions using an attention-based decoder that provably learns to mimic the properties of non-parametric copulas.
The resulting model has several desirable properties: it can scale to hundreds of time series, supports both forecasting and interpolation, can handle unaligned and non-uniformly sampled data, and can seamlessly adapt to missing data during training.
We demonstrate these properties empirically and show that our model produces state-of-the-art predictions on multiple real-world datasets.
\end{abstract}

\section{Introduction}\label{sec:introduction}

In numerous time series forecasting contexts, data presents itself in a raw form
that rarely matches the standard assumptions of classical forecasting methods.
For instance, in healthcare settings and economic forecasting, groups of related
time series can have different sampling frequencies, be sampled irregularly, and
exhibit missing values~\citep{shukla2021survey,sun2020review}. Covariates that are
predictive of future behavior may not be available for all historical data, or
may be available in a different form, e.g., due to changes in measurement methodology.
Moreover, optimal decision-making in downstream tasks
generally requires the full joint predictive distribution over arbitrary future
time horizons~\citep{peterson2017decision}, not just marginal
quantiles thereof at a fixed horizon. We seek to develop general forecasting
methods that are both suitable for a wide range of downstream tasks and that can
handle all stylized facts about real-world time series---as they are, not as
they ought to be---namely:

\begin{itemize}
    \item Joint characterization of a multivariate stochastic process,
    forecasting trajectories at arbitrary time horizons;
    \item Presence of non-stochastic covariates, either static or time-varying, used as conditioning variables;
    \item Variables within the process that are measured at different sampling frequencies or irregularly sampled;
    \item Missing values for arbitrary time points and variables;
    \item Variables with different domains---%
    $\reals,\reals^+,\nats,\nats^+,\ints$---with skewed and fat-tailed marginal
    behavior.
\end{itemize}

Classical times series models, such as \textsc{arima} \citep{box2015time} and
exponential smoothing methods \citep{hyndman2008forecasting}, are very restricted
in their handling of the above stylized facts. Although extensions have been
proposed that deal with individual issues, they cannot easily deal with all and
require considerable domain knowledge to be effective. Machine learning models
have recently gained in popularity~\citep{Benidis:2020tn}. Nevertheless, methods
introduced in recent years all suffer from limitations when dealing with one or
more of the above listed stylized facts, or neglect in their handling of the
full predictive distribution.

Separately, multivariate forecasting models based on copulas have been popular
in econometrics for more than a decade
\citep{Patton2012ARO,remillard2012copulasemi,Krupskii2020flexible,mayer2021estimation}.
These models enable the separate characterization of the joint behavior of a
group of random variables from their marginal behavior. This has been found to
be especially valuable in areas such as finance and insurance where marginals
are known to exhibit particular patterns of skewness and kurtosis. Recently in
machine learning, low-rank Gaussian copula processes with LSTMs
\citep{hochreiter1997long} have been proposed for high-dimensional forecasting
\citep{salinas2019high}.

Building on the recent successes of transformers as general-purpose sequence
models \citep{vaswani2017attention} and their success in time series forecasting
\citep{rasul2021multivariate,tashiro2021csdi,tang2021probabilistic}, we propose a transformer architecture that can tackle all the above
stylized facts about real-world time series. Notably, we show how to represent
an implicit copula when sampling from the transformer decoder model. Moreover,
by representing each observation as a distinct token with its own timestamp, we can naturally
handle irregularly sampled times series as well as series with missing values
and unequal sampling frequencies. 
We also show that an efficient two-dimensional attention scheme lets us scale to hundreds of time steps and series using vanilla attention \citep{bahdanau2015attention}.

\paragraphtight{Contributions:}
\begin{enumerate}
    \item We present \textit{Transformer-Attentional Copulas for Time Series} (\tactis{}), a highly flexible transformer-based model for large-scale multivariate probabilistic time series prediction (\cref{sec:tactis}).
    \item We introduce \emph{attentional copulas}, an attention-based architecture that estimates non-parametric copulas for an arbitrary number of random variables (\cref{sec:tactis-pseudocopula}).
    \item We theoretically prove the convergence of attentional copulas to valid copulas (\cref{sec:tactis-training}).
    \item We conduct an empirical study showing \tactis{}' state-of-the-art probabilistic prediction accuracy on several real-world datasets, along with notable flexibility (\cref{sec:results}).
\end{enumerate}

\section{Background}

\subsection{Problem Setting}\label{sec:problem-setting}

We are interested in the general problem of estimating the joint distribution of values missing at arbitrary time points in multivariate time series.
This general task encompasses classical problems, such as probabilistic forecasting, backcasting, and interpolation.

Formally, we consider a set of $m$ multivariate time series $\Scal \eqdef \left\{\Xb_1, \ldots, \Xb_m\right\}$, where each $\Xb \in \Scal$ is a collection of possibly-related univariate time series.
For simplicity, we elaborate the notation for a single element of $\Scal$.
Let $\Xb \eqdef \left\{\xb_i \in \reals^{l_i}\right\}_{i=1}^n$, where the $\xb_i$ are univariate time series with arbitrary lengths $l_i \in \nats^+$.
Each $\xb_i$ is associated with (\emph{i})~a Boolean mask $\mb_i \in \bools^{l_i}$, such that $m_{ij} = 1$ if $x_{ij}$ is observed and $m_{ij} = 0$ otherwise,
(\emph{ii})~a matrix of time-varying covariates $\Cb_i \eqdef \left[ \cb_{i1}, \dots, \cb_{il_i} \right] \in \reals^{p \times l_i}$, where each $\cb_{ij} \in \reals^{p}$ represents arbitrary additional information about the observations, and (\emph{iii})~a vector of time stamps $\tb_i \in \reals^{l_i}$ s.t. $t_{ij} < t_{i,j+1}$, indicating the times at which the data were measured.
Note that this setting naturally supports unaligned time series with arbitrary sampling frequencies.

Our goal is to infer the joint distribution of missing time series values given all known information:\footnote{We slightly abuse the notation and omit random variables.}
\begin{equation}\label{eq:problem-statement}
    P\Big(
        \Big\{ \xb_i^{(m)} \Big\}_{i=1}^n
        \mathrel{\Big|}
        \Big\{\xb_i^{(o)}, \Cb_i, \tb_i \Big\}_{i=1}^n
    \Big),
\end{equation}
where $\xb_i^{(o)}$ and $\xb_i^{(m)}$ are the observed ($m_{ij} = 1$) and missing ($m_{ij} = 0$) elements of $\xb_i$, respectively.

From this problem formulation, one can recover standard time series problems via specific masking patterns.
For instance, a $t$-step probabilistic forecasting task can be defined by setting the last $t$ elements of each $\mb_i$ to zero.

\subsection{Transformers}

The model that we propose builds on the \emph{transformer architecture} for
sequence-to-sequence transduction~\citep{vaswani2017attention}. Transformer
models have an encoder-decoder structure, where the encoder learns a
representation of the tokens of an input sequence, and the decoder generates the
tokens of an output sequence autoregressively, based on the input sequence. The
main feature of such models is that they can capture non-sequential dependencies
between tokens via attention
mechanisms~\citep{bahdanau2015attention}. This is in sharp contrast with
recurrent neural networks~\citep{Goodfellow-et-al-2016}, such as
LSTMs~\citep{hochreiter1997long}, which are inherently sequential. Transformers
have been widely discussed in the literature and we thus refer the reader to
the seminal work of \citet{vaswani2017attention} for additional details.
As we later describe, transformers allow \tactis{} to view time series as sets
of tokens, among which non-local dependencies can be learned regardless of
considerations like alignment and sampling frequency.

\subsection{Copulas}\label{sec:bg-copula}

Copulas are mathematical constructs that allow separating the joint dependency structure of a set of random variables from their marginal distributions~\citep{nelsen2007introduction}.
Such a separation can be used to learn reusable models that can be applied to seemingly different distributions, where variables have different marginals but share the dependency structure.

Formally, a copula $C: [0, 1]^d \rightarrow [0, 1]$ is the joint cumulative distribution function (CDF) of a $d$-dimensional random vector $[U_1, \dots, U_d]$ on the unit cube with uniform marginal distributions, i.e.,
\begin{equation}
    C(u_1, \dots, u_d) \eqdef P(U_1 \leq u_1, \dots, U_d \leq u_d),
\end{equation}
where $U_i \sim U_{[0, 1]}$.
According to \citet{sklar1959fonctions}'s theorem, the joint CDF of any random vector $[X_1, \dots, X_d]$ can be expressed as a combination of a copula $C$ and the marginal CDF of each random variable $F_i(x_i) \eqdef P(X_i \leq x_i)$,
\begin{equation}
    P(X_1 \leq x_1, \ldots, X_d \leq x_d) = C\big(F_1(x_1), \dots, F_d(x_d)\big),
    \label{eq:copula-sklar-cdf}
\end{equation}
and the corresponding probability density\footnote{Assuming that the distribution is continuous.} is given by:
\begin{equation}
\begin{multlined}
p(X_1 \!= x_1, \ldots, X_d \!= x_d) = \hspace*{3cm}\\
    c\big(F_1(x_1), \dots, F_d(x_d)\big) \times f_1(x_1) \times \cdots \times f_d(x_d),
\end{multlined}
\end{equation}
where $c: [0, 1]^d \rightarrow [0, 1]$ is the copula's density function and  $f_i$ is the marginal density function of $X_i$.
It is thus possible to estimate seemingly complex joint distributions by learning the parameters of a simple internal joint distribution (the copula) and marginal distributions, which can even be estimated empirically.
One classical approach, which has been abundantly used in time series applications, is the Gaussian copula~\citep{nelsen2007introduction}, where $C$ is constructed from a Gaussian distribution.

In this work, we avoid making such parametric assumptions and propose a new attention-based architecture trained to mimic a non-parametric copula, which we term \emph{attentional copula}. \tactis{} learns to produce the parameters of the copula, on-the-fly, based on learned variable representations. It can thus reuse a learned dependency structure across multiple sets of variables, by mapping them to similar representations.

\section{Related Work}

\paragraphtight{Neural Networks for Time Series Forecasting} Although studied in the
1990's \citep{Zhang1998forecasting}, neural networks and other machine learning (ML)
techniques had long been taken with caution in the forecasting community due to
a perceived propensity to overfit \citep{Makridakis:2018kw}. In recent years,
however, the field has seen a number of demonstrations of successful ML-based
forecasting, in particular winning the prestigious M5 competition
\citep{Makridakis2021m5uncertainty,Makridakis2022m5accuracy}. For neural
networks, work has mostly centered around so-called global models
\citep{MonteroManso2021groups}, which in contrast to classical statistical
methods such as \textsc{arima} \citep{box2015time} or exponential smoothing
\citep{hyndman2008forecasting}, learn a single set of parameters to forecast many
series. A first wave of approaches in the recent resurgence was primarily based
on recurrent or convolutional neural network encoders
\citep{shih2019temporal,Chen:2020bj}. \citet{oreshkin2020nbeats} introduce a
recursive decomposition based on a residual signal projection on a set of
learned basis functions. \citet{Guen2020structured} introduce an approach for
univariate probabilistic forecasting based on determinantal point processes to
capture structured shape and temporal diversity. Extensions of classical
state-space models have also been proposed
\citep{Yanchenko2020stanza,deBezenac2020NKF}. Comprehensive surveys of deep
learning methods for forecasting appear in \citet{Lim:2020br} and
\citet{Benidis:2020tn}, restricting coverage to regularly-sampled data. Of
relevance to the present work are studies of probabilistic multivariate methods,
transformer-based approaches, copulas, and techniques enabling irregular
sampling. 

\paragraphtight{Probabilistic Multivariate Methods}
Narrowing attention to methods carrying out probabilistic forecasting of the
joint distribution of a multivariate stochastic process, DeepAR
\citep{salinas2020deepar} computes an iterated one-step-ahead Monte Carlo
approximation of the predictive distribution by sampling from a fixed functional
form whose parameters are the result of a recurrent neural network (RNN).
Likewise, \citet{rasul2021autoregressive} propose instead to model the
predictive joint one-step-ahead distribution using a denoising diffusion process
\citep{ho2020denoising,sohl-dickstein2015deep}; however, with the diffusion
dynamics conditioned on an RNN, the model cannot easily deal with missing values
or irregularly sampled time series. \citet{rasul2021multivariate} propose to
model the predictive joint one-step-ahead distribution by a multivariate
normalizing flow \citep{Papamakarios2021normalizing}, parametrized by either a
RNN or a transformer. For general density estimation, \citet{uria2014deep}
propose an order-agnostic autoregressive density estimator based on neural
networks, with \citet{hoogeboom2022autoregressive} extending the approach to
diffusion models; both are related to the autoregressive copula decomposition
that we propose herein.

\textbf{Transformer-based approaches} have come to the fore more recently, on
the strength of their successes in other sequence modeling tasks.
\citet{lim2021temporal} introduce the temporal fusion transformer, combining
recurrent layers for local processing and self-attention layers for
characterizing long-term dependencies, evaluating performance on quantile loss
measures; notably, the architecture makes use of a gating mechanism to suppress
unnecessary covariates. \citet{li2019enhancing} introduce a transformer with
subquadratic memory complexity along with convolutional self-attention to better
handle local context. \citet{Spadon2020pay} propose a recurrent graph evolution
neural networks, which embeds transformers, to carry out point multivariate
forecasting. \citet{wu2020adversarialSparse} propose an adversarial sparse
transformer to estimate conditional quantiles of the predictive distribution.
\citet{tashiro2021csdi} use a conditional score-based diffusion model explicitly
trained for interpolation and report substantial improvements over existing
probabilistic imputation models, as well as competitive performance on some
forecasting tasks against recently-proposed deep learning models.
\citet{tang2021probabilistic} introduce a variational non-Markovian state space
model where the latent dynamics are given by an attention mechanism over all
previous latents and observed tokens. They report good multivariate
probabilistic forecasting results on five standard datasets, as well as on the
task of human motion prediction. \citet{wu2021autoformer} introduce an
auto-correlation mechanism in place of self attention and report good accuracy
on long-horizon point forecasting benchmarks. Recently, outside of time series
forecasting, \citet{muller2022transformers} show how to train a
transformer-based model to approximate the Bayesian posterior predictive
distribution in regression tasks; these results are in the spirit of the
forecasting and interpolation results that we present in this paper.

\paragraphtight{Copulas-Based Forecasting}
There exists an abundant literature on uses of copulas~\citep{Grosser2021copulae}
for economic and financial forecasting, although most published models have
focused on fixed functional forms~\citep{Patton2012ARO,remillard2012copulasemi}.
\citet{Aas2009PairCopula}~introduce vine copulas that provide increased pairwise
flexibility, and \citet{Krupskii2020flexible,mayer2021estimation} suggest more
flexible forms for forecasting applications. In ML,
\citet{LopezPaz2012SemiSupervisedDA} propose to carry out domain adaptation
using a form of non-parametric copulas based on kernel estimators. For
forecasting, \citet{salinas2019high} introduce GPVar, an LSTM-based method that
dynamically parametrizes a Gaussian copula.
To the best of our knowledge, none of the published methods show how to sample 
from a \emph{non-parametric copula} resulting from an autoregressive decomposition 
of a time-varying conditional copula distribution, as we introduce in this work.

\paragraphtight{Irregular Sampling}
Gaussian processes have been widely used to handle irregularly-sampled
series~\citep{williams2006gaussian,chapados2007functional}, but these approaches
are limited in other ways, notably in computational tractability. More recently,
\citet{shukla2021multitime} have proposed a transformer-like attention mechanism
to re-represent an irregularly sampled time series at a fixed set of reference
points, and evaluate on interpolation and classification tasks.

\section{The \tactis{} Model}\label{sec:tactis}

Our contribution is a flexible model for multivariate probabilistic time
series prediction composed of a transformer encoder~\citep{vaswani2017attention}
and an attention-based decoder trained to mimic a non-parametric copula; hence
the name \textit{Transformer-Attentional Copulas for
Time Series} (\tactis).
\cref{fig:model_overview} shows an overview of the model architecture.%
\footnote{Code available at \href{{https://github.com/servicenow/tactis}}{https://github.com/servicenow/tactis}.}

Akin to classical Transformers~\citep{vaswani2017attention}, \tactis{} views the elements of a multivariate time series ($x_{ij}$) as an arbitrary set of tokens, where some tokens are observed and some are missing (based on $m_{ij}$).
The encoder is tasked with learning a meaningful representation of each token, such as to enable the decoder to infer a multivariate joint distribution over the values of the missing tokens.
Due to the use of attention in both the encoder and the decoder, \tactis{} can adapt to an arbitrary number of tokens, without retraining.

\tactis{} learns the conditional predictive distribution of arbitrary missing values in multivariate time series. At inference time, it is the pattern of missing values themselves---namely, where they are located with respect to measured values---that determines whether it performs forecasting, interpolation, or another similar task. 
Consequently, \tactis{} naturally supports changes in its inputs, such as the unavailability of some time series, and in its outputs, such as changes in forecast horizon.
Moreover, it inherently supports misalignment and differences in sampling frequencies, since each token is encoded separately.
This is in sharp contrast with classical vector autoregressive models~\citep{wei2018multivariate} and most recurrent neural networks, which jointly consider the values of each time series at a given time step ($j$) as a vector $(x_{1j}, \ldots, x_{nj}) \in \reals^n$.
We now detail the encoder, decoder, training procedure, and provide
some theoretical guarantees.

\begin{figure}
    \centering
    \includegraphics[width=\linewidth]{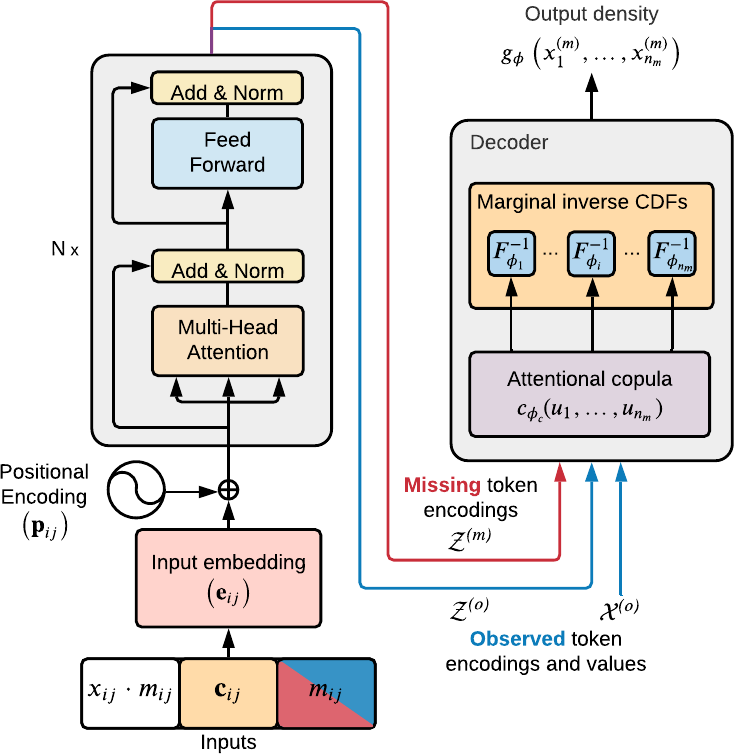}
    \vspace*{-5mm}
    \caption{Model overview. \textbf{(Left)} The \tactis{} encoder is very similar to that of standard transformers. The key difference is that both observed and missing tokens are encoded simultaneously. \textbf{(Right)} The decoder, based on an attentional copula, learns the output density given representations of the observed and missing tokens.}
    \label{fig:model_overview}
    \vspace*{-1ex}
\end{figure}

\subsection{Encoder}
\label{sec:tactis-encoder}

As shown in \cref{fig:model_overview}, the encoder used in \tactis{} is identical to that of standard Transformers~\citep{vaswani2017attention}, except that all tokens (observed and missing) are jointly encoded.

\paragraphtight{Input embedding} The encoder starts by producing a vector embedding ($\eb_{ij} \in \reals^{d_\text{emb}}$) for each element in each time series (i.e., tokens), which accounts for its value $x_{ij}$, the associated covariates\footnote{Optionally, we include a learned, per series, embedding in $\cb_{ij}$.} $\cb_{ij}$, and whether it is observed or missing $m_{ij}$.
Such embeddings are given by a neural network with parameters
$\theta_\text{emb}$, with $x_{ij} \cdot m_{ij}$ masking the values of missing tokens:
\[
    \eb_{ij} = \text{Embed}_{\theta_\text{emb}}\!\!\left(x_{ij} \cdot m_{ij}, \cb_{ij}, m_{ij}\right).
\]

\paragraphtight{Positional encoding} We add information about a token's time stamp $t_{ij}$ to the input embedding via a positional encoding $\pb_{ij} \in \reals^{d_\text{emb}}$.
For simplicity, we use the positional encodings of \citet{vaswani2017attention}, based on sine and cosine functions of various frequencies, and obtain the final embeddings as $\eb'_{ij} = \eb_{ij} \sqrt{d_\text{emb}} + \pb_{ij}$.
Note that other choices, such as positional encodings tailored to time series (e.g., accounting for holidays, day of week, etc.), would be viable, but we keep such explorations for future work.

Then, following \citet{vaswani2017attention}, the $\eb'_{ij}$ embeddings are passed through a stack of residual layers that combine multi-head self-attention and layer normalization to obtain an encoding ($\zb_{ij}$) for each token.
Such encodings contain a complex mixture of information from other tokens that were deemed relevant by the attention mechanism, such as the value of their covariates, their time stamp, and the values of observed tokens.

\paragraphtight{Scalability} One well-known limitation of transformers is the large time and memory complexity of self-attention, scaling quadratically with the number of tokens.
However, improving the efficiency of transformers is an active field of research \citep{tay2020efficient,lin2021survey} and any progress in this direction is poised to be directly applicable to \tactis{}.
In this work, when applying \tactis{} to large datasets, such as those in \cref{sec:results}, we take advantage of the fact that each token is indexed by two independent indices: $i$ for the variables, and $j$ for the time steps.
We do so by employing the temporal transformer layers of \citet{tashiro2021csdi}, which first compute self-attention between the tokens of each variable ($x_{i1}, \ldots, x_{il_i}$) and then between the tokens at a given time step ($x_{1j}, \ldots, x_{nj}$). Hence, instead of scaling in $O([n \cdot l_\text{max}]^2)$, the time and memory complexity scale in $O(n^2 \cdot l_\text{max} + n \cdot l_\text{max}^2)$, where $n$ is the number of time series and $l_\text{max} \eqdef \max_i l_i$ is the length of the longest time series.
One downside of this approach is that, for attention within a given time step ($j$) to make sense, the time series must be aligned.
When using temporal transformer layers, we will refer to our model as \tactistt{}.

\subsection{Decoder}
\label{sec:tactis-pseudocopula}

As stated in \cref{eq:problem-statement}, we aim to learn the joint distribution of the values ($x_{ij}$) of missing tokens ($m_{ij} = 0$), given the values of observed tokens ($m_{ij} = 1$), the covariates ($\cb_{ij}$), and the time stamps ($t_{ij}$). We achieve this via an attention-based decoder trained to mimic a non-parametric copula (see \cref{sec:bg-copula}), which we now describe.

Since the data consists of an arbitrary set of observed and missing tokens, we introduce the following notation: 
$\Zcal^{(o)} \eqdef \big\{\zb^{(o)}_1, \ldots, \zb^{(o)}_{n_o}\big\}$, $\Zcal^{(m)} \eqdef \big\{\zb^{(m)}_1, \ldots, \zb^{(m)}_{n_m}\big\}$, $\Xcal^{(o)} \eqdef \big\{x^{(o)}_1, \ldots, x^{(o)}_{n_o}\big\}$, 
and $\Xcal^{(m)} \eqdef \big\{x^{(m)}_1, \ldots, x^{(m)}_{n_m}\big\}$, which respectively denote
the encoded representations and values of observed and missing tokens.
We also use $\Ccal \eqdef \left\{\Cb_{1}, \ldots, \Cb_{n} \right\}$ and $\Tcal \eqdef \left\{\tb_{1}, \ldots, \tb_n \right\}$ to denote the set of all covariates and timestamps, respectively (see \cref{sec:problem-setting}).

Our goal is to accurately estimate the joint density of missing token values, using a model $g_{\phi}(x^{(m)}_1, \ldots, x^{(m)}_{n_m})$ such that:
\begin{equation*}
g_{\phi}(x^{(m)}_1, \ldots, x^{(m)}_{n_m}) \approx p(x^{(m)}_1, \ldots, x^{(m)}_{n_m} \mid \Xcal^{(o)}, \Ccal, \Tcal),
\end{equation*}
where the distributional parameters\footnote{We use subscripted versions of $\phi$ and $\theta$ to denote distributional parameters and neural network parameters, respectively.} $\phi$ are produced by a neural network parametrized by $\theta_\text{dec}$:
\begin{equation*}
\phi = \text{Decoder}_{\theta_\text{dec}}\!\big(\Xcal^{(o)}, \Zcal^{(o)}, \Zcal^{(m)}\big).
\end{equation*}
Crucially, note that using encoded token representations ($\Zcal^{(o)}, \Zcal^{(m)}$; see~\cref{sec:tactis-encoder}) as inputs into the decoder ensures that the resulting density is conditioned on observations. We consider the following copula-based structure for $g_\phi$:
\begin{equation}
\begin{multlined}
\label{eq:model_expansion}
    g_{\phi}\big(x^{(m)}_1, \ldots, x^{(m)}_{n_m}\big) \;\eqdef \hspace*{3.5cm} \\
    c_{\phi_\text{c}}\Big( 
        F_{\phi_1}\!\big(x^{(m)}_1\big), \ldots, F_{\phi_{n_m}}\!\big(x^{(m)}_{n_m}\big)
    \!\Big)  \;\times \\
    \qquad\qquad
    f_{\phi_1}\!\big(x^{(m)}_1\big) \times \cdots \times f_{\phi_{n_m}}\!\big(x^{(m)}_{n_m}\big),
\end{multlined}
\end{equation}
where $\phi \eqdef \{\phi_\text{c}, \phi_1, \ldots, \phi_{n_m}\}$, $c_{\phi_\text{c}}$ is the density of a copula, and $F_{\phi_k}$ and $f_{\phi_k}$ respectively represent the marginal (univariate) cumulative distribution and density functions of $X^{(m)}_k$.

The structure of $g_\phi$ allows for a wealth of different copula parametrizations (e.g., the Gaussian copula of \citet{salinas2019high}).
The crux of \tactis{} resides in how each of the components is parametrized.
Notably, we propose to use normalizing flows~\citep{tabak2013family} to model the marginals, and we develop a flexible non-parametric copula that can automatically adapt to a changing number of missing tokens ($n_m$).
We now outline these constructs, for which an overview appears in \cref{fig:attentional-copula}.

\begin{figure}
    \includegraphics[width=\linewidth]{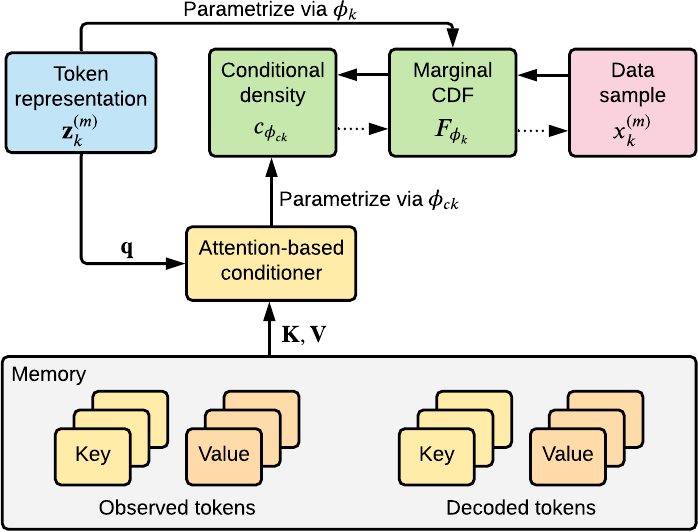}
    \caption{Overview of the \tactis{} decoder architecture. Dotted arrows indicate the flow of information during sampling.}
    \label{fig:attentional-copula}
\end{figure}

\paragraphtight{Marginal distributions}
To model the marginal CDFs ($F_{\phi_k}$), we seek a univariate function that (\emph{i})~maps values to $[0, 1]$, (\emph{ii})~is monotonically increasing, (\emph{iii})~is both continuous and differentiable.
We achieve this using a modified version of the Deep Sigmoidal Flows (DSF) of \citet{huang18-naf}, where we simply remove the logit function of the last flow layer to obtain values in $[0, 1]$.
The parameters of each flow are produced by a neural network with parameters $\theta_\text{F}$, which are shared across all $k$,
\begin{equation}\label{eq:marginal-decoder}
    \phi_{k} = \text{MarginalParams}_{\theta_\text{F}}\!\big(\zb^{(m)}_k\big).
\end{equation}
The marginal densities $f_{\phi_k}$ are obtained by differentiating $F_{\phi_k}$ w.r.t. $x^{(m)}_k$, an efficient operation for DSF.
In addition to satisfying our desiderata, DSFs have been shown to be universal density approximators, thereby not constraining the modeling ability of \tactis{}.

\paragraphtight{Copula density}
According to the definition of a copula (\cref{sec:bg-copula}), we must parametrize a distribution on the unit cube $[0, 1]^{n_m}$ with uniform marginals. We consider an autoregressive factorization of the copula density according to an arbitrary permutation $\pi = [\pi_1, \ldots, \pi_{n_m}]$ of the indices $\{1, \ldots, n_m \}$.
We denote the copula density and its parameters by $c_{\phi_{c}^{\pi}}$:
\begin{equation}
\begin{multlined}
\label{eq:copula-density}
    c_{\phi_{c}^{\pi}}\!\left(u_1, \ldots, u_{n_m} \right) = \hspace*{4cm}\\
    c_{\phi_{c1}^{\pi}}\!\!\left(u_{\pi_1}\right) \times c_{\phi_{c2}^{\pi}}\!\!\left(u_{\pi_2} \mid u_{\pi_1}\right) \times \cdots \times\\ 
    \qquad c_{\phi^{\pi}_{c{n_m}}}\!\!\left(u_{\pi_{n_m}} \mid u_{\pi_1}, \ldots, u_{\pi_{n_m - 1}} \right),
\end{multlined}
\end{equation}
where $c_{\phi_{ck}^{\pi}}$ is $k^\text{th}$ conditional density in the factorization and $u_{\pi_k} = F_{\phi_{\pi_k}}\!\big(x^{(m)}_{\pi_k}\big)$.
Importantly, we let $c_{\phi^{\pi}_{c1}}$ be the \emph{density of a uniform distribution} $U_{[0, 1]}$, and we use an \emph{attention-based conditioner} to obtain the parameters of the remaining conditional distributions $\phi^{\pi}_{ck}$, for $k > 1$.
We call the resulting construct an \emph{attentional copula}.

\paragraphtight{Attention-based conditioner}
This component of the decoder produces the parameters $\phi^{\pi}_{ck}$ for the conditional density $c_{\phi^{\pi}_{c{k}}}\!\!\left(u_{\pi_{k}} \mid u_{\pi_1}, \ldots, u_{\pi_{k-1}} \right)$ by performing attention over a memory composed of the representations of observed tokens and missing tokens that are predecessors in the permutation: $\big\{ \zb^{(o)}_1, \ldots, \zb^{(o)}_{n_o}, \zb^{(m)}_{\pi_1}, \ldots, \zb^{(m)}_{\pi_{k - 1}} \big\}$, as well as their CDF-transformed\footnote{For observed tokens, transformation via the normalizing flow does not necessarily correspond to a CDF transform, but it ensures that all values are on a similar scale.} values $\big\{ u^{(o)}_1, \ldots, u^{(o)}_{n_o}, u^{(m)}_{\pi_1}, \ldots, u^{(m)}_{\pi_{k - 1}} \big\}$. 
The conditioner is composed of several layers, which are as follows.
First, we calculate keys $\kb \in \reals^{d_\text{att}}$ and values $\vb \in \reals^{d_\text{att}}$
for each element $(\zb, u)$ in the memory using two modules
parameterized by $\theta_\text{k}$ and $\theta_\text{v}$, respectively:
\footnote{For simplicity, we use a single attention head in the presentation, but in practice, we use multiple (see \citet{vaswani2017attention}).}
\begin{align}\label{eq:copula-keysvals}
    \kb = \text{Key}_{\theta_\text{k}}\!(\zb, u) & & \vb = \text{Value}_{\theta_\text{v}}\!(\zb, u).
\end{align}
We then calculate a query $\qb \in \reals^{d_\text{att}}$ for our token of interest $\zb^{(m)}_{\pi_k}$ using a module with parameters $\theta_\text{q}$:
$$
    \qb = \text{Query}_{\theta_\text{q}}\!\big(\zb^{(m)}_{\pi_k}\big).
$$
Let $\Kb$ and $\Vb$ be the matrices of all keys and values for tokens in the memory, respectively.
Following \citet{vaswani2017attention}, we obtain an attention-based representation $\zb'' \in \reals^{d_\text{att}}$:
\begin{align*}
\zb' = \ & \text{LayerNorm}\big(\Vb^\intercal\,\text{Softmax}(\Kb \qb) + \zb^{(m)}_{\pi_k}\big), \\
\zb'' = \ & \text{LayerNorm}\big(\text{FeedForward}_{\theta_\text{FF}}\!(\zb') + \zb'\big),
\end{align*}
where $\text{FeedForward}_{\theta_\text{FF}}$ is a module with parameters $\theta_\text{FF}$.
We repeat this process from \cref{eq:copula-keysvals} for each layer, with different parameters, and replacing $\zb^{(m)}_{\pi_k}$ by the output $\zb''$ of the previous layer.
Finally, we obtain the parameters of the conditional distribution using a module parameterized by $\theta_\text{dist}$ applied to the $\zb''$ of the last layer:
$$
\phi^{\pi}_{ck} = \text{DistParams}_{\theta_\text{dist}}\!(\zb'').
$$

\paragraphtight{Choice of distribution}
Any distribution with support on $[0, 1]$ can be used to model the conditional distributions $c^{\pi}_{\phi_{ck}}$.
We choose to use a piecewise constant distribution, i.e., we divide the support into a number of bins, each parametrized by a probability density that applies to all the points it contains. Such a distribution can approximate complex multimodal distributions on $[0, 1]$ without making parametric assumptions, similarly to \citet{vandenOord2016wavenet}. The number of bins is a hyperparameter and controls approximation quality.
Other valid choices include the Beta distribution and mixtures thereof.

\paragraphtight{Sampling}
We first draw a sample from the copula, autoregressively, following an arbitrary permutation $\pi$.
As per the definition of attentional copulas, the first element of the permutation is always sampled from a $U_{[0, 1]}$.
Then, we transform each of the sampled values using their corresponding inverse marginal CDF, i.e., $F^{-1}_{\phi_k}(u^{(m)}_k)$.
This is possible since DSFs are invertible functions (see \cref{app:inverting_flow} for details).

This concludes the presentation of the decoder. However, one question remains: what guarantees that $c_{\phi^\pi_c}$ will converge to a valid copula?
The key is in the training procedure.

\subsection{Training Procedure}
\label{sec:tactis-training}

Let $g_{\phi^{\pi}}$ be the density estimator described in \cref{eq:model_expansion}, where the copula is factorized according to permutation $\pi$.
Let $\Theta \eqdef \left\{\theta_\text{enc}, \theta_\text{dec}\right\}$ be the set containing the parameters of all the components of the encoder and the decoder, respectively.
We obtain $\Theta$ by minimizing the expected negative log-likelihood of our model over permutations drawn uniformly at random%
\footnote{Note that we do not differentiate through the sampling of permutations. We simply consider a random permutation at each forward propagation.}
from the set of all permutations $\Pi$ and samples drawn from the set of all time series $\Scal$ :
\begin{equation}\label{eq:loss}
    \argmin{\Theta} \expect_{\substack{\pi \sim \Pi\mathstrut \\ \Xb \sim \Scal}} -\log g_{\phi^{\pi}}\!\!\left(x^{(m)}_1, \ldots, x^{(m)}_{n_m}\right).
\end{equation}

\begin{theorem}\label{thm:copula}
    The attentional copula $c_{\phi^{\pi}_c}$ embedded in a density estimator $g_{\phi^{\pi}}$, as shown in \cref{eq:model_expansion}, with distributional parameters $\phi^{\pi}$ given by a decoder with parameters $\theta_\text{dec} \in \Theta$, where $\Theta$ minimizes \cref{eq:loss}, is a valid copula.
\end{theorem}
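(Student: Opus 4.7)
The plan is to verify the two defining properties of a copula density for $c_{\phi^\pi_c}$: \textbf{(i)} it integrates to one over $[0,1]^{n_m}$, and \textbf{(ii)} every one-dimensional marginal is uniform on $[0,1]$. Property (i) is essentially automatic from the autoregressive factorization in \cref{eq:copula-density}: since $c_{\phi^\pi_{c1}}$ is the uniform density on $[0,1]$ and each conditional $c_{\phi^\pi_{ck}}(\cdot\mid u_{\pi_1},\ldots,u_{\pi_{k-1}})$ integrates to one in its free argument, iterating the integration in the order $\pi_{n_m}, \pi_{n_m-1}, \ldots, \pi_1$ telescopes the product to $1$. The substantive content lies in establishing (ii).

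The first step is to argue that, at a minimizer $\Theta$ of \cref{eq:loss}, the model density matches the true data density for every permutation separately. Writing
\begin{equation*}
\expect_{\pi,\Xb}\!\left[-\log g_{\phi^\pi}(\Xb)\right] = H(p) + \expect_\pi\!\left[D_{\mathrm{KL}}\!\bigl(p \,\|\, g_{\phi^\pi}\bigr)\right],
\end{equation*}
non-negativity of each KL term, combined with the expressiveness of the parametric family (DSF universality for the marginals, density of the piecewise-constant conditional family as the number of bins grows, and the ability of the attention-based conditioner to realise arbitrary permutation-indexed mappings from the conditioning set), yields $g_{\phi^\pi}(x) = p(x)$ almost everywhere, for every $\pi \in \Pi$.

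The crucial observation is then that the marginal components $F_{\phi_k}$ and $f_{\phi_k}$ in \cref{eq:model_expansion} are produced per-token via \cref{eq:marginal-decoder} and therefore do not depend on the permutation. Dividing $g_{\phi^\pi}(x) = p(x)$ through by $\prod_k f_{\phi_k}(x_k)$ gives
\begin{equation*}
c_{\phi^\pi_c}\!\bigl(F_{\phi_1}(x_1),\ldots,F_{\phi_{n_m}}(x_{n_m})\bigr) = \frac{p(x)}{\prod_k f_{\phi_k}(x_k)},
\end{equation*}
whose right-hand side is $\pi$-free; hence $c_{\phi^\pi_c}$ equals a common density $c^\star$ for every $\pi$. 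To finish, for any target coordinate $k$ I pick a permutation with $\pi_1 = k$ and marginalise out the remaining coordinates via the same telescoping argument used for property (i), which leaves exactly $c_{\phi^\pi_{c1}}(u_k) = 1$. Because $c^\star$ is $\pi$-invariant, this uniformity holds for every $k$, so $c^\star$ is a valid copula density.

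The hardest part is the implicit universal-approximation claim used in the second paragraph: one must show that the minimum of \cref{eq:loss} is attained with $g_{\phi^\pi} \equiv p$ \emph{simultaneously} for every $\pi$, despite $\theta_\text{dec}$ being shared across permutations. The cleanest route is to invoke the already-established universality of DSFs for the marginals, the density of the piecewise-constant family of conditional densities on $[0,1]$ as the bin count grows, and standard universality of attention networks, so that the decoder can realise any $\pi$-indexed collection of conditionals. If these arguments prove cumbersome in full rigour, an alternative is to state the result conditionally on realisability of $p$ within the parametric family and relegate the approximation-theoretic details to a remark.
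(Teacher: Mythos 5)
Your proposal is correct in outline and reaches the same two checkpoints as the paper (a properly normalized density on $[0,1]^{n_m}$, then uniform marginals via permutation invariance plus the fact that the first factor $c_{\phi^\pi_{c1}}$ is $U_{[0,1]}$ by construction), but it derives the key ingredient --- permutation invariance of the learned density --- by a genuinely different route. You decompose the objective as $H(p) + \expect_\pi D_{\mathrm{KL}}(p \,\|\, g_{\phi^\pi})$ and invoke realizability (DSF universality, bin-count density, attention universality) to conclude $g_{\phi^\pi} = p$ for \emph{every} $\pi$, from which invariance follows as a corollary. The paper instead rewrites $\expect_\pi[-\log g_{\phi^\pi}]$ as the negative log of a geometric mean over permutations, applies the AM--GM inequality to split off a nonnegative slack $\delta$ that vanishes iff all $g_{\phi^\pi}$ agree, and concludes that the minimizer is permutation invariant without ever claiming exact recovery of $p$. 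Your route buys a stronger conclusion (consistency of the density estimator itself) but at the price of the approximation-theoretic burden you correctly flag as the hardest part; the paper's route needs only that the $g_{\phi^\pi}$ coincide with one another, not with $p$, which is the minimal fact required for the marginals argument. Note that both arguments still share an implicit capacity assumption --- the paper's step from ``the loss equals an invariance-free term plus $\delta \geq 0$'' to ``the minimizer has $\delta = 0$'' tacitly assumes the permutation-invariant subfamily can attain the minimum of the first term --- so your explicit realizability caveat is, if anything, a more honest accounting of the same gap. Your closing telescoping argument for extracting the marginal of an arbitrary coordinate $k$ by choosing $\pi_1 = k$ is exactly the paper's final step, just spelled out in more detail.
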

\begin{proof}
    The proof, given in \cref{app:copula-proof}, relies on the fact that optimizing \cref{eq:loss} leads to permutation invariance which, by the definition of $c_{\phi^{\pi}_{c1}}$, results in $U_{[0, 1]}$ marginal distributions.
    The resulting copula $c_{\phi^{\pi}_{c}}$ is therefore a valid copula.
\end{proof}

Hence, \tactis{} provably learns to disentangle the joint dependency structure of random variables from their marginal distributions via flexible non-parametric attentional copulas.

\section{Experiments}\label{sec:results}

We start by presenting an experiment that supports the validity of attentional copulas (\cref{sec:valid-att-copula}). Then, we demonstrate the state-of-the-art performance of \tactis{} using a forecasting benchmark composed of several real-world datasets (\cref{sec:benchmark}). Finally, we present a series of experiments that emphasize the flexibility of the \tactis{} model (\cref{sec:flexibility}).

\subsection{Empirical Validation of Attentional Copulas}\label{sec:valid-att-copula}

\cref{thm:copula} guarantees that, at convergence to a minimum of \cref{eq:loss}, attentional copulas will be valid copulas. However, it does not tell us if this setting is reachable with finite amounts of data, model capacity, and training time.
Hence, we conduct a simple experiment, where we generate data from a distribution with a known copula structure and verify if the \tactis{} decoder correctly recovers the ground truth. For simplicity and ease of visualization, we use a bivariate distribution, where the underlying copula is an x-shaped mixture of two Clayton copulas.
The details are given in \cref{app:gtcopula}.

The results are shown in \cref{fig:gt-copula-learned}.
We observe that the learned copula density closely matches the ground truth (\cref{fig:gt-copula-learned}a).
Furthermore, its marginal distributions are indistinguishable from the $U_{[0, 1]}$ distribution (\cref{fig:gt-copula-learned}b), making it a valid copula.
This experiment, albeit simple, provides empirical evidence that learned attentional copulas can be valid, even in practical settings.
See \cref{app:gtcopula} for additional results and discussion.

\begin{figure}
    \centering
    \subcaptionbox{Copula density}
    {\hspace{5mm}\includegraphics[width=0.8\linewidth, trim=0 3mm 0 0, clip]{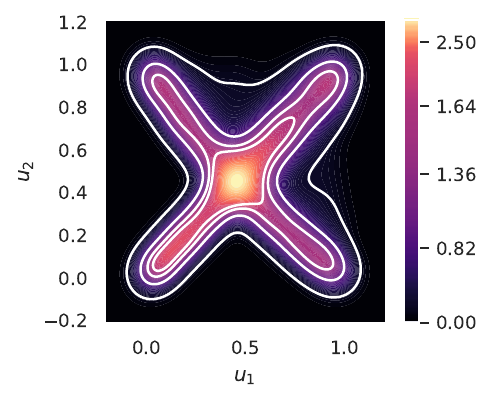}}
    \par\smallskip
    \subcaptionbox{Marginal PDFs}
    {\hspace{-6mm}\includegraphics[width=0.8\linewidth, trim=0 2mm 0 0, clip]{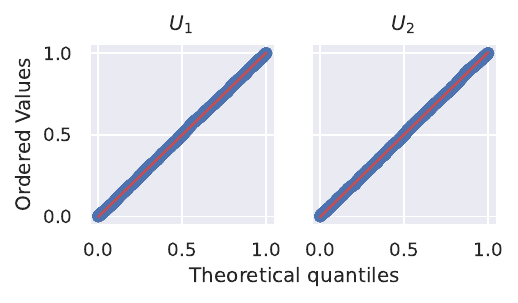}}
    \caption{
        \emph{Attentional copulas successfully learn valid copulas.}
        a) The learned copula's density (white contours) closely matches the ground truth (heatmap). Note: the support of both distributions is $[0, 1]$; any visible overflow in the figure is due to plotting artefacts. b) The marginal distributions of the learned copula are indistinguishable from $U_{[0, 1]}$, as shown by Q-Q plots against $U_{[0, 1]}$.
        }
    \label{fig:gt-copula-learned}
\end{figure}

\subsection{Forecasting: Comparison to the State of the Art}\label{sec:benchmark}

\begin{table*}[t]
    \centering
    \caption{CRPS-Sum means ($\pm$ standard errors which are autocorrelation-corrected to account for sequential backtesting using the Newey-West (\citeyear{NeweyWest1987,NeweyWest1994}) estimator) for the backtesting benchmark, and average rank of each method across datasets (lower is better for both measures). Best results are in bold.} %
    \setlength{\tabcolsep}{4.5pt}
    \label{table:results-CRPS-Sum}
    \footnotesize\renewcommand{\arraystretch}{1.15}%
    \smallskip

  \setlength{\tabcolsep}{0.45ex}
  \hspace*{-0.5ex}%
  \begin{tabular}{@{}rcccccc@{}}
   \toprule
    \multicolumn{1}{@{}c}{\textbf{Model}} & \multicolumn{1}{c}{\texttt{electricity}} & \multicolumn{1}{c}{\texttt{fred-md}} & \multicolumn{1}{c}{\texttt{kdd-cup}} & \multicolumn{1}{c}{\texttt{solar-10min}} & \multicolumn{1}{c}{\texttt{traffic}} & \multicolumn{1}{@{}c@{}}{\textbf{Avg. Rank}} \\
   \midrule
    Auto-\textsc{arima}   
                & $0.077 \pm 0.016$    & $0.043 \pm 0.005$    & $0.625 \pm 0.066$    & $0.994 \pm 0.216$    & $0.222 \pm 0.005$    & $4.7 \pm 0.3$        \\
    ETS          & $0.059 \pm 0.011$    & $\B{0.037 \pm 0.010}$& $0.408 \pm 0.030$    & $0.678 \pm 0.097$    & $0.353 \pm 0.011$    & $4.4 \pm 0.3$        \\
    TempFlow     & $0.075 \pm 0.024$    & $0.095 \pm 0.004$    & $0.250 \pm 0.010$    & $0.507 \pm 0.034$    & $0.242 \pm 0.020$    & $3.9 \pm 0.2$        \\
    TimeGrad     & $0.067 \pm 0.028$    & $0.094 \pm 0.030$    & $0.326 \pm 0.024$    & $0.540 \pm 0.044$    & $0.126 \pm 0.019$    & $3.6 \pm 0.3$        \\
    GPVar        & $0.035 \pm 0.011$    & $0.067 \pm 0.008$    & $0.290 \pm 0.005$    & $\B{0.254 \pm 0.028}$& $0.145 \pm 0.010$    & $2.7 \pm 0.2$        \\
    \tactistt{}  & $\B{0.021 \pm 0.005}$& $0.042 \pm 0.009$    & $\B{0.237 \pm 0.013}$& $0.311 \pm 0.061$    & $\B{0.071 \pm 0.008}$& $\B{1.6 \pm 0.2}$    \\
   \bottomrule
  \end{tabular}
\end{table*}

We now assess the performance of \tactis{} in comparison with state-of-the-art forecasting methods.
Of particular interest is whether the model's great flexibility is detrimental to the quality of its predictions.

\paragraphtight{Baselines}
We benchmark against multiple \emph{deep-learning-based methods} that generate multivariate probabilistic forecasts, namely: GPVar~\citep{salinas2019high}, an LSTM-based method that parametrizes a Gaussian copula; TempFlow~\citep{rasul2021multivariate}, a transformer-based method that models the predictive distribution using normalizing flows; and TimeGrad~\citep{rasul2021autoregressive}, an autoregressive method based on diffusion models.
We also compare to \emph{classical methods}: \textsc{arima}~\citep{box2015time} and ETS exponential smoothing~\citep{hyndman2008forecasting}.
The comparison is based on \tactistt{}, a variant of \tactis{} that uses temporal transformer layers in the encoder (see \cref{sec:tactis-encoder}).
In addition to these comparisons, we provide a detailed ablation study in \cref{app:ablation}.

\paragraphtight{Datasets}
We consider five real-world datasets from the \emph{Monash Time Series Forecasting Repository} \citep{godahewa2021monash}: \texttt{electricity}, \texttt{fred-md}, \texttt{kdd-cup}, \texttt{solar-10min}, and \texttt{traffic} (see \cref{app:choice-datasets} for details).
These were selected for being high-dimensional (107--862 variables), exempt of missing values,
and sampled at diverse frequencies (10 min., hourly, monthly).

\paragraphtight{Evaluation procedure}
Model accuracy is assessed via a backtesting procedure, which mimics the use of forecasting models in real-world settings. A detailed presentation can be found in \cref{app:backtesting}.
In short, we define a series of retraining timestamps for each dataset.
At each timestamp, the models are trained with all of the preceding data, and their accuracy is assessed using subsequent data.
We then report metrics aggregated over all timestamps.
The hyperparameters of each method are selected based on the protocol and grids described in \cref{app:hyper_search} and \cref{app:hp-ranges}, respectively.

\paragraphtight{Metrics}
We use the CRPS-Sum~\citep{salinas2019high}, a multivariate extension of the univariate Continuous Ranked Probability Score (CRPS) \citep{matheson1976scoring}, as our main evaluation metric (see \cref{app:metrics} for a detailed presentation).
In short, this metric corresponds to the CRPS of the univariate series obtained by summing forecasts along the variable axis.
For completeness, we also report results for two additional metrics in \cref{app:metrics}: the CRPS and the energy score~\citep{gneiting2007strictly}.
Finally, we assess how well each method does as a \emph{general forecasting algorithm}, rather than a dataset-specific one, by 
measuring the average rank of each method, w.r.t. all others, over all datasets and retraining timestamps.

\paragraphtight{Benchmark results}
The CRPS-Sum results are reported in \cref{table:results-CRPS-Sum}. From these, it is clear that \tactistt{} compares favourably to the state of the art. It achieves the lowest CRPS-Sum for $3$ out of $5$ datasets and outperforms most baselines on the remaining ones. In fact, \tactistt{} outperforms all deep-learning-based methods on \texttt{fred-md} and outperforms all but GPVar on \texttt{solar-10min}. Furthermore, it achieves the lowest average rank ($1.6$), suggesting that, if one had to choose a method to use without prior knowledge of the data, \tactistt{} would be the better option.
Hence, these results suggest that the great flexibility of \tactis{}, which we highlight in the next section, does not seem to undermine its performance.

\subsection{Model Flexibility}\label{sec:flexibility}

We now present a series of experiments that emphasize the flexibility of the \tactis{} model, namely its support for interpolation, unaligned and non-uniformly sampled data, and its ability to scale to hundreds of time series.

\paragraphtight{Prediction Beyond Forecasting}
\tactis{} relies on a Boolean-valued mask to determine which values of a multivariate time series must be predicted (see~\cref{sec:problem-setting}).
This enables it to support arbitrary prediction tasks, such as forecasting, interpolation, and even combinations thereof.
Here, we demonstrate support for interpolation by showing that \tactis{} can correctly estimate the distribution of a gap in observed values within a stochastic volatility process~\citep{kim1998stochastic}.
Specifically, we train \tactis{} to estimate the distribution of missing values centered within a univariate time series sampled from such a process.
We then compare the estimated joint conditional distribution to the ground truth posterior distribution of missing values. A typical result, where \tactis{} closely approximates the ground truth, is shown in~\cref{fig:exp-interpolation}.
Additional results, as well as a full description of the data generation and experimental protocols are available in~\cref{app:interpolation}.

\begin{figure}
    \centering
    \includegraphics[width=0.95\linewidth]{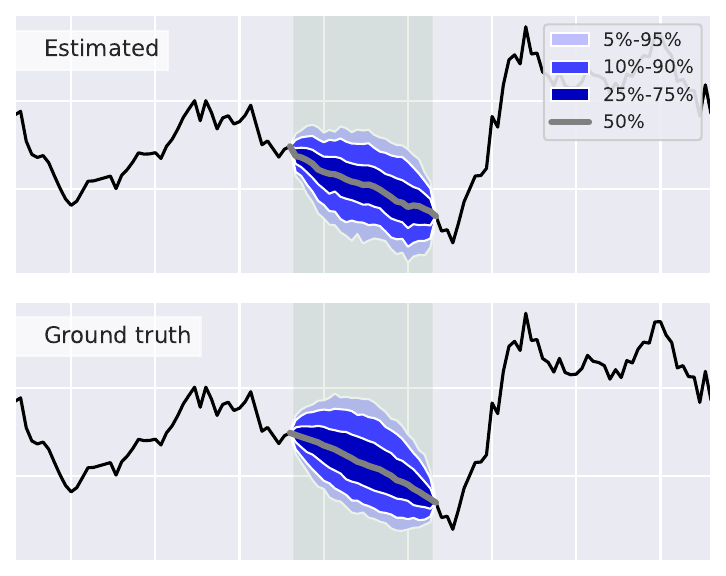}
    \caption{\tactis{} \emph{successfully interpolates} missing values (green shaded region) within a stochastic volatility process. The estimated posterior distribution of missing values (top) closely matches the ground truth (bottom).}%
    \label{fig:exp-interpolation}%
    \vspace*{-1em}
\end{figure}

\paragraphtight{Unaligned and non-uniformly sampled series}
One particularity of \tactis{} is that it considers each observed data point, in each time series, as a distinct token over which to perform self-attention.
The model operates on the set of input tokens, irrespective of their alignment and sampling frequencies, enabling  native support for unaligned and non-uniformly sampled time series.%
\footnote{The \tactistt{} variant does not support this setting (see \cref{sec:tactis-encoder}).}
Here, we conduct a simple experiment to demonstrate that \tactis{} operates well in this setting. We sample data from a bivariate noisy sine process with observations spaced randomly in each series (see \cref{app:unaligned} for details).
We then train \tactis{} to forecast the distribution of missing values at the end of each series.
As shown in \cref{fig:exp-unaligned}, \tactis{} produces accurate forecasts for this data, illustrating its support for unaligned and non-uniformly sampled time series.

\begin{figure}
    \centering
    \includegraphics[width=\linewidth, trim=7mm 0 0 0, clip]{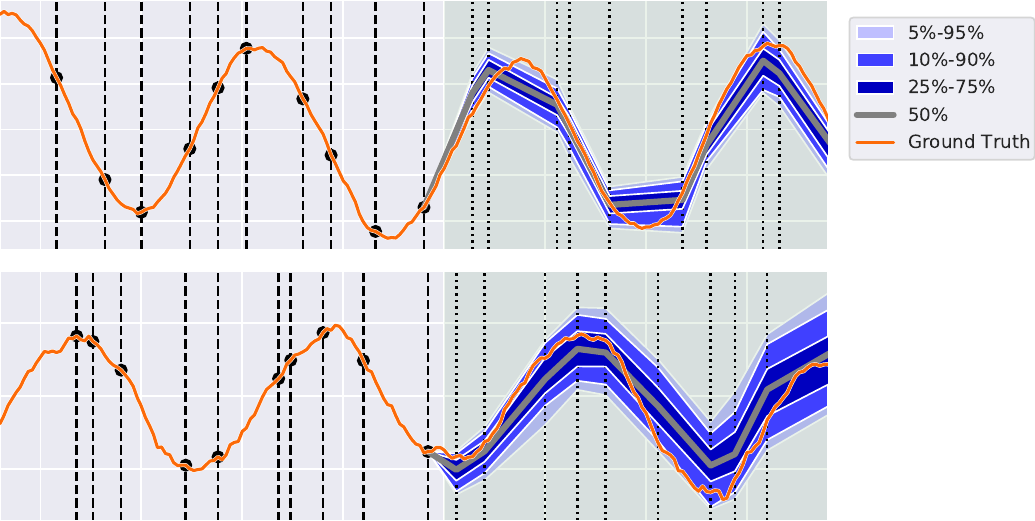}
    \caption{\tactis{} \emph{supports unaligned and non-uniformly sampled time series}, as shown in the above forecasts of a bivariate noisy sine process. Observation and prediction timestamps are marked by dashed vertical lines. The forecasted portion of the time series is shaded in green.}%
    \label{fig:exp-unaligned}%
    \vspace*{-12pt}
\end{figure}

\paragraphtight{Scaling to hundreds of time series}
One key properties of attention-based models, such as \tactis{}, is that they can seamlessly be applied to data of varying dimensionality, without retraining. We make use of this property to devise a scalable training procedure for \tactis{}, which we detail in~\cref{app:bagging}. In short, we train the model using batches composed of a random subset of $b \ll n$ time series, called a bag ($b = 20$ in the forecasting benchmark). This significantly limits the running time and memory usage of the model. Then, at inference time only, we apply the model to all series. In~\cref{table:bagging-maintext}, we explore the effect of $b \geq 2$ on the predictive performance of \tactistt{}. The results indicate that this parameter has very little impact on the accuracy of the model. However, as we show in~\cref{app:baggingexp}, small values of $b$, such as $b = 1$, can negatively affect the learned inter-series correlations. Hence, the model can be trained efficiently without incurring a significant penalty in terms of predictive performance as long as the bag size is not unreasonably small.

\begin{table}[h!]
    \centering
    \caption{\tactis{} \emph{can be trained efficiently via bagging.} This table shows the mean and standard errors of the CRPS, CRPS-Sum, and energy score metrics for various bagging sizes. The results were obtained using \tactistt{} on the \texttt{fred-md} dataset.}
    \footnotesize\renewcommand{\arraystretch}{1.15}%
    \smallskip
    \setlength{\tabcolsep}{1.45ex}
    \begin{tabular}{@{}crrr@{}}
        \toprule
        \multicolumn{1}{@{}c}{\textbf{Bagging size}} & \multicolumn{1}{c}{CRPS} & \multicolumn{1}{c}{CRPS-Sum} & \multicolumn{1}{c}{Energy} \\
                   & ---        & ---        & $\times 10^5$  \\
     \midrule
          2  & $0.046 \pm 0.009$ & $0.039 \pm 0.008$ & $7.88 \pm 1.70$ \\
          5  & $0.052 \pm 0.009$ & $0.044 \pm 0.008$ & $9.36 \pm 1.88$ \\
          10 & $0.047 \pm 0.009$ & $0.039 \pm 0.009$ & $7.90 \pm 1.71$ \\
          15 & $0.047 \pm 0.010$ & $0.040 \pm 0.009$ & $7.87 \pm 1.78$ \\
          20 & $0.049 \pm 0.009$ & $0.042 \pm 0.008$ & $8.30 \pm 1.68$ \\
          25 & $0.049 \pm 0.010$ & $0.042 \pm 0.009$ & $8.22 \pm 1.82$ \\
          30 & $0.049 \pm 0.009$ & $0.043 \pm 0.009$ & $8.34 \pm 1.74$ \\
    \bottomrule
    \end{tabular}\label{table:bagging-maintext}
\end{table}

\section{Discussion}

This work proposes \tactis{}, a method for probabilistic time series inference that combines the flexibility of attention-based models with the density estimation capabilities of a new type of non-parametric copula, termed \emph{attentional copula}.
In addition to achieving state-of-the-art performance on tasks such as probabilistic forecasting and interpolation, we showed that \tactis{} reaches an unprecedented level of flexibility:
it can infer missing values at arbitrary time points in multivariate time series (via masking) can learn from unaligned/non-uniformly sampled data, can be trained when subsets of the data are missing at random, can handle the presence of observed non-stochastic covariates, and can estimate complex distributions beyond the reach of classical copula models, such as the Gaussian copula.

That said, there are several interesting directions in which \tactis{} could be extended.
First, the model's ability to learn multivariate dependencies may benefit from using positional encodings specifically designed for temporal data, rather than those of \citet{vaswani2017attention}.
Second, the applications of recent advances in large-scale transformers (e.g., \citet{choromanski2021rethinking}) to \tactis{} could significantly reduce the amount of resources required by the model, especially in the sampling phase, where bagging is not applied.
The elaboration of more efficient sampling procedures (e.g., based on learning conditional independences) also constitutes a promising prospect.
Third, a thorough study of the training dynamics of \tactis{} may reveal architecture changes or auxiliary tasks that could significantly accelerate learning.
Fourth, \tactis{} could be extended to series measured in discrete domains by adapting the estimation of marginal distributions in the decoder.

Finally, we believe that this work could serve as a basis for models that address the cold-start problem, making sensible predictions in contexts where very few historical observations of the process are available.
In fact, \tactis{} could be trained on time series from a wealth of domains, reusing the same attentional copula, but fine-tuning its encoder to new, unforeseen domains.
Such extensions towards \emph{foundation models}~\citep{bommasani2021opportunities} for probabilistic time series constitute exciting prospects.

\section*{Acknowledgements}
The authors are grateful to G. Abuhamad, P. Beaudoin, D. Berger,
I. Laradji, C.-W. Huang, A. Lacoste, P.-A. Noël,
S. Paquet, and P. Rodriguez-Lopez for thoughtful suggestions.

\bibliography{tactis}

\begin{thebibliography}{78}
\providecommand{\natexlab}[1]{#1}
\providecommand{\url}[1]{\texttt{#1}}
\expandafter\ifx\csname urlstyle\endcsname\relax
  \providecommand{\doi}[1]{doi: #1}\else
  \providecommand{\doi}{doi: \begingroup \urlstyle{rm}\Url}\fi

\bibitem[Aas et~al.(2009)Aas, Czado, Frigessi, and Bakken]{Aas2009PairCopula}
Aas, K., Czado, C., Frigessi, A., and Bakken, H.
\newblock Pair-copula constructions of multiple dependence.
\newblock \emph{Insurance: Mathematics and Economics}, 44\penalty0
  (2):\penalty0 182--198, 2009.
\newblock URL
  \url{https://EconPapers.repec.org/RePEc:eee:insuma:v:44:y:2009:i:2:p:182-198}.

\bibitem[Alexandrov et~al.(2020)Alexandrov, Benidis, Bohlke-Schneider,
  Flunkert, Gasthaus, Januschowski, Maddix, Rangapuram, Salinas, Schulz,
  Stella, Türkmen, and Wang]{gluonts_jmlr}
Alexandrov, A., Benidis, K., Bohlke-Schneider, M., Flunkert, V., Gasthaus, J.,
  Januschowski, T., Maddix, D.~C., Rangapuram, S., Salinas, D., Schulz, J.,
  Stella, L., Türkmen, A.~C., and Wang, Y.
\newblock {GluonTS: Probabilistic and Neural Time Series Modeling in Python}.
\newblock \emph{Journal of Machine Learning Research}, 21\penalty0
  (116):\penalty0 1--6, 2020.
\newblock URL \url{http://jmlr.org/papers/v21/19-820.html}.

\bibitem[Bahdanau et~al.(2015)Bahdanau, Cho, and Bengio]{bahdanau2015attention}
Bahdanau, D., Cho, K., and Bengio, Y.
\newblock Neural machine translation by jointly learning to align and
  translate.
\newblock In Bengio, Y. and LeCun, Y. (eds.), \emph{3rd International
  Conference on Learning Representations, {ICLR} 2015, San Diego, CA, USA, May
  7-9, 2015, Conference Track Proceedings}, 2015.
\newblock URL \url{http://arxiv.org/abs/1409.0473}.

\bibitem[Benidis et~al.(2020)Benidis, Rangapuram, Flunkert, Wang, Maddix,
  Turkmen, Gasthaus, Bohlke-Schneider, Salinas, Stella, Callot, and
  Januschowski]{Benidis:2020tn}
Benidis, K., Rangapuram, S.~S., Flunkert, V., Wang, B., Maddix, D., Turkmen,
  C., Gasthaus, J., Bohlke-Schneider, M., Salinas, D., Stella, L., Callot, L.,
  and Januschowski, T.
\newblock Neural forecasting: Introduction and literature overview.
\newblock \emph{arXiv.org}, 2020.
\newblock URL \url{http://arxiv.org/abs/2004.10240}.

\bibitem[Bohlke-Schneider \& Salinas(2021)Bohlke-Schneider and
  Salinas]{salinas_private}
Bohlke-Schneider, M. and Salinas, D.
\newblock personal communication, 2021.

\bibitem[Bommasani et~al.(2021)Bommasani, Hudson, Adeli, Altman, Arora, von
  Arx, Bernstein, Bohg, Bosselut, Brunskill,
  et~al.]{bommasani2021opportunities}
Bommasani, R., Hudson, D.~A., Adeli, E., Altman, R., Arora, S., von Arx, S.,
  Bernstein, M.~S., Bohg, J., Bosselut, A., Brunskill, E., et~al.
\newblock On the opportunities and risks of foundation models.
\newblock \emph{arXiv preprint arXiv:2108.07258}, 2021.

\bibitem[Box et~al.(2015)Box, Jenkins, Reinsel, and Ljung]{box2015time}
Box, G. E.~P., Jenkins, G.~M., Reinsel, G.~C., and Ljung, G.~M.
\newblock \emph{Time series analysis: forecasting and control}.
\newblock John Wiley \& Sons, fifth edition, 2015.

\bibitem[Chapados \& Bengio(2007)Chapados and Bengio]{chapados2007functional}
Chapados, N. and Bengio, Y.
\newblock Augmented functional time series representation and forecasting with
  gaussian processes.
\newblock In Platt, J., Koller, D., Singer, Y., and Roweis, S. (eds.),
  \emph{Advances in Neural Information Processing Systems}, volume~20. Curran
  Associates, Inc., 2007.
\newblock URL
  \url{https://proceedings.neurips.cc/paper/2007/file/81e74d678581a3bb7a720b019f4f1a93-Paper.pdf}.

\bibitem[Chen et~al.(2020)Chen, Kang, Chen, and Wang]{Chen:2020bj}
Chen, Y., Kang, Y., Chen, Y., and Wang, Z.
\newblock Probabilistic forecasting with temporal convolutional neural network.
\newblock \emph{Neurocomputing}, 399:\penalty0 491–501, 2020.

\bibitem[Choromanski et~al.(2021)Choromanski, Likhosherstov, Dohan, Song, Gane,
  Sarlos, Hawkins, Davis, Mohiuddin, Kaiser, Belanger, Colwell, and
  Weller]{choromanski2021rethinking}
Choromanski, K.~M., Likhosherstov, V., Dohan, D., Song, X., Gane, A., Sarlos,
  T., Hawkins, P., Davis, J.~Q., Mohiuddin, A., Kaiser, L., Belanger, D.~B.,
  Colwell, L.~J., and Weller, A.
\newblock Rethinking attention with performers.
\newblock In \emph{International Conference on Learning Representations}, 2021.
\newblock URL \url{https://openreview.net/forum?id=Ua6zuk0WRH}.

\bibitem[de~B\'{e}zenac et~al.(2020)de~B\'{e}zenac, Rangapuram, Benidis,
  Bohlke-Schneider, Kurle, Stella, Hasson, Gallinari, and
  Januschowski]{deBezenac2020NKF}
de~B\'{e}zenac, E., Rangapuram, S.~S., Benidis, K., Bohlke-Schneider, M.,
  Kurle, R., Stella, L., Hasson, H., Gallinari, P., and Januschowski, T.
\newblock Normalizing kalman filters for multivariate time series analysis.
\newblock In Larochelle, H., Ranzato, M., Hadsell, R., Balcan, M.~F., and Lin,
  H. (eds.), \emph{Advances in Neural Information Processing Systems},
  volume~33, pp.\  2995--3007. Curran Associates, Inc., 2020.
\newblock URL
  \url{https://proceedings.neurips.cc/paper/2020/file/1f47cef5e38c952f94c5d61726027439-Paper.pdf}.

\bibitem[Gneiting \& Raftery(2007)Gneiting and Raftery]{gneiting2007strictly}
Gneiting, T. and Raftery, A.~E.
\newblock Strictly proper scoring rules, prediction, and estimation.
\newblock \emph{Journal of the American statistical Association}, 102\penalty0
  (477):\penalty0 359--378, 2007.

\bibitem[Godahewa et~al.(2021)Godahewa, Bergmeir, Webb, Hyndman, and
  Montero-Manso]{godahewa2021monash}
Godahewa, R., Bergmeir, C., Webb, G.~I., Hyndman, R.~J., and Montero-Manso, P.
\newblock Monash time series forecasting archive.
\newblock In \emph{Neural Information Processing Systems Track on Datasets and
  Benchmarks}, 2021.
\newblock forthcoming.

\bibitem[Goodfellow et~al.(2016)Goodfellow, Bengio, and
  Courville]{Goodfellow-et-al-2016}
Goodfellow, I., Bengio, Y., and Courville, A.
\newblock \emph{Deep Learning}.
\newblock MIT Press, 2016.
\newblock \url{http://www.deeplearningbook.org}.

\bibitem[Gr{\"o}{\ss}er \& Okhrin(2021)Gr{\"o}{\ss}er and
  Okhrin]{Grosser2021copulae}
Gr{\"o}{\ss}er, J. and Okhrin, O.
\newblock Copulae: An overview and recent developments.
\newblock \emph{WIREs Computational Statistics}, n/a\penalty0 (n/a):\penalty0
  e1557, 2021.
\newblock \doi{https://doi.org/10.1002/wics.1557}.
\newblock URL
  \url{https://wires.onlinelibrary.wiley.com/doi/abs/10.1002/wics.1557}.
\newblock A good initial read to give a higher level of understanding of what
  copulas are and what kinds of copulas have been studied.

\bibitem[Ho et~al.(2020)Ho, Jain, and Abbeel]{ho2020denoising}
Ho, J., Jain, A., and Abbeel, P.
\newblock Denoising diffusion probabilistic models.
\newblock In Larochelle, H., Ranzato, M., Hadsell, R., Balcan, M.~F., and Lin,
  H. (eds.), \emph{Advances in Neural Information Processing Systems},
  volume~33, pp.\  6840--6851. Curran Associates, Inc., 2020.
\newblock URL
  \url{https://proceedings.neurips.cc/paper/2020/file/4c5bcfec8584af0d967f1ab10179ca4b-Paper.pdf}.

\bibitem[Hochreiter \& Schmidhuber(1997)Hochreiter and
  Schmidhuber]{hochreiter1997long}
Hochreiter, S. and Schmidhuber, J.
\newblock Long short-term memory.
\newblock \emph{Neural computation}, 9\penalty0 (8):\penalty0 1735--1780, 1997.

\bibitem[Hoogeboom et~al.(2022)Hoogeboom, Gritsenko, Bastings, Poole, van~den
  Berg, and Salimans]{hoogeboom2022autoregressive}
Hoogeboom, E., Gritsenko, A.~A., Bastings, J., Poole, B., van~den Berg, R., and
  Salimans, T.
\newblock Autoregressive diffusion models.
\newblock In \emph{International Conference on Learning Representations}, 2022.
\newblock URL \url{https://openreview.net/forum?id=Lm8T39vLDTE}.

\bibitem[Huang et~al.(2018)Huang, Krueger, Lacoste, and Courville]{huang18-naf}
Huang, C.-W., Krueger, D., Lacoste, A., and Courville, A.
\newblock Neural autoregressive flows.
\newblock In Dy, J. and Krause, A. (eds.), \emph{Proceedings of the 35th
  International Conference on Machine Learning}, volume~80 of \emph{Proceedings
  of Machine Learning Research}, pp.\  2078--2087. PMLR, 10--15 Jul 2018.
\newblock URL \url{https://proceedings.mlr.press/v80/huang18d.html}.

\bibitem[Hyndman et~al.(2008)Hyndman, Koehler, Ord, and
  Snyder]{hyndman2008forecasting}
Hyndman, R., Koehler, A.~B., Ord, J.~K., and Snyder, R.~D.
\newblock \emph{Forecasting with exponential smoothing: the state space
  approach}.
\newblock Springer Science \& Business Media, 2008.

\bibitem[Hyndman et~al.(2022)Hyndman, Athanasopoulos, Bergmeir, Caceres, Chhay,
  O'Hara-Wild, Petropoulos, Razbash, Wang, and Yasmeen]{hyndman2022forecast}
Hyndman, R., Athanasopoulos, G., Bergmeir, C., Caceres, G., Chhay, L.,
  O'Hara-Wild, M., Petropoulos, F., Razbash, S., Wang, E., and Yasmeen, F.
\newblock \emph{\texttt{forecast}: Forecasting functions for time series and
  linear models}, 2022.
\newblock URL \url{https://pkg.robjhyndman.com/forecast/}.
\newblock R package version 8.16.

\bibitem[Hyndman \& Khandakar(2008)Hyndman and Khandakar]{hyndman2008automatic}
Hyndman, R.~J. and Khandakar, Y.
\newblock Automatic time series forecasting: the {\texttt{forecast}} package
  for {R}.
\newblock \emph{Journal of Statistical Software}, 26\penalty0 (3):\penalty0
  1--22, 2008.
\newblock \doi{10.18637/jss.v027.i03}.

\bibitem[Kim et~al.(1998)Kim, Shephard, and Chib]{kim1998stochastic}
Kim, S., Shephard, N., and Chib, S.
\newblock Stochastic volatility: Likelihood inference and comparison with
  {ARCH} models.
\newblock \emph{The Review of Economic Studies}, 65\penalty0 (3):\penalty0
  361--393, 07 1998.
\newblock ISSN 0034-6527.
\newblock URL \url{https://doi.org/10.1111/1467-937X.00050}.

\bibitem[Koochali et~al.(2022)Koochali, Schichtel, Dengel, and
  Ahmed]{koochali2022random}
Koochali, A., Schichtel, P., Dengel, A., and Ahmed, S.
\newblock Random noise vs state-of-the-art probabilistic forecasting methods: A
  case study on crps-sum discrimination ability.
\newblock \emph{arXiv preprint arXiv:2201.08671}, 2022.

\bibitem[Krupskii \& Joe(2020)Krupskii and Joe]{Krupskii2020flexible}
Krupskii, P. and Joe, H.
\newblock Flexible copula models with dynamic dependence and application to
  financial data.
\newblock \emph{Econometrics and Statistics}, 16:\penalty0 148--167, 2020.
\newblock ISSN 2452-3062.
\newblock \doi{https://doi.org/10.1016/j.ecosta.2020.01.005}.
\newblock URL
  \url{https://www.sciencedirect.com/science/article/pii/S2452306220300216}.

\bibitem[{Le~Guen} \& {Thome}(2020){Le~Guen} and {Thome}]{Guen2020structured}
{Le~Guen}, V. and {Thome}, N.
\newblock Probabilistic time series forecasting with shape and temporal
  diversity.
\newblock In Larochelle, H., Ranzato, M., Hadsell, R., Balcan, M.~F., and Lin,
  H. (eds.), \emph{Advances in Neural Information Processing Systems},
  volume~33, pp.\  4427--4440. Curran Associates, Inc., 2020.
\newblock URL
  \url{https://proceedings.neurips.cc/paper/2020/file/2f2b265625d76a6704b08093c652fd79-Paper.pdf}.

\bibitem[Li et~al.(2019)Li, Jin, Xuan, Zhou, Chen, Wang, and
  Yan]{li2019enhancing}
Li, S., Jin, X., Xuan, Y., Zhou, X., Chen, W., Wang, Y.-X., and Yan, X.
\newblock Enhancing the locality and breaking the memory bottleneck of
  transformer on time series forecasting.
\newblock In Wallach, H., Larochelle, H., Beygelzimer, A., d\textquotesingle
  Alch\'{e}-Buc, F., Fox, E., and Garnett, R. (eds.), \emph{Advances in Neural
  Information Processing Systems}, volume~32. Curran Associates, Inc., 2019.
\newblock URL
  \url{https://proceedings.neurips.cc/paper/2019/file/6775a0635c302542da2c32aa19d86be0-Paper.pdf}.

\bibitem[Lim \& Zohren(2021)Lim and Zohren]{Lim:2020br}
Lim, B. and Zohren, S.
\newblock Time-series forecasting with deep learning: a survey.
\newblock \emph{Philosophical Transactions of the Royal Society A:
  Mathematical, Physical and Engineering Sciences}, 379\penalty0
  (2194):\penalty0 20200209, 2021.
\newblock \doi{10.1098/rsta.2020.0209}.
\newblock URL
  \url{https://royalsocietypublishing.org/doi/abs/10.1098/rsta.2020.0209}.

\bibitem[Lim et~al.(2021)Lim, Ar{\i}k, Loeff, and Pfister]{lim2021temporal}
Lim, B., Ar{\i}k, S.~{\"O}., Loeff, N., and Pfister, T.
\newblock {T}emporal {F}usion {T}ransformers for interpretable multi-horizon
  time series forecasting.
\newblock \emph{International Journal of Forecasting}, 37\penalty0
  (4):\penalty0 1748--1764, 2021.
\newblock ISSN 0169-2070.
\newblock \doi{https://doi.org/10.1016/j.ijforecast.2021.03.012}.
\newblock URL
  \url{https://www.sciencedirect.com/science/article/pii/S0169207021000637}.

\bibitem[Lin et~al.(2021)Lin, Wang, Liu, and Qiu]{lin2021survey}
Lin, T., Wang, Y., Liu, X., and Qiu, X.
\newblock A survey of transformers, 2021.
\newblock URL \url{https://arxiv.org/abs/2106.04554}.

\bibitem[{Lopez-Paz} et~al.(2012){Lopez-Paz}, Hern\'{a}ndez-lobato, and
  Sch\"{o}lkopf]{LopezPaz2012SemiSupervisedDA}
{Lopez-Paz}, D., Hern\'{a}ndez-lobato, J., and Sch\"{o}lkopf, B.
\newblock Semi-supervised domain adaptation with non-parametric copulas.
\newblock In Pereira, F., Burges, C. J.~C., Bottou, L., and Weinberger, K.~Q.
  (eds.), \emph{Advances in Neural Information Processing Systems}, volume~25.
  Curran Associates, Inc., 2012.
\newblock URL
  \url{https://proceedings.neurips.cc/paper/2012/file/8e98d81f8217304975ccb23337bb5761-Paper.pdf}.

\bibitem[Makridakis et~al.(2018)Makridakis, Spiliotis, and
  Assimakopoulos]{Makridakis:2018kw}
Makridakis, S., Spiliotis, E., and Assimakopoulos, V.
\newblock Statistical and machine learning forecasting methods: Concerns and
  ways forward.
\newblock \emph{PLoS ONE}, 13\penalty0 (3):\penalty0 e0194889–26, 2018.

\bibitem[Makridakis et~al.(2021)Makridakis, Spiliotis, Assimakopoulos, Chen,
  Gaba, Tsetlin, and Winkler]{Makridakis2021m5uncertainty}
Makridakis, S., Spiliotis, E., Assimakopoulos, V., Chen, Z., Gaba, A., Tsetlin,
  I., and Winkler, R.~L.
\newblock The {M5} uncertainty competition: Results, findings and conclusions.
\newblock \emph{International Journal of Forecasting}, 2021.
\newblock ISSN 0169-2070.
\newblock \doi{https://doi.org/10.1016/j.ijforecast.2021.10.009}.
\newblock URL
  \url{https://www.sciencedirect.com/science/article/pii/S0169207021001722}.

\bibitem[Makridakis et~al.(2022)Makridakis, Spiliotis, and
  Assimakopoulos]{Makridakis2022m5accuracy}
Makridakis, S., Spiliotis, E., and Assimakopoulos, V.
\newblock {M5} accuracy competition: Results, findings, and conclusions.
\newblock \emph{International Journal of Forecasting}, 2022.
\newblock ISSN 0169-2070.
\newblock \doi{https://doi.org/10.1016/j.ijforecast.2021.11.013}.
\newblock URL
  \url{https://www.sciencedirect.com/science/article/pii/S0169207021001874}.

\bibitem[Matheson \& Winkler(1976)Matheson and Winkler]{matheson1976scoring}
Matheson, J.~E. and Winkler, R.~L.
\newblock Scoring rules for continuous probability distributions.
\newblock \emph{Management science}, 22\penalty0 (10):\penalty0 1087--1096,
  1976.

\bibitem[Mayer \& Wied(2021)Mayer and Wied]{mayer2021estimation}
Mayer, A. and Wied, D.
\newblock Estimation and inference in factor copula models with exogenous
  covariates, 2021.
\newblock URL \url{https://arxiv.org/abs/2107.03366}.

\bibitem[Montero-Manso \& Hyndman(2021)Montero-Manso and
  Hyndman]{MonteroManso2021groups}
Montero-Manso, P. and Hyndman, R.~J.
\newblock Principles and algorithms for forecasting groups of time series:
  Locality and globality.
\newblock \emph{International Journal of Forecasting}, 37\penalty0
  (4):\penalty0 1632--1653, 2021.
\newblock ISSN 0169-2070.
\newblock \doi{https://doi.org/10.1016/j.ijforecast.2021.03.004}.
\newblock URL
  \url{https://www.sciencedirect.com/science/article/pii/S0169207021000558}.

\bibitem[M{\"u}ller et~al.(2022)M{\"u}ller, Hollmann, Arango, Grabocka, and
  Hutter]{muller2022transformers}
M{\"u}ller, S., Hollmann, N., Arango, S.~P., Grabocka, J., and Hutter, F.
\newblock Transformers can do {B}ayesian inference.
\newblock In \emph{International Conference on Learning Representations}, 2022.
\newblock URL \url{https://openreview.net/forum?id=KSugKcbNf9}.

\bibitem[Nelsen(2007)]{nelsen2007introduction}
Nelsen, R.~B.
\newblock \emph{An introduction to copulas}.
\newblock Springer Science \& Business Media, second edition, 2007.

\bibitem[Newey \& West(1987)Newey and West]{NeweyWest1987}
Newey, W.~K. and West, K.~D.
\newblock A simple, positive semi-definite, heteroskedasticity and
  autocorrelation consistent covariance matrix.
\newblock \emph{Econometrica}, 55\penalty0 (3):\penalty0 703--708, 1987.
\newblock ISSN 00129682, 14680262.
\newblock URL \url{http://www.jstor.org/stable/1913610}.

\bibitem[Newey \& West(1994)Newey and West]{NeweyWest1994}
Newey, W.~K. and West, K.~D.
\newblock {Automatic Lag Selection in Covariance Matrix Estimation}.
\newblock \emph{The Review of Economic Studies}, 61\penalty0 (4):\penalty0
  631--653, 10 1994.
\newblock ISSN 0034-6527.
\newblock \doi{10.2307/2297912}.
\newblock URL \url{https://doi.org/10.2307/2297912}.

\bibitem[Oreshkin et~al.(2020)Oreshkin, Carpov, Chapados, and
  Bengio]{oreshkin2020nbeats}
Oreshkin, B.~N., Carpov, D., Chapados, N., and Bengio, Y.
\newblock {N-BEATS}: Neural basis expansion analysis for interpretable time
  series forecasting.
\newblock In \emph{International Conference on Learning Representations}, 2020.
\newblock URL \url{https://openreview.net/forum?id=r1ecqn4YwB}.

\bibitem[Papamakarios et~al.(2021)Papamakarios, Nalisnick, Rezende, Mohamed,
  and Lakshminarayanan]{Papamakarios2021normalizing}
Papamakarios, G., Nalisnick, E., Rezende, D.~J., Mohamed, S., and
  Lakshminarayanan, B.
\newblock Normalizing flows for probabilistic modeling and inference.
\newblock \emph{Journal of Machine Learning Research}, 22\penalty0
  (57):\penalty0 1--64, 2021.
\newblock URL \url{http://jmlr.org/papers/v22/19-1028.html}.

\bibitem[Paszke et~al.(2019)Paszke, Gross, Massa, Lerer, Bradbury, Chanan,
  Killeen, Lin, Gimelshein, Antiga, Desmaison, Kopf, Yang, DeVito, Raison,
  Tejani, Chilamkurthy, Steiner, Fang, Bai, and Chintala]{NEURIPS2019_9015}
Paszke, A., Gross, S., Massa, F., Lerer, A., Bradbury, J., Chanan, G., Killeen,
  T., Lin, Z., Gimelshein, N., Antiga, L., Desmaison, A., Kopf, A., Yang, E.,
  DeVito, Z., Raison, M., Tejani, A., Chilamkurthy, S., Steiner, B., Fang, L.,
  Bai, J., and Chintala, S.
\newblock Pytorch: An imperative style, high-performance deep learning library.
\newblock In Wallach, H., Larochelle, H., Beygelzimer, A., d'Alch\'{e} Buc, F.,
  Fox, E., and Garnett, R. (eds.), \emph{Advances in Neural Information
  Processing Systems 32}, pp.\  8024--8035. Curran Associates, Inc., 2019.
\newblock URL
  \url{http://papers.neurips.cc/paper/9015-pytorch-an-imperative-style-high-performance-deep-learning-library.pdf}.

\bibitem[Patton(2012)]{Patton2012ARO}
Patton, A.~J.
\newblock A review of copula models for economic time series.
\newblock \emph{Journal of Multivariate Analysis}, 110:\penalty0 4--18, 2012.

\bibitem[Peterson(2017)]{peterson2017decision}
Peterson, M.
\newblock \emph{An Introduction to Decision Theory}.
\newblock Cambridge Introductions to Philosophy. Cambridge University Press,
  second edition, 2017.
\newblock \doi{10.1017/9781316585061}.

\bibitem[Pinson \& Tastu(2013)Pinson and Tastu]{pinson2013discrimination}
Pinson, P. and Tastu, J.
\newblock \emph{Discrimination ability of the energy score}.
\newblock DTU Informatics, 2013.

\bibitem[{R Core Team}(2020)]{rcoreteam2020}
{R Core Team}.
\newblock \emph{R: A Language and Environment for Statistical Computing}.
\newblock R Foundation for Statistical Computing, Vienna, Austria, 2020.
\newblock URL \url{https://www.R-project.org/}.

\bibitem[Rasul(2021{\natexlab{a}})]{Rasul_PyTorchTS}
Rasul, K.
\newblock {PyTorchTS}, 2021{\natexlab{a}}.
\newblock URL \url{https://github.com/zalandoresearch/pytorch-ts}.

\bibitem[Rasul(2021{\natexlab{b}})]{rasul_private}
Rasul, K.
\newblock personal communication, 2021{\natexlab{b}}.

\bibitem[Rasul et~al.(2021{\natexlab{a}})Rasul, Seward, Schuster, and
  Vollgraf]{rasul2021autoregressive}
Rasul, K., Seward, C., Schuster, I., and Vollgraf, R.
\newblock Autoregressive denoising diffusion models for multivariate
  probabilistic time series forecasting.
\newblock In Meila, M. and Zhang, T. (eds.), \emph{Proceedings of the 38th
  International Conference on Machine Learning}, volume 139 of
  \emph{Proceedings of Machine Learning Research}, pp.\  8857--8868. PMLR,
  18--24 Jul 2021{\natexlab{a}}.
\newblock URL \url{https://proceedings.mlr.press/v139/rasul21a.html}.

\bibitem[Rasul et~al.(2021{\natexlab{b}})Rasul, Sheikh, Schuster, Bergmann, and
  Vollgraf]{rasul2021multivariate}
Rasul, K., Sheikh, A.-S., Schuster, I., Bergmann, U.~M., and Vollgraf, R.
\newblock Multivariate probabilistic time series forecasting via conditioned
  normalizing flows.
\newblock In \emph{International Conference on Learning Representations},
  2021{\natexlab{b}}.
\newblock URL \url{https://openreview.net/forum?id=WiGQBFuVRv}.

\bibitem[Rémillard et~al.(2012)Rémillard, Papageorgiou, and
  Soustra]{remillard2012copulasemi}
Rémillard, B., Papageorgiou, N., and Soustra, F.
\newblock Copula-based semiparametric models for multivariate time series.
\newblock \emph{Journal of Multivariate Analysis}, 110:\penalty0 30--42, 2012.
\newblock ISSN 0047-259X.
\newblock \doi{https://doi.org/10.1016/j.jmva.2012.03.001}.
\newblock URL
  \url{https://www.sciencedirect.com/science/article/pii/S0047259X1200070X}.
\newblock Special Issue on Copula Modeling and Dependence.

\bibitem[Salinas et~al.(2019)Salinas, Bohlke-Schneider, Callot, Medico, and
  Gasthaus]{salinas2019high}
Salinas, D., Bohlke-Schneider, M., Callot, L., Medico, R., and Gasthaus, J.
\newblock High-dimensional multivariate forecasting with low-rank {G}aussian
  copula processes.
\newblock \emph{Advances in Neural Information Processing Systems},
  32:\penalty0 6827--6837, 2019.

\bibitem[Salinas et~al.(2020)Salinas, Flunkert, Gasthaus, and
  Januschowski]{salinas2020deepar}
Salinas, D., Flunkert, V., Gasthaus, J., and Januschowski, T.
\newblock {DeepAR}: Probabilistic forecasting with autoregressive recurrent
  networks.
\newblock \emph{International Journal of Forecasting}, 36\penalty0
  (3):\penalty0 1181--1191, 2020.

\bibitem[Seabold \& Perktold(2010)Seabold and Perktold]{seabold2010statsmodels}
Seabold, S. and Perktold, J.
\newblock \texttt{statsmodels}: Econometric and statistical modeling with
  {Python}.
\newblock In \emph{9th Python in Science Conference}, 2010.

\bibitem[Shih et~al.(2019)Shih, Sun, and Lee]{shih2019temporal}
Shih, S.-Y., Sun, F.-K., and Lee, H.-y.
\newblock Temporal pattern attention for multivariate time series forecasting.
\newblock \emph{Machine Learning}, 108\penalty0 (8-9):\penalty0 1421--1441,
  2019.

\bibitem[Shukla \& Marlin(2021{\natexlab{a}})Shukla and
  Marlin]{shukla2021multitime}
Shukla, S.~N. and Marlin, B.
\newblock Multi-time attention networks for irregularly sampled time series.
\newblock In \emph{International Conference on Learning Representations},
  2021{\natexlab{a}}.
\newblock URL \url{https://openreview.net/forum?id=4c0J6lwQ4_}.

\bibitem[Shukla \& Marlin(2021{\natexlab{b}})Shukla and
  Marlin]{shukla2021survey}
Shukla, S.~N. and Marlin, B.~M.
\newblock A survey on principles, models and methods for learning from
  irregularly sampled time series, 2021{\natexlab{b}}.
\newblock URL \url{https://arxiv.org/abs/2012.00168}.

\bibitem[Sklar(1959)]{sklar1959fonctions}
Sklar, A.
\newblock Fonctions de répartition à $n$ dimensions et leurs marges.
\newblock \emph{{Publications de l’Institut Statistique de l’Université de
  Paris}}, 8:\penalty0 229--231, 1959.

\bibitem[Sohl-Dickstein et~al.(2015)Sohl-Dickstein, Weiss, Maheswaranathan, and
  Ganguli]{sohl-dickstein2015deep}
Sohl-Dickstein, J., Weiss, E., Maheswaranathan, N., and Ganguli, S.
\newblock Deep unsupervised learning using nonequilibrium thermodynamics.
\newblock In Bach, F. and Blei, D. (eds.), \emph{Proceedings of the 32nd
  International Conference on Machine Learning}, volume~37 of \emph{Proceedings
  of Machine Learning Research}, pp.\  2256--2265, Lille, France, 07--09 Jul
  2015. PMLR.
\newblock URL \url{https://proceedings.mlr.press/v37/sohl-dickstein15.html}.

\bibitem[Spadon et~al.(2021)Spadon, Hong, Brandoli, Matwin, Rodrigues-Jr, and
  Sun]{Spadon2020pay}
Spadon, G., Hong, S., Brandoli, B., Matwin, S., Rodrigues-Jr, J.~F., and Sun,
  J.
\newblock Pay attention to evolution: Time series forecasting with deep
  graph-evolution learning.
\newblock \emph{IEEE Transactions on Pattern Analysis \& Machine Intelligence},
  2021.
\newblock ISSN 1939-3539.
\newblock \doi{10.1109/TPAMI.2021.3076155}.

\bibitem[{Stan Development Team}(2022)]{Stan}
{Stan Development Team}.
\newblock Stan modeling language users guide and reference manual, 2022.
\newblock URL \url{https://mc-stan.org}.

\bibitem[Sun et~al.(2020)Sun, Hong, Song, and Li]{sun2020review}
Sun, C., Hong, S., Song, M., and Li, H.
\newblock A review of deep learning methods for irregularly sampled medical
  time series data, 2020.
\newblock URL \url{https://arxiv.org/abs/2010.12493}.

\bibitem[Tabak \& Turner(2013)Tabak and Turner]{tabak2013family}
Tabak, E.~G. and Turner, C.~V.
\newblock A family of nonparametric density estimation algorithms.
\newblock \emph{Communications on Pure and Applied Mathematics}, 66\penalty0
  (2):\penalty0 145--164, 2013.

\bibitem[Tang \& Matteson(2021)Tang and Matteson]{tang2021probabilistic}
Tang, B. and Matteson, D.
\newblock Probabilistic transformer for time series analysis.
\newblock \emph{Advances in Neural Information Processing Systems}, 34, 2021.

\bibitem[Tashiro et~al.(2021)Tashiro, Song, Song, and Ermon]{tashiro2021csdi}
Tashiro, Y., Song, J., Song, Y., and Ermon, S.
\newblock {CSDI}: Conditional score-based diffusion models for probabilistic
  time series imputation.
\newblock In \emph{Advances in Neural Information Processing Systems},
  volume~34, 2021.

\bibitem[Tay et~al.(2020)Tay, Dehghani, Bahri, and Metzler]{tay2020efficient}
Tay, Y., Dehghani, M., Bahri, D., and Metzler, D.
\newblock Efficient transformers: A survey, 2020.
\newblock URL \url{https://arxiv.org/abs/2009.06732}.

\bibitem[Uria et~al.(2014)Uria, Murray, and Larochelle]{uria2014deep}
Uria, B., Murray, I., and Larochelle, H.
\newblock A deep and tractable density estimator.
\newblock In \emph{International Conference on Machine Learning}, volume~32,
  pp.\  467--475. PMLR, 2014.

\bibitem[van~den Oord et~al.(2016)van~den Oord, Dieleman, Zen, Simonyan,
  Vinyals, Graves, Kalchbrenner, Senior, and
  Kavukcuoglu]{vandenOord2016wavenet}
van~den Oord, A., Dieleman, S., Zen, H., Simonyan, K., Vinyals, O., Graves, A.,
  Kalchbrenner, N., Senior, A., and Kavukcuoglu, K.
\newblock Wavenet: A generative model for raw audio.
\newblock In \emph{Arxiv}, 2016.
\newblock URL \url{https://arxiv.org/abs/1609.03499}.

\bibitem[Vaswani et~al.(2017)Vaswani, Shazeer, Parmar, Uszkoreit, Jones, Gomez,
  Kaiser, and Polosukhin]{vaswani2017attention}
Vaswani, A., Shazeer, N., Parmar, N., Uszkoreit, J., Jones, L., Gomez, A.~N.,
  Kaiser, {\L}., and Polosukhin, I.
\newblock Attention is all you need.
\newblock In \emph{Advances in neural information processing systems}, pp.\
  5998--6008, 2017.

\bibitem[Wei(2018)]{wei2018multivariate}
Wei, W.~W.
\newblock \emph{Multivariate time series analysis and applications}.
\newblock John Wiley \& Sons, 2018.

\bibitem[Wiese et~al.(2019)Wiese, Knobloch, and Korn]{wiese2019copula}
Wiese, M., Knobloch, R., and Korn, R.
\newblock Copula \& marginal flows: Disentangling the marginal from its joint.
\newblock \emph{arXiv preprint arXiv:1907.03361}, 2019.

\bibitem[Williams \& Rasmussen(2006)Williams and
  Rasmussen]{williams2006gaussian}
Williams, C.~K. and Rasmussen, C.~E.
\newblock \emph{Gaussian processes for machine learning}, volume~2.
\newblock MIT press Cambridge, MA, 2006.

\bibitem[Wu et~al.(2021)Wu, Xu, Wang, and Long]{wu2021autoformer}
Wu, H., Xu, J., Wang, J., and Long, M.
\newblock Autoformer: Decomposition transformers with auto-correlation for
  long-term series forecasting.
\newblock In Ranzato, M., Beygelzimer, A., Dauphin, Y., Liang, P., and Vaughan,
  J.~W. (eds.), \emph{Advances in Neural Information Processing Systems},
  volume~34, pp.\  22419--22430. Curran Associates, Inc., 2021.
\newblock URL
  \url{https://proceedings.neurips.cc/paper/2021/file/bcc0d400288793e8bdcd7c19a8ac0c2b-Paper.pdf}.

\bibitem[Wu et~al.(2020)Wu, Xiao, Ding, Zhao, Wei, and
  Huang]{wu2020adversarialSparse}
Wu, S., Xiao, X., Ding, Q., Zhao, P., Wei, Y., and Huang, J.
\newblock Adversarial sparse transformer for time series forecasting.
\newblock In Larochelle, H., Ranzato, M., Hadsell, R., Balcan, M., and Lin, H.
  (eds.), \emph{Advances in Neural Information Processing Systems}, volume~33,
  pp.\  17105--17115. Curran Associates, Inc., 2020.
\newblock URL
  \url{https://proceedings.neurips.cc/paper/2020/file/c6b8c8d762da15fa8dbbdfb6baf9e260-Paper.pdf}.

\bibitem[Yanchenko \& Mukherjee(2020)Yanchenko and
  Mukherjee]{Yanchenko2020stanza}
Yanchenko, A.~K. and Mukherjee, S.
\newblock Stanza: A nonlinear state space model for probabilistic inference in
  non-stationary time series.
\newblock \emph{arXiv}, pp.\  2006.06553v1, 2020.

\bibitem[Zhang et~al.(1998)Zhang, {Eddy Patuwo}, and {Y.
  Hu}]{Zhang1998forecasting}
Zhang, G., {Eddy Patuwo}, B., and {Y. Hu}, M.
\newblock Forecasting with artificial neural networks:: The state of the art.
\newblock \emph{International Journal of Forecasting}, 14\penalty0
  (1):\penalty0 35--62, 1998.
\newblock ISSN 0169-2070.
\newblock \doi{https://doi.org/10.1016/S0169-2070(97)00044-7}.
\newblock URL
  \url{https://www.sciencedirect.com/science/article/pii/S0169207097000447}.

\end{thebibliography}
\bibliographystyle{icml2022}

\newpage
\appendix
\onecolumn
\begin{spacing}{0}
\section*{Appendix -- Table of Contents}

\contentsline {section}{\numberline {A}Theory: Proof of \cref {thm:copula}}{\pageref{app:copula-proof}}{appendix.A}%
\contentsline {section}{\numberline {B}Implementation Details}{\pageref{app:implementation}}{appendix.B}%
\contentsline {subsection}{\numberline {B.1}Libraries Used}{\pageref{app:libraries}}{subsection.B.1}%
\contentsline {subsection}{\numberline {B.2}Inverting the Marginal Flows}{\pageref{app:inverting_flow}}{subsection.B.2}%
\contentsline {subsection}{\numberline {B.3}Bagging: Efficient Training in High Dimensions}{\pageref{app:bagging}}{subsection.B.3}%
\contentsline {subsection}{\numberline {B.4}Data Normalization}{\pageref{app:data_norm}}{subsection.B.4}%
\contentsline {section}{\numberline {C}Forecasting Benchmark}{\pageref{app:experiment-setup}}{appendix.C}%
\contentsline {subsection}{\numberline {C.1}Datasets}{\pageref{app:choice-datasets}}{subsection.C.1}%
\contentsline {subsection}{\numberline {C.2}Training Procedure}{\pageref{app:training_procedure}}{subsection.C.2}%
\contentsline {subsection}{\numberline {C.3}Hyperparameter Search Protocol}{\pageref{app:hyper_search}}{subsection.C.3}%
\contentsline {subsection}{\numberline {C.4}Hyperparameter Ranges}{\pageref{app:hp-ranges}}{subsection.C.4}%
\contentsline {subsection}{\numberline {C.5}Backtesting Protocol}{\pageref{app:backtesting}}{subsection.C.5}%
\contentsline {subsection}{\numberline {C.6}Metrics and Additional Results}{\pageref{app:metrics}}{subsection.C.6}%
\contentsline {section}{\numberline {D}Additional Experiments}{\pageref{app:additional-experiments}}{appendix.D}%
\contentsline {subsection}{\numberline {D.1}Can Attentional Copulas Recover a Ground-Truth Copula?}{\pageref{app:gtcopula}}{subsection.D.1}%
\contentsline {subsection}{\numberline {D.2}Can \tactis{} Learn to Interpolate?}{\pageref{app:interpolation}}{subsection.D.2}%
\contentsline {subsection}{\numberline {D.3}Can \tactis{} Learn from Unaligned/Non-Uniformly Sampled Time Series?}{\pageref{app:unaligned}}{subsection.D.3}%
\contentsline {subsection}{\numberline {D.4}Ablation Study}{\pageref{app:ablation}}{subsection.D.4}%
\contentsline {section}{\numberline {E}A Deeper Dive into \tactis{} Models}{\pageref{app:tactis-deepdive}}{appendix.E}%
\contentsline {subsection}{\numberline {E.1}Some Good and Bad Forecasts}{\pageref{sec:app-deepdive-forecasts}}{subsection.E.1}%
\contentsline {subsection}{\numberline {E.2}Looking into Learned Marginal Distributions}{\pageref{sec:app-deepdive-marginals}}{subsection.E.2}%
\contentsline {subsection}{\numberline {E.3}Learning Dependencies Between Variables}{\pageref{app:correlations}}{subsection.E.3}%
\end{spacing}

\clearpage
\section{Theory: Proof of \cref{thm:copula}} \label{app:copula-proof}

\begin{reptheorem}{thm:copula}
    The copula $c_{\phi^{\pi}_c}$ embedded in a density estimator $g_{\phi^{\pi}}$, as shown in \cref{eq:model_expansion}, with distributional parameters $\phi^{\pi}$ given by a decoder with parameters $\theta_\text{dec} \in \Theta$, where $\Theta$ minimizes \cref{eq:loss}, is a valid copula.
\end{reptheorem}

\begin{proof}
    To show that $c_{\phi^\pi_c}\!\!\left(u_1^{(m)}, \ldots, u_{n_m}^{(m)}\right)$ is the density of a valid copula, we need to show two properties: 1) it is a distribution on the unit cube $\left[0, 1\right]^{n_m}$, 2) the marginal distribution of each random variable is uniform.
    
    \emph{Property (1)} is trivially satisfied since, by construction, the support of each conditional distribution is limited to the $[0, 1]$ interval (see \cref{sec:tactis-pseudocopula}, paragraph \emph{Choice of distribution}).

    \emph{Property (2)} is a consequence of minimizing the problem in \cref{eq:loss}. Recall that the parameters of the decoder, $\theta_\text{dec}$, are obtained by minimizing the following expression:
    \begin{align}
        \expect_{\substack{\pi \sim \Pi\mathstrut \\ \Xb \sim \Scal}} -\log g_{\phi^\pi}\!\!\left(x_1^{(m)}, \ldots, x_{n_m}^{(m)}\right)
        &= \expect_{\Xb \sim \Scal} -\dfrac{1}{|\Pi|} \sum_{\pi \in \Pi} \log g_{\phi^\pi}\!\!\left(x_1^{(m)}, \ldots, x_{n_m}^{(m)}\right) \nonumber\\
        &= \expect_{\Xb \sim \Scal} -\dfrac{1}{|\Pi|} \log \left[ \prod_{\pi \in \Pi} g_{\phi^\pi}\!\!\left(x_1^{(m)}, \ldots, x_{n_m}^{(m)}\right) \right] \nonumber\\
        &= \expect_{\Xb \sim \Scal} -\log \left[ \prod_{\pi \in \Pi} g_{\phi^\pi}\!\!\left(x_1^{(m)}, \ldots, x_{n_m}^{(m)}\right) \right]^{|\Pi|^{-1}}\!\!.
    \label{eq:copula-geomean}
    \end{align}

    Notice that the term inside the logarithm in \cref{eq:copula-geomean} corresponds to a geometric mean.
    This quantity is always smaller or equal to the arithmetic mean, and equality is reached i.i.f.\ all elements over which the mean is calculated are equal.
    Hence, we can rewrite the expression in \cref{eq:copula-geomean} as:
    \begin{equation*}\label{eq:copula-arimean-plus-slack}
        \expect_{\Xb \sim \Scal}
        -\log \left[ \dfrac{1}{|\Pi|} \sum_{\pi \in \Pi} g_{\phi^\pi}\!\!\left(x_1^{(m)}, \ldots, x_{n_m}^{(m)}\right) \right] + \delta,
    \end{equation*}
    where $\delta \in \reals^+$ is exactly zero i.i.f.\ the density estimated by the model is permutation invariant, i.e., $g_{\phi^\pi}\!\!\left(x_1^{(m)}, \ldots, x_{n_m}^{(m)}\right) = g_{\phi}\!\!\left(x_1^{(m)}, \ldots, x_{n_m}^{(m)}\right), \forall \pi \in \Pi$.

    Based on this expression, we conclude that the parameters $\theta_\text{dec}$ that minimize the problem in \cref{eq:loss} lead a density estimator that (i) is invariant to permutations $\pi$ and (ii) that minimizes the negative log-likelihood of the data $\expect_{\Xb \sim \Scal} -\log g_{\phi}\!\!\left(x_1^{(m)}, \ldots, x_{n_m}^{(m)}\right)$. It naturally follows from \cref{eq:model_expansion} that the embedded copula density $c_{\phi^\pi_c}\!\!\left(u_1^{(m)}, \ldots, u_{n_m}^{(m)}\right)$ is also permutation invariant.

    Now, recall that, by construction, the marginal density of the first element in a permutation $c_{\phi^\pi_{c1}}(u_1)$, is taken to be that of a $U_{[0, 1]}$ (see \cref{sec:tactis-pseudocopula}, paragraph \emph{Copula density}).
    Given that the copula density $c_{\phi^\pi_{c1}}\!\!\left(u_1^{(m)}, \ldots, u_{n_m}^{(m)}\right)$ is invariant to permutations, the marginal distribution of all variables must necessarily be $U_{[0, 1]}$.
    Thus, \emph{Property (2)} is satisfied.
    
    Since \emph{Properties (1) and (2)} are both satisfied, we conclude that the attentional copula $c_{\phi^{\pi}_c}$, with parameters obtained from a decoder with parameters $\theta_\text{enc}$ is a valid copula.

\end{proof}

\section{Implementation Details}\label{app:implementation}

\subsection{Libraries Used}\label{app:libraries}

The version of \tactis{} used in this work is implemented in PyTorch~\citep{NEURIPS2019_9015}. It relies on the PyTorchTS library~\citep{Rasul_PyTorchTS}, which allows the integration of PyTorch models with the GluonTS library \citep{gluonts_jmlr}, on which we rely heavily in our experiments for data processing, model training, and evaluation.
The implementation is available at \href{{https://github.com/servicenow/tactis}}{https://github.com/servicenow/tactis}.

\subsection{Inverting the Marginal Flows} \label{app:inverting_flow}

When sampling from the learned joint distribution, it is needed to compute the inverse of the marginal CDF for each variable, i.e., to invert the marginal flows: $F^{-1}_{\phi_k}(u^{(m)}_k)$.
Since $F_{\phi_k}(x^{(m)}_k)$ is strictly monotonic by construction, many search algorithms can be applied.
We choose to rely on binary search due to its numerical stability and ease of implementation.
While such searches are relatively slow, the overhead in compute time is negligible compared to other computations in the decoder, which are dominated by the transformer layers in the copula.

One weakness of using flows as marginal distributions is that there is very little pressure toward having well-regularized tails ($u^{(m)}_k \approx 0$ or $u^{(m)}_k \approx 1$).
When sampling from these flows, they will rarely produce values that are much smaller or larger than those in the observed data due to these tails not having the correct shape.
Given the difficulty of training the flows to avoid this issue entirely, we opted to consider only a portion of the marginals when sampling.
That is, instead of sampling from the full $u^{(m)}_k \in [0, 1]$ range in the attentional copula, we rescale sampled values to be in the $[0.05, 0.95]$ range: $x^{(m)}_k = F^{-1}_{\phi_k}(0.05 + 0.9 \times u^{(m)}_k)$.
This issue and alternative solutions have also been explored in \citet{wiese2019copula}.

\subsection{Bagging: Efficient Training in High Dimensions} \label{app:bagging}

Two of the models in our benchmarks: GPVar and \tactistt{}, can be trained with arbitrary subsets of the $n$ time series in the data without having to adjust their parameters.\footnote{Note that all variants of \tactis{} that we consider also support such bagging during training.}
Hence, the memory footprint of the model during the training phase can be significantly reduced by considering bags of randomly selected time series.
In our experiments, each training batch for these models is a bag of $20$ time series. At sampling time, the models are applied to the full set of time series.

As mentioned in \cref{app:training}, we increase the number of batches per epoch to compensate for the reduced amount of data available in each batch due to bagging. See \cref{table:bagging-maintext} and \cref{app:baggingexp} for a study of the effect of the bag size on the accuracy of \tactistt{}.

\subsection{Data Normalization} \label{app:data_norm}

It is often desirable for a model to be scale and translation invariant.
For \tactis{}, we thus transform the data according to what we call the "Standardization" procedure.
For each sample, we compute the means and variances for each series \emph{using only the values of the observed tokens} $\left\{ x_{ij}^{(o)} \right\}$.
The values of both the observed and missing tokens is then transformed using
\begin{equation}
    \tilde{x}_{ij} = \frac{x_{ij} - \text{mean}_i}{\sqrt{\text{variance}_i}}.
\end{equation}
After sampling, we can undo this transformation using
\begin{equation}
    x_{ij} = \sqrt{\text{variance}_i} \times \tilde{x}_{ij} + \text{mean}_i.
\end{equation}
Note that we consider a lower bound of $10^{-16}$ for the variance to avoid division by zero in cases where all values are (nearly) identical.
The precise value of this lower bound has no impact on the training when all values are identical. Yet, it has a massive one when sampling since the sampled values $\tilde{x}_{ij}$ are unlikely to all be zero when the marginal flows are not perfectly fitted.
Choosing a very small lower bound thus minimizes this issue.

\section{Forecasting Benchmark} \label{app:experiment-setup}

\subsection{Datasets} \label{app:choice-datasets}

\begin{table}[h!]
\centering
\caption{Detailed presentation of the datasets used in our benchmark, along with the short name by which they are referred to in the paper. Clicking on the short names links to the exact versions of the datasets that were used.}
\footnotesize\renewcommand{\arraystretch}{1.25}%
\resizebox{\textwidth}{!}{%
    \begin{tabular}{l l c c c}
     \toprule
     Short name & Monash name & Frequency & Number of series & Prediction length \\
     \midrule
     \href{https://doi.org/10.5281/zenodo.4656132}{\texttt{electricity}} & Electricity Hourly Dataset & 1 hour & 321 & 24 \\ 
     \href{https://doi.org/10.5281/zenodo.4654833}{\texttt{fred-md}} & FRED-MD Dataset & 1 month & 107 & 12 \\
     \href{https://doi.org/10.5281/zenodo.4656756}{\texttt{kdd-cup}} & KDD Cup Dataset (without Missing Values) & 1 hour & 270 & 48 \\
     \href{https://doi.org/10.5281/zenodo.4656144}{\texttt{solar-10min}} & Solar Dataset (10 Minutes Observations) & 10 minutes & 137 & 72 \\ 
     \href{https://doi.org/10.5281/zenodo.4656132}{\texttt{traffic}} & Traffic Hourly Dataset & 1 hour & 862 &  24\\ 
     \bottomrule
    \end{tabular}%
}
\label{table:datasets-def}
\end{table}

\cref{table:datasets-def} describes the five datasets that are included in our benchmark.
These datasets were selected due to being publicly available in the Monash Time Series Forecasting Repository \citep{godahewa2021monash}, not containing missing values, having a large (but reasonable) number of dimensions, and being sampled at diverse frequencies.
A modified version of the \texttt{electricity} dataset is often used for benchmarking in the literature, allowing a rough comparison of our results with those of other authors and reinforcing our belief that they are correct.
A variant of the \texttt{solar-10min} dataset, with hourly frequency, is also often used in the literature. We opted for the 10 minutes version to include such a high-frequency dataset in the benchmark.
The prediction length for this dataset was limited to 72 (12 hours) due to a prediction length of 144 (24 hours) being too taxing for the transformer-based models.
The \texttt{fred-md} dataset was selected due to the nature of its series: various economic indicators, which exhibit a wide variety of behaviours.
For the \texttt{kdd-cup} dataset, a prediction length of 48 (2 days) was used to challenge the models at producing longer forecasts than for the other hourly-frequency datasets.
Finally, the \texttt{traffic} dataset was selected for being very high-dimensional (862 series).

Each dataset was downloaded from the Monash Repository using GluonTS \citep{gluonts_jmlr} (links to the exact versions are provided in \cref{table:datasets-def}).
Apart from some models that perform preprocessing (e.g., standardization), the data were not modified or preprocessed in any way.

\subsection{Training Procedure}\label{app:training_procedure}

\subsubsection{Deep Learning Models} \label{app:training}

All deep learning models in our benchmarks: TempFlow, TimeGrad, GPVar, and \tactistt{}, are trained using the same procedure, except for a few exceptions for GPVar, which we detail below.
Each training is done in a Docker container giving access to an \textsc{nvidia} Tesla P100 GPU with 12 GB of GPU RAM, 2 CPU cores, and 32 GB of CPU RAM.

\paragraph{Batch size} %
These models are fairly memory-intensive and the amount of resources required varies considerably based on the hyperparameters and the dataset considered.
Hence, it was necessary to devise a batch-size selection procedure that would ensure that the model did not outgrow the available resources.
To select the batch sizes for a given set of hyperparameters, we ran a small number of training iterations with various batch sizes (powers of two from 1 to 256) and kept the largest one which did not result in an out-of-memory error.
While this method is crude, it allows hyperparameters that require less memory to take advantage of the available memory to train faster due to higher parallelization.

\paragraph{Training loop} In each training epoch, the models are presented with 1600 random samples from the training set.
Since the batch size is variable, we consider $\lfloor 1600 / \text{batch size} \rfloor$ batches per epoch.
As is explained in \cref{app:bagging}, the GPVar, and \tactistt{} models use bagging during training and thus only see a random subset of all series at each iteration.
To compensate for this, the number of batches per epoch for these models is increased to $\lfloor  (1600 / \text{batch size}) \times (\text{number of series} / \text{bagging size}) \rfloor$.
The batches are built by randomly sampling (uniformly) windows of length equal to the sum of the prediction and history lengths from the training dataset.
Only complete windows are considered.

After each epoch, we compute a validation score on a subset of the training set used only for validation (see \cref{fig:backtesting}).
The training ends when the first of the following condition is reached:
\begin{itemize}
    \item The model has trained for 200 epochs,
    \item The model has trained for more than 3 days, or
    \item It has been 20 epochs since the best validation score (early stopping).
\end{itemize}
The resulting model, which is used to compute the final metrics, is the one with minimal validation score. Since we compute the validation score at each epoch, the final model can be from an earlier epoch than the last one performed.

\paragraph{Exceptions for GPVar} \label{app:gpvar_training}
Following communication with GPVar authors \citep{salinas_private}, we trained that model on CPU instead of GPU, due to performance concerns.
To compensate for not having a GPU, we allocate 8 CPU cores and 64 GB of CPU RAM instead of the 2 CPU cores and 32 GB of CPU RAM allocated to models trained on GPU (see above).
Furthermore, we used the GluonTS Trainer for GPVar, which implemented its own stopping conditions for training. Following what was done for other models, we added a condition which stopped training after a maximum of 3 days.

\subsubsection{Classical Models}

Our Auto-\textsc{arima} results are obtained by running the \texttt{auto.arima} function~\citep{hyndman2008automatic} of the \texttt{forecast} package~\citep{hyndman2022forecast} in the R programming language~\citep{rcoreteam2020}. This automatically searches the model specification on a per-time series basis. Since the function supports univariate time series only, we run the function independently for each time series (i.e., we treat a $d$-dimensional multivariate problem as $d$ independent univariate problems), with the hyperparameters and parameters being specific to each time series. We restrict the search to \textsc{sarima} models with maximum order $p=3, q=3, P=2, Q=2$, and the default values for the other function parameters. We carry out an automatic Box-Cox transformation for positive-valued time series. For each time series, we limit fitting time to a maximum of 30 minutes, reverting to a simpler \textsc{arma}$(1,1)$ model with no seasonality or Box-Cox transformation if the original fitting fails. A fit is considered to have failed if the following conditions are encountered:
\begin{itemize}
\item Maximum time limit is exceeded;
\item The in-sample fitted values contain non-finite values;
\item Fewer than 20\% of the simulated predictive trajectories contain finite values (e.g. the predictive simulations diverge) or contain values that are not within a factor of 1,000 (in absolute value) of the training observations.
\end{itemize}

Our ETS (error, trend, seasonality) exponential smoothing~\citep{hyndman2008forecasting} results are obtained through the \texttt{ETSModel} implementation within Python's \texttt{statsmodels} package~\citep{seabold2010statsmodels}. This implementation replicates the R implementation within the \texttt{forecast} package discussed above. An automatic hyperparameter search is carried out to select, on a \emph{per-dataset basis}, the following hyperparameters:

\begin{itemize}
    \item Trend: either none or additive;
    \item Seasonality: either none or additive.
\end{itemize}

For robustness across a wide variety of datasets, the error term is always considered additive. As with the \textsc{arima} results, since ETS is univariate, independent fitting and predictive simulations of the model are carried out for each time series.

\subsection{Hyperparameter Search Protocol} \label{app:hyper_search}

For each model and each dataset under consideration, we ran a hyperparameter search to find the best hyperparameters, which were then used when comparing the forecasting quality of the models.
We opted to perform such a search ourselves instead of selecting previously published hyperparameter combinations for two reasons: 1) some datasets that we consider had not been considered when evaluating some methods we compare to, and 2) to maximize the fairness of the comparison by performing an equally extensive search for each model.

In each dataset, the final subset of the time steps is reserved for the backtesting procedure, which will be described in \cref{app:backtesting}.
From the remaining time steps, a window at the end of length equals to 7 times the prediction length is reserved to serve as the validation set, which is used to compute the metrics used to select the best hyperparameters.
All the remaining data, i.e., what comes before the validation set, is used as the training set.
See \cref{fig:backtesting} for a visual representation of this split.

For each model, we considered 50 hyperparameter combinations, randomly selected amongst a range of values defined for each hyperparameter of the model.
The models are trained 5 times for each combination of hyperparameters, using random initializations and data sampling orders.
For each of these training runs, we compute forecasts using 100 samples for each prediction window in the validation set and average their CRPS-Sum.

We then selected the best hyperparameter combination as being the one where none of the 5 training runs failed due to numerical or memory errors and for which the worst CRPS-Sum value amongst the 5 training runs was the lowest.
We considered the worst of the 5 runs instead of the mean or median since we observed that, for some architectures, some hyperparameters led to unstable training, which resulted in significant variability in the quality of the results.
The rationale for avoiding such hyperparameter combinations is that such instability would be undesirable in real-world applications.

\subsection{Hyperparameter Ranges}\label{app:hp-ranges}

In this section, we list the values considered for each hyperparameter of each method.
The ranges considered for the baselines are inspired by those published in their respective papers and implementations.

\cref{table:hyper-tactis} shows the hyperparameters for the \tactistt{} model.

\begin{table}[h!]
\centering
\caption{Possible hyperparameters for \tactistt{}. \besthp{e}, \besthp{f}, \besthp{k}, \besthp{s}, and \besthp{t} respectively indicate the optimal hyperparameters for \texttt{electricity}, \texttt{fred-md}, \texttt{kdd-cup},  \texttt{solar-10min}, and \texttt{traffic}. \textsuperscript{*}\! The values shown for the Decoder MLP hidden dimensions are the results of our hyperparameters search, not those used in backtesting (see \cref{app:tactis_backtest}).}
\footnotesize\renewcommand{\arraystretch}{1.25}%
\begin{tabular}{@{}r l l@{}}
 \toprule
 & Hyperparameter & Possible values \\
 \midrule
 Model    & Encoder transformer embedding size (per head) and feed forward network size & $8$\besthp{k}, $16$\besthp{f}, $24$\besthp{est} \\
          & Encoder transformer number of heads & $1$\besthp{fks}, $2$\besthp{et}, $3$ \\
          & Encoder number of transformer layers pairs & $1$\besthp{s}, $2$\besthp{ef}, $3$\besthp{kt} \\
          & Encoder time series embedding dimensions & $5$\besthp{efkst} \\
          & Decoder MLP number of layers & $1$\besthp{ekst}, $2$\besthp{f}, $4$ \\
          & Decoder MLP hidden dimensions\textsuperscript{*} & $8$\besthp{eft}, $16$, $24$\besthp{ks} \\
          & Decoder transformer number of heads & $1$, $2$, $3$\besthp{efkst} \\
          & Decoder transformer embedding size (per head) & $8$\besthp{efst}, $16$\besthp{k}, $24$ \\
          & Decoder number transformer layers & $1$\besthp{ef}, $2$\besthp{s}, $3$\besthp{kt} \\
          & Decoder number of bins in conditional distribution & $20$\besthp{efst}, $50$\besthp{k}, $100$ \\
          & Decoder DSF number of layers & $2$\besthp{fks}, $3$\besthp{et} \\
          & Decoder DSF hidden dimensions& $8$\besthp{ft}, $16$\besthp{eks} \\
          & Dropout & $0$\besthp{fkst}, $0.1$, $0.01$\besthp{e} \\
 Data     & Normalization & Standardization\besthp{efkst} \\
          & History length to prediction length ratio & $1$\besthp{ft}, $2$\besthp{ks}, $3$\besthp{e} \\
 Training & Optimizer & RMSprop\besthp{efkst} \\
          & Learning rate & $(10^{-4})$\besthp{s}, $(10^{-3})$\besthp{efkt}, $(10^{-2})$ \\
          & Weight decay & $0$\besthp{et}, $(10^{-5})$\besthp{ks}, $(10^{-4})$\besthp{f}, $(10^{-3})$ \\
          & Gradient clipping & $(10^3)$\besthp{efks}, $(10^4)$\besthp{t} \\
 \bottomrule
\end{tabular}
\label{table:hyper-tactis}
\end{table}

\cref{table:hyper-tempflow} shows the hyperparameters for the TempFlow model.
Parameters in \texttt{teletype font} refer to parameters in the authors implementation in PyTorchTS \citep{Rasul_PyTorchTS}, with all other parameters left at their default value.
The choice of possible hyperparameters for the TempFlow model has been discussed with its authors \citep{rasul_private}.

\begin{table}[h!]
\centering
\caption{Possible hyperparameters for TempFlow. \besthp{e}, \besthp{f}, \besthp{k}, \besthp{s}, and \besthp{t} respectively indicate the optimal hyperparameters for \texttt{electricity}, \texttt{fred-md}, \texttt{kdd-cup},  \texttt{solar-10min}, and \texttt{traffic}.}
\footnotesize\renewcommand{\arraystretch}{1.25}%
\begin{tabular}{@{}r l l@{}}
 \toprule
 & Hyperparameter & Possible values \\
 \midrule
 Model    & \texttt{d\_model} & $16$, $32$\besthp{k}, $64$\besthp{es}, $128$\besthp{t}, $256$\besthp{f} \\
          & \texttt{dim\_feedforward\_scale} & $1$\besthp{e}, $2$, $4$\besthp{fkst} \\
          & \texttt{num\_heads} & $1$\besthp{f}, $2$\besthp{s}, $4$\besthp{t}, $8$\besthp{ek} \\
          & \texttt{num\_encoder\_layers} & $1$\besthp{t}, $3$\besthp{k}, $5$\besthp{efs} \\
          & \texttt{num\_decoder\_layers} & $1$\besthp{et}, $3$\besthp{f}, $5$\besthp{ks} \\
          & \texttt{dropout\_rate} & $0$\besthp{est}, $0.01$\besthp{k}, $0.1$\besthp{f} \\
          & \texttt{flow\_type} & \texttt{"RealNVP"}\besthp{ekst}, \texttt{"MAF"}\besthp{f} \\
          & \texttt{n\_blocks} & $1$\besthp{f}, $3$, $5$\besthp{ekst} \\
          & \texttt{hidden\_size} & $8$\besthp{t}, $16$\besthp{f}, $32$\besthp{s}, $64$\besthp{k}, $128$\besthp{e} \\
          & \texttt{n\_hidden} & $1$, $2$\besthp{fk}, $4$\besthp{est} \\
          & \texttt{conditioning\_length} & $8$, $16$\besthp{fs}, $32$\besthp{t}, $64$\besthp{k}, $128$\besthp{e} \\
          & \texttt{dequantize} & \texttt{False}\besthp{ekst}, \texttt{True}\besthp{f} \\
 Data     & History length to prediction length ratio & $1$\besthp{efk}, $2$\besthp{s}, $3$\besthp{t} \\
 Training & Optimizer & Adam\besthp{efkst} \\
          & Learning rate & $(10^{-4})$\besthp{fs}, $(10^{-3})$\besthp{et}, $(10^{-2})$\besthp{k} \\
          & Gradient clipping & $(10^1)$\besthp{kst}, $(10^2)$\besthp{ef}, $(10^4)$ \\
 \bottomrule
\end{tabular}
\label{table:hyper-tempflow}
\end{table}

\cref{table:hyper-timegrad} shows the hyperparameters for the TimeGrad model.
Parameters in \texttt{teletype font} refer to parameters in the authors implementation in PyTorchTS \citep{Rasul_PyTorchTS}, with all other parameters left at their default value.

\begin{table}[h!]
\centering
\caption{Possible hyperparameters for TimeGrad. \besthp{e}, \besthp{f}, \besthp{k}, \besthp{s}, and \besthp{t} respectively indicate the optimal hyperparameters for \texttt{electricity}, \texttt{fred-md}, \texttt{kdd-cup},  \texttt{solar-10min}, and \texttt{traffic}.}
\footnotesize\renewcommand{\arraystretch}{1.25}%
\begin{tabular}{@{}r l l@{}}
 \toprule
 & Hyperparameter & Possible values \\
 \midrule
 Model    & \texttt{num\_layers} & $1$, $2$, $3$\besthp{efkst} \\
          & \texttt{num\_cells} & $20$\besthp{fkt}, $40$\besthp{s}, $60$\besthp{e} \\
          & \texttt{dropout\_rate} & $0$\besthp{fs}, $0.01$\besthp{e}, $0.1$\besthp{kt} \\
          & \texttt{diff\_steps} & $25$, $50$, $100$\besthp{efkst} \\
          & \texttt{beta\_schedule} & \texttt{"linear"}\besthp{fk}, \texttt{"quad"}\besthp{est} \\
          & \texttt{residual\_layers} & $4$\besthp{e}, $8$\besthp{t}, $16$\besthp{fks} \\
          & \texttt{residual\_channels} & $4$\besthp{fks}, $8$, $16$\besthp{et} \\
          & \texttt{scaling} & \texttt{False}\besthp{t}, \texttt{True}\besthp{efks} \\
 Data     & History length to prediction length ratio & $1$\besthp{ft}, $2$\besthp{ks}, $3$\besthp{e} \\
 Training & Optimizer & Adam\besthp{efkst} \\
          & Learning rate & $(10^{-4})$\besthp{ft}, $(10^{-3})$\besthp{e}, $(10^{-2})$\besthp{ks} \\
          & Gradient clipping & $(10^1)$, $(10^2)$\besthp{fks}, $(10^4)$\besthp{et} \\
 \bottomrule
\end{tabular}
\label{table:hyper-timegrad}
\end{table}

\cref{table:hyper-gpvar} shows the hyperparameters for the GPVar model.
Parameters in \texttt{teletype font} refer to parameters in the authors implementation in GluonTS \citep{gluonts_jmlr}, with all other parameters left at their default value.

\begin{table}[h!]
\centering
\caption{Possible hyperparameters for GPVar. \besthp{e}, \besthp{f}, \besthp{k}, \besthp{s}, and \besthp{t} respectively indicate the optimal hyperparameters for \texttt{electricity}, \texttt{fred-md}, \texttt{kdd-cup},  \texttt{solar-10min}, and \texttt{traffic}.}
\footnotesize\renewcommand{\arraystretch}{1.25}%
\begin{tabular}{@{}r l l@{}}
 \toprule
 & Hyperparameter & Possible values \\
 \midrule
 Model    & \texttt{num\_layers} & $1$\besthp{e}, $2$\besthp{k}, $4$\besthp{fst} \\
          & \texttt{num\_cells} & $8$\besthp{t}, $16$\besthp{fs}, $24$\besthp{k}, $32$, $40$\besthp{e} \\
          & \texttt{cell\_type} & \texttt{"lstm"}\besthp{fks}, \texttt{"gru"}\besthp{et} \\
          & \texttt{use\_marginal\_transformation} & \texttt{True}\besthp{efkst}, \texttt{False} \\
          & \texttt{rank} & $8$\besthp{f}, $16$\besthp{ek}, $24$\besthp{st} \\
          & \texttt{dropout\_rate} & $0$\besthp{ft}, $0.01$\besthp{e}, $0.1$\besthp{ks} \\
 Data     & History length to prediction length ratio & $1$, $2$\besthp{st}, $3$\besthp{efk} \\
 Training & Optimizer & Adam\besthp{efkst} \\
          & Learning rate & $(10^{-4})$\besthp{eft}, $(10^{-3})$\besthp{ks}, $(10^{-2})$ \\
          & Weight decay & $0$\besthp{k}, $(10^{-8})$\besthp{ft}, $(10^{-5})$, $(10^{-4})$, $(10^{-3})$\besthp{es} \\
          & Gradient clipping & $10$\besthp{efkst} \\
 \bottomrule
\end{tabular}
\label{table:hyper-gpvar}
\end{table}

\subsection{Backtesting Protocol} \label{app:backtesting}

We evaluate the performance of all models using a backtesting procedure that mimics how they would be applied to real-world problems.
In practice, due to the high cost of hyperparameter search, it would be done seldomly.
Model training with fixed hyperparameters is less expensive and would be done periodically.
Finally, forecasting using a pre-trained model is cheap and would be done as needed.
Therefore, in our framework, we conduct a single hyperparameter search, retrain the model multiple times (periodically), and calculate forecasts at multiple time stamps between retrainings.

\begin{figure}
    \centering
    \includegraphics[width=0.85\textwidth]{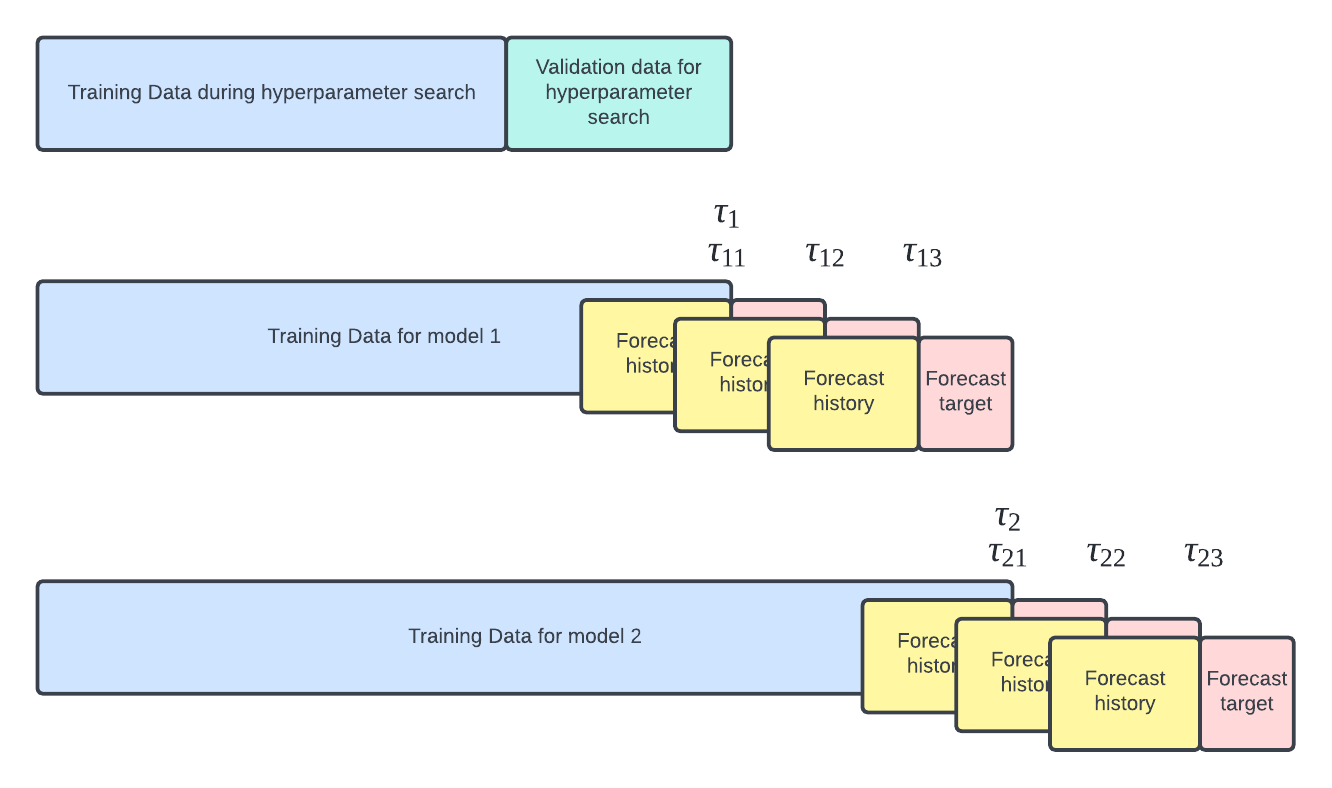}
    \caption{A visual representation of how the data is split during hyperparameter search and backtesting with $n_B = 2$ and $n_F = 3$. During backtesting, the models are trained at times $\tau_i$, and are used for forecasting at times $\tau_{ij}$ (with $\tau_i = \tau_{i1}$).}
    \label{fig:backtesting}
\end{figure}

For each dataset, we define $n_B$ backtesting timestamps $\tau_1$, $\tau_2$, \ldots, $\tau_{n_B}$.
For each backtesting timestamps $\tau_i$, we further define multiple forecasting timestamps $\tau_{i1}$, $\tau_{i2}$, \ldots, $\tau_{in_F}$.
The hyperparameter search described in \cref{app:hyper_search} is done using only data before $\tau_1$, while the $i$-th model is trained using only the data before $\tau_i$.
A visual representation of the split between the data used for model training, the historical data used for the forecast, and the target data to which forecasts are compared is shown in \cref{fig:backtesting}.

We use $n_B = 6$ for all datasets.
For \texttt{electricity}, we select Mondays at midnight as training time $\tau_{i}$, with one week between each, and estimate forecasts every 24 hours between them.
For \texttt{fred-md}, we select Januaries for the $\tau_{i}$, with one year between each, and estimate only a single forecast at the same time.
For \texttt{kdd-cup}, we also select Mondays at midnight for the $\tau_{i}$, with two weeks between each, and estimate forecasts every 48 hours between them.
For \texttt{solar-10min}, we also select Mondays at midnight for the $\tau_{i}$, with one week between each, and estimate forecasts every 12 hours between them.
For \texttt{traffic}, we also select Mondays at midnight for the $\tau_{i}$, with one week between each, and estimate forecasts every 24 hours between them.
The last backtesting timestamp $\tau_6$ for each dataset is selected as the last possible timestamp that follows the respective criterion while still having enough data after it for the full prediction range.

The training is done as explained in \cref{app:training}, except that we do not perform early stopping using a validation set.
Instead, we use the maximum number of epochs that were performed using the best-performing hyperparameters in the hyperparameter search.

\subsubsection{Exceptions for \tactistt{}} \label{app:tactis_backtest}
In experiments conducted using toy datasets, such as small subsets of our main datasets, we found that \tactistt{} had complex learning dynamics, which can be broken down into three stages.
At first, it quickly improves the quality of the marginal distributions while pushing the attentional copula towards the trivial copula (uniform distribution on the unit cube), where all values are independent.
Then, the loss tends to plateau, with no improvement in forecast quality for multiple epochs.
Finally, it moves on to learning a non-trivial attentional copula, and its loss improves again.

While these three phases are not as clear-cut in the main datasets as in the toy datasets, the plateau effect in the second learning phase could cause early stopping to be too aggressive and prevent \tactistt{} from learning a non-trivial copula.
Therefore, when training \tactistt{} in the backtesting procedure, we removed the maximum number of epochs, keeping only the 3 day maximum training duration.
In retrospect, this had minimal impact on the number of epochs performed for all datasets, except for the smallest: \texttt{fred-md}, for which training reached a number of epochs in the order of 7000 during the 3 days limit.

Another important observation that was made using toy datasets is that too small values for the \emph{MLP hidden dimensions} parameter in the decoder would greatly slow the transition from the second to third phases of training.
We thus increased this parameter to 48 for all datasets, ignoring this part of our hyperparameter search.

Given these observations, we believe there is sill work to do in understanding and improving the learning dynamics of \tactis{}. Progress in these directions may lead to significant improvement in the model's empirical performance and the quality of the learned copulas.

\subsection{Metrics and Additional Results} \label{app:metrics}

As mentioned in \cref{sec:results}, we use the CRPS-Sum as our primary metric to select the best hyperparameters and compare the forecasting accuracy of the various models.
The CRPS-Sum is based on the Continuous Ranked Probability Score (CRPS) \citep{matheson1976scoring}, which can be computed for each individual forecasted value for the $i$-th series and the $j$-th timestep as:
\begin{equation}
    \text{CRPS}(X_{ij},x_{ij}) = 
        \expect_{X_{ij}} \left[ \, |X_{ij}-x_{ij}| \, \right] - 
        \frac{1}{2} \expect_{X_{ij},X'_{ij}}\left[ \, |X_{ij}-X'_{ij}| \, \right],
\end{equation}
where $x_{ij}$ is the observed value, and $X_{ij}$ and $X'_{ij}$ are two independent variables from the forecasted sampling process.
The CRPS-Sum is obtained by replacing the individual values by the sum over all series $i$: $s_j = \sum_i x_{ij}$ and $S_j = \sum_i X_{ij}$.
This transformation of the CRPS-Sum allows it to capture the quality of some of the correlations in the forecasts: namely, the average correlation between forecasts at a single time step.
However, the CRPS-Sum metric is inherently univariate and cannot discriminate whether the forecasting method accurately predicts the correlation between different time steps or not.

\paragraph{Computing Standard Errors}

In all our results tables, we compute the standard errors (indicated by the $\pm$) of all metrics using an adjustment to the variance across individual results to account for the autocorrelation that arises from the sequential nature of our backtesting procedure. Specifically, we use a Newey-West correction for standard errors, computed using Bartlett kernel weights \citep{NeweyWest1987,NeweyWest1994}, with 3 lags (since there were only 6 backtesting folds). The automatic bandwidth selection procedure described in \citet{NeweyWest1994} is used, as implemented in the R \texttt{sandwich} package. This correction produces more conservative (i.e. wider) standard errors than those computed under an i.i.d. assumption and, as such, yields a fairer assessment of the methods under study.

\paragraph{CRPS Results} 
The benchmark results for the CRPS-Sum and CRPS are respectively shown in \cref{table:results-CRPS-Sum,table:results-CRPS}.
We used the GluonTS \citep{gluonts_jmlr} implementation of said metrics, which are thus normalized by dividing the CRPS of each individual variable by the mean absolute value of its observed values.

\begin{table}[h!]
\centering
      \caption{CRPS means ($\pm$ standard errors which are autocorrelation-corrected to account for sequential backtesting using the Newey-West (\citeyear{NeweyWest1987,NeweyWest1994}) estimator) for the backtesting benchmark, and average rank of each method across datasets (lower is better for both measures). Best results are in bold.}
\footnotesize\renewcommand{\arraystretch}{1.15}%
    \smallskip
    \setlength{\tabcolsep}{0.45ex}
    \hspace*{-0.5ex}%
    \begin{tabular}{@{}rcccccc@{}}
     \toprule
      \multicolumn{1}{@{}c}{\textbf{Model}} & \multicolumn{1}{c}{\texttt{electricity}} & \multicolumn{1}{c}{\texttt{fred-md}} & \multicolumn{1}{c}{\texttt{kdd-cup}} & \multicolumn{1}{c}{\texttt{solar-10min}} & \multicolumn{1}{c}{\texttt{traffic}} & \multicolumn{1}{@{}c@{}}{\textbf{Avg. Rank}} \\
 \midrule
  Auto-\textsc{arima}   
               & $0.129 \pm 0.015$    & $0.052 \pm 0.005$    & $0.477 \pm 0.015$    & $0.636 \pm 0.060$    & $0.310 \pm 0.004$    & $4.1 \pm 0.3$        \\
  ETS          & $0.094 \pm 0.014$    & $0.050 \pm 0.011$    & $0.560 \pm 0.028$    & $0.844 \pm 0.119$    & $0.437 \pm 0.012$    & $4.9 \pm 0.2$        \\
  TempFlow     & $0.109 \pm 0.024$    & $0.110 \pm 0.003$    & $0.451 \pm 0.005$    & $0.547 \pm 0.036$    & $0.320 \pm 0.015$    & $4.1 \pm 0.3$        \\
  TimeGrad     & $0.101 \pm 0.027$    & $0.142 \pm 0.058$    & $0.495 \pm 0.023$    & $0.560 \pm 0.047$    & $0.217 \pm 0.015$    & $3.7 \pm 0.3$        \\
  GPVar        & $0.067 \pm 0.010$    & $0.086 \pm 0.009$    & $0.459 \pm 0.009$    & $\B{0.298 \pm 0.034}$& $0.213 \pm 0.009$    & $2.9 \pm 0.1$        \\
  \tactistt{}  & $\B{0.052 \pm 0.006}$& $\B{0.048 \pm 0.010}$& $\B{0.420 \pm 0.007}$& $0.326 \pm 0.049$    & $\B{0.161 \pm 0.009}$& $\B{1.4 \pm 0.1}$    \\
 \bottomrule
\end{tabular}
\label{table:results-CRPS}
\end{table}

\paragraph{Energy Score} 
An alternative metric is the energy score \citep{gneiting2007strictly}.
Like the CRPS and CRPS-Sum metrics, it is a proper scoring method; thus, a perfect forecast would minimize it.
Unlike the CRPS and CRPS-Sum metrics, it is sensitive to the correlations between all forecasted variables, thus allowing a good multivariate forecasting technique to shine compared to one that forecasts each series independently.
The energy score is defined as:
\begin{equation}\label{eq:energy-score}
    \text{energy}(\left\{X,x\right\}) = \expect_{X} \| X - x \|_F^\beta - \frac{1}{2} \expect_{X,X'} \| X - X' \|_F^\beta,
\end{equation}
where $x$ is the observed value in matrix form, $X$ and $X'$ are two independent variables from the forecasted sampling process in matrix forms, $\beta$ is a parameter set to 1, and $\|\cdot\|_F$ is the Frobenius matrix norm.

The benchmark results for the energy score are shown in \cref{table:results-energy}.
Unlike the CRPS, which is normalized, the energy scores we report are not, which explains the orders of magnitude differences between the datasets.
However, it should be noted that these results were obtained from models with hyperparameters selected to minimize the CRPS-Sum and not the energy score.
It is thus likely that some hyperparameters are not optimal in terms of energy score, so these results may not be the best that the models can do.

\begin{table}[h!]
\centering
      \caption{Energy score means ($\pm$ standard errors which are autocorrelation-corrected to account for sequential backtesting using the Newey-West (\citeyear{NeweyWest1987,NeweyWest1994}) estimator) for the backtesting benchmark, and average rank of each method across datasets (lower is better for both measures). Best results are in bold.}
\footnotesize\renewcommand{\arraystretch}{1.15}%
    \smallskip
    \setlength{\tabcolsep}{1.45ex}
    \begin{tabular}{@{}rrrrrrr@{}}
     \toprule
      \multicolumn{1}{@{}c}{\textbf{Model}} & \multicolumn{1}{c}{\texttt{electricity}} & \multicolumn{1}{c}{\texttt{fred-md}} & \multicolumn{1}{c}{\texttt{kdd-cup}} & \multicolumn{1}{c}{\texttt{solar-10min}} & \multicolumn{1}{c}{\texttt{traffic}} & \multicolumn{1}{@{}c@{}}{\textbf{Avg. Rank}} \\
               & $\times 10^4$        & $\times 10^5$        & $\times 10^3$        & $\times 10^2$        & $\times 10^0$           & --- \\
 \midrule
  Auto-\textsc{arima}   
               & $44.59 \pm 8.56$     & $8.72 \pm 0.81$      & $18.76 \pm 3.31$     & $19.42 \pm 3.37$     & $4.10 \pm 0.05$      & $4.9 \pm 0.2$        \\
  ETS          & $7.94 \pm 0.93$      & $\B{7.90 \pm 1.88}$  & $3.60 \pm 0.24$      & $4.74 \pm 0.17$      & $4.98 \pm 0.07$      & $4.1 \pm 0.2$        \\
  TempFlow     & $10.25 \pm 2.03$     & $20.16 \pm 0.74$     & $3.30 \pm 0.28$      & $4.25 \pm 0.16$      & $4.59 \pm 0.25$      & $4.3 \pm 0.4$        \\
  TimeGrad     & $9.69 \pm 2.62$      & $19.87 \pm 7.23$     & $3.30 \pm 0.19$      & $4.31 \pm 0.23$      & $3.38 \pm 0.11$      & $3.2 \pm 0.4$        \\
  GPVar        & $6.80 \pm 0.62$      & $11.43 \pm 1.60$     & $3.18 \pm 0.20$      & $\B{2.60 \pm 0.10}$  & $3.57 \pm 0.10$      & $2.6 \pm 0.2$        \\
  \tactistt{}  & $\B{5.42 \pm 0.57}$  & $8.18 \pm 1.83$      & $\B{2.93 \pm 0.22}$  & $2.88 \pm 0.23$      & $\B{3.10 \pm 0.13}$  & $\B{1.8 \pm 0.4}$    \\
 \bottomrule
\end{tabular}\label{table:results-energy}
\end{table}

\section{Additional Experiments}\label{app:additional-experiments}

\subsection{Can Attentional Copulas Recover a Ground-Truth Copula?}\label{app:gtcopula}

In this experiment, we evaluate the density estimation abilities of the \tactis{} decoder.
That is, we evaluate its ability to correctly recover the copula and the marginal distributions that underlie a joint distribution.
Doesn't the result in \cref{thm:copula} already guarantee this?
No, \cref{thm:copula} guarantees that, at convergence to a minimum of \cref{eq:loss}, the resulting attentional copula ($c_{\phi^{\pi}_c}$) will be a valid copula, but it does not tell us if this setting is reachable given finite amounts of data, capacity, and training time.
Below, we show that a valid copula can be learned in practical conditions.

\paragraphtight{Experimental setup} For simplicity and ease of visualization, we focus on a simple bivariate joint density estimation problem. We define a bivariate data distribution with known marginal distributions and copula structure. We then train the \tactis{} decoder using samples from this distribution and evaluate its ability to recover the ground truth marginals and copula.

\paragraphtight{Data generation}
We define the joint dependency structure of the variables using a uniformly weighted mixture of two Clayton copulas with parameter $\theta$ equals to $14.75$ and $-0.85$, respectively. This leads to a complex x-shaped copula density (\cref{fig:app-gtcopula-data}b).
As for the marginal distributions, we use Chi-squared distributions and let $X_1 \sim \chi^2\!(5)$ and $X_2 \sim \chi^2\!(10)$, respectively (\cref{fig:app-gtcopula-data}cd).

\begin{figure}
    \centering
    \subcaptionbox{}{\includegraphics[trim=0 0 425 0, clip, scale=0.68]{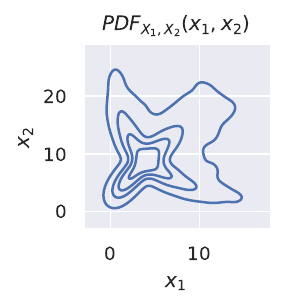}}
    \subcaptionbox{}{\includegraphics[trim=145 0 283 0, clip, scale=0.68]{figures/ground_truth_copula.pdf}}
    \subcaptionbox{}{\includegraphics[trim=285 0 145 0, clip, scale=0.68]{figures/ground_truth_copula.pdf}}
    \subcaptionbox{}{\includegraphics[trim=425 0 0 0, clip, scale=0.68]{figures/ground_truth_copula.pdf}}
    \caption{Overview of the simulated dataset: a) contour plot of the joint PDF of $X_1$ and $X_2$, b) contour plot of the ground truth copula's PDF, c-d) marginal PDF of $X_1$ and $X_2$, respectively. All plots are based on $5000$ samples.}
    \label{fig:app-gtcopula-data}
\end{figure}

\paragraphtight{Model} We isolate the density estimation components of \tactis{}, i.e., those tasked with learning the marginal distributions and the copula. As such, we replace the encoder with a simple set of embeddings (one for each variable in the distribution) and use these embeddings as the $\Zcal^{(m)}$ input to the \tactis{} decoder (see \cref{sec:tactis-pseudocopula}). Since we focus on an unconditional density estimation task, we let the $\Zcal^{(o)}$ input to the decoder be the empty set.
We then train the model by maximum likelihood, using the permutation-based loss in \cref{eq:loss}.
The full configuration of the model is given in \cref{table:hyper-gtcopula}.

\begin{table}[h!]
\centering
\caption{Hyperparameters for \tactis{} for the experiments in \cref{app:gtcopula}.}
\footnotesize\renewcommand{\arraystretch}{1.25}%
\begin{tabular}{@{}r l l@{}}
 \toprule
 & Hyperparameter & Selected values \\
 \midrule
 Model    & Encoder variable embedding dimensions & $3$ \\
          & Decoder MLP number of layers & $2$ \\
          & Decoder MLP hidden dimensions & $30$ \\
          & Decoder transformer number of heads & $1$ \\
          & Decoder transformer embedding size (per head) & $8$ \\
          & Decoder number transformer layers & $2$ \\
          & Decoder number of bins in conditional distribution & $30$ \\
          & Decoder DSF number of layers & $2$ \\
          & Decoder DSF hidden dimensions& $8$ \\
          & Dropout & $0$ \\
 Training & Optimizer & RMSprop \\
          & Learning rate & $(10^{-3})$ \\
          & Batch size & $128$ \\
          & Weight decay & $0$ \\
          & Gradient clipping & none \\
 \bottomrule
\end{tabular}
\label{table:hyper-gtcopula}
\end{table}

\paragraphtight{Results} Our desiderata is that 1) the learned copula and marginal densities closely match those shown in \cref{fig:app-gtcopula-data}b and \cref{fig:app-gtcopula-data}cd, respectively and 2) that the learned copula be valid (see \cref{sec:bg-copula}), i.e., a distribution on the unit cube with $U_{[0, 1]}$ marginals.
As shown in \cref{fig:app-gt-copula-learned}, the learned copula density (a) and the marginal distributions (b) closely approximate the ground truth.
Further, as shown in \cref{fig:app-gtcopula-uniform-marginals}, the marginal distributions of the copula are indistinguishable from the uniform distribution.
Hence, we conclude that 1) the \tactis{} decoder successfully recovered the components of the data distribution and 2) that the attentional copula successfully converged to a valid copula, and this, even in a setting where data, capacity, and training time were limited.

\begin{figure}
    \centering
    \subcaptionbox{Copula density}
    {\includegraphics[scale=0.92]{figures/gt_copula_learned_density_permTrue.pdf}}
    \subcaptionbox{Marginal PDFs}
    {\includegraphics[scale=0.65]{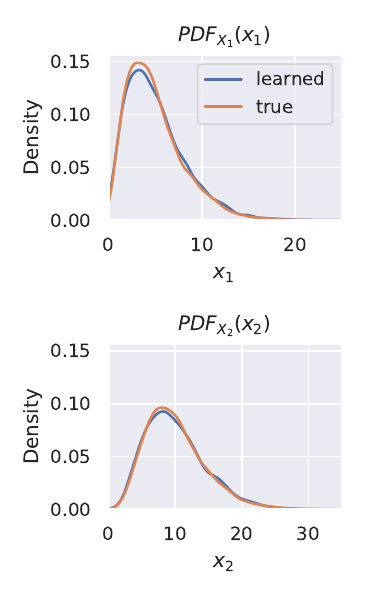}}
    \caption{The \tactis{} decoder successfully recovers the ground truth components of the distribution: a) the learned copula density (white contours) closely matches the ground truth density (heatmap); b) the learned marginals (blue) closely match the ground truth (orange). All plots are based on $5000$ samples.}
    \label{fig:app-gt-copula-learned}
\end{figure}

\begin{figure}
    \centering
    \includegraphics[width=0.5\textwidth]{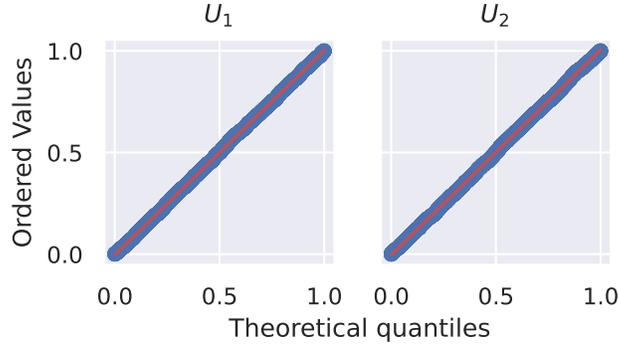}
    \caption{The empirical marginal densities of the learned copula are indistinguishable from that of the $U_{[0, 1]}$ distribution, as shown by these Q-Q plots w.r.t. $U_{[0, 1]}$. All plots are based on $5000$ samples.}
    \label{fig:app-gtcopula-uniform-marginals}
\end{figure}

\paragraphtight{Additional remarks} While conducting these experiments, which allow for easy visualization, we realized that initialization seems to play an important role in the model. In fact, at initialization, the bins in the categorical distribution used in the copula have uniform weights, resulting in a trivial, but valid, copula with $U_{[0, 1]}$ marginals. While this does not correctly capture the dependency between the variables, it enables the model to concentrate on fitting the marginals very precisely. As such, the model sometimes gets stuck in this regime and fails to learn a non-trivial copula.
In these experiments, we circumvented this issue by conducting each experiment with various initializations (via the random seed).
As for our main results, the forecasting benchmark experiments, we ensured that this issue did not arise by verifying that a non-zero correlation existed between some variables in the learned distribution (see \cref{app:ablation}).
Nonetheless, a thorough study of the learning dynamics of \tactis{} is an interesting future direction, which ultimately might help to prevent this pitfall.

\subsection{Can \tactis{} Learn to Interpolate?}
\label{app:interpolation}

In this section, we evaluate the ability of \tactis{} to perform interpolation, i.e., estimating the distribution of missing values for a gap \emph{within} a time series.

\paragraphtight{Data generation} In this experiment, we consider a univariate\footnote{We could also have considered a multivariate setting, but we limit the experiment to a univariate one for simplicity.} time series generated according to the following stochastic volatility process~\citep{kim1998stochastic}:
\begin{align}
    y_t \mid h_t & ~~\sim~~ \mathcal{N}\left(0, \exp~h_t \right)\\
    h_t \mid h_{t-1}, \mu, \phi, \sigma & ~~\sim~~ \mathcal{N}\left( \mu + \phi(h_{t-1} - \mu), \sigma^2 \right)\\
    h_0 \mid \mu, \phi, \sigma & ~~\sim~~ \mathcal{N}\left(\mu, \sigma^2 / (1 - \phi^2)\right),
\end{align}
with a log-variance of level $\mu=-9$, persistence $\phi=0.99$, and volatility $\sigma=0.04$.
The $h_t$ variables are unobserved and correspond to a latent time-varying volatility process.
Our quantity of interest is $x_t \eqdef x_{t-1} + y_t$, where $y_t$ is a stochastic increment incurred at each time step.

\emph{Training data.} We use this process to generate a univariate time series $\xb^{\text{train}}$ of length $10,000$, with $x^{\text{train}}_{1} = 1$.
This time series is used to train \tactis{}.

\emph{Evaluation data.} We then generate 1000 more univariate time series, denoted $\xb^\text{test}_1, \ldots \xb^\text{test}_{1000}$, which correspond to possible trajectories of length $500$ starting at the end of $\xb^\text{train}$.
For each $\xb^\text{test}_i$, we create a gap of missing values of length $25$ at an arbitrary time point.
We then use the Markov Chain Monte-Carlo sampling capabilities of the Stan probabilistic programming language~\citep{Stan} to sample possible interpolation trajectories from the posterior distribution of missing values.
Such trajectories form a ground truth for the interpolation task associated with $\xb^\text{test}_i$.

\paragraphtight{Training protocol} We train \tactis{} using batches of $64$ windows of length $125$ drawn randomly from the training sequence $\xb^\text{train}$.
For each window, we mask the values of the center $25$ time points and let the other time points ($50$ on each side) be observed. The model then learns to estimate the distribution of the $25$ masked values given the $100$ observed ones. The hyperparameter values used in this experiment are reported in \cref{table:tactis-interp-parameters}.

\paragraphtight{Evaluation protocol} For each of the evaluation time series $\xb^\text{test}_i$, we extract a region of length $125$ centered on the gap of missing values.\footnote{We keep only the testing tasks where such sampling was within the bounds of the time series, resulting in $803$ valid tasks $\xb^\text{test}_i$.} We then use \tactis{} to draw samples from the estimated distribution of missing values.
Then, the energy score (see \cref{eq:energy-score}) is used to compare the sampled trajectories to \emph{each} of the ground truth trajectories for $\xb^\text{test}_i$.
This results in a distribution of energy scores.

For comparison, we repeat the same process and obtain a distribution of energy scores for the following baselines:
\begin{itemize}
    \item Oracle: an oracle model that produces samples by randomly drawing from the ground truth trajectories for $\xb^\text{test}_i$.
    \item Dummy: a simple baseline that performs a linear interpolation between the points directly before and after the gap of missing values. This baseline is deterministic, i.e., all its samples are identical.
\end{itemize}
We then use the Wasserstein distance to compare the distribution of scores obtained for \tactis{} and for the Dummy baseline to that of the Oracle, for each testing series $\xb^\text{test}_i$.

\paragraphtight{Results and conclusions}

\cref{fig:interpolation-wd-to-oracle} summarizes the Wasserstein distances obtained for each of the testing series.
Clearly, the distributions estimated by \tactis{} are closer to the ground truth distributions than those estimated by the Dummy baseline.
This is reflected in their respective Wasserstein distance distributions, which have a mean of $0.0391$ and $0.1459$ for \tactis{} and the Dummy baseline, respectively.
We further assess the quality of the distributions estimated by \tactis{} visually.
\cref{fig:interpolation-good-samples} compares the distribution estimated by \tactis{} for the task ($\xb^\text{test}_i$) where the energy score distribution was closest to that of the Oracle ($\textrm{WD}=0.0009$).
We observe that the shape of the estimated distribution is very close to that of the ground truth. Furthermore, its median is coherent with the values observed directly before and after the gap.
We repeat this analysis for the task where the WD was closest to the mean of the \tactis{} WD distribution ($\textrm{WD}=0.0388$) and show the results in \cref{fig:interpolation-average-samples}.
The same conclusions apply, supporting the fact that \tactis{} generally performs well at this interpolation task.
Finally, we repeat the analysis for the task where the WD was maximal ($\textrm{WD}=1.4578$) and show the results in \cref{fig:interpolation-bad-samples}.
In this case, \tactis{} clearly fails to estimate an upwards trend in the median of the ground truth distribution, leading to a median that is not coherent with the bounds of the gap.
Fortunately, tasks where \tactis{} performs at this level are rare, as supported by the distribution of Wasserstein distances.

In conclusion, the aforementioned results clearly show the ability of \tactis{} to perform accurate probabilistic interpolation. It is thus safe to conclude that \tactis{} is sufficiently flexible to address both forecasting and interpolation tasks.

\begin{figure}
    \centering
    \includegraphics[width=0.5\textwidth]{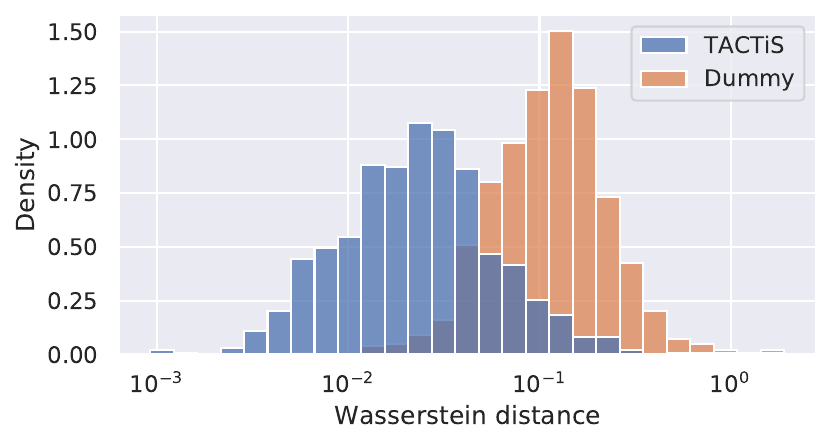}
    \caption{\tactis{} outperforms the Dummy baseline on the considered interpolation tasks, as shown by the distribution of Wasserstein distances between the energy score distributions of each baseline and the Oracle (estimated for our 803 valid testing tasks). The \tactis{} and Dummy distributions have means at $0.0391$ and $0.1459$, respectively. The distances are estimated using 50 samples from each energy score distribution.}\label{fig:interpolation-wd-to-oracle}
\end{figure}

\begin{figure}
    \centering
    \includegraphics[width=0.9\textwidth]{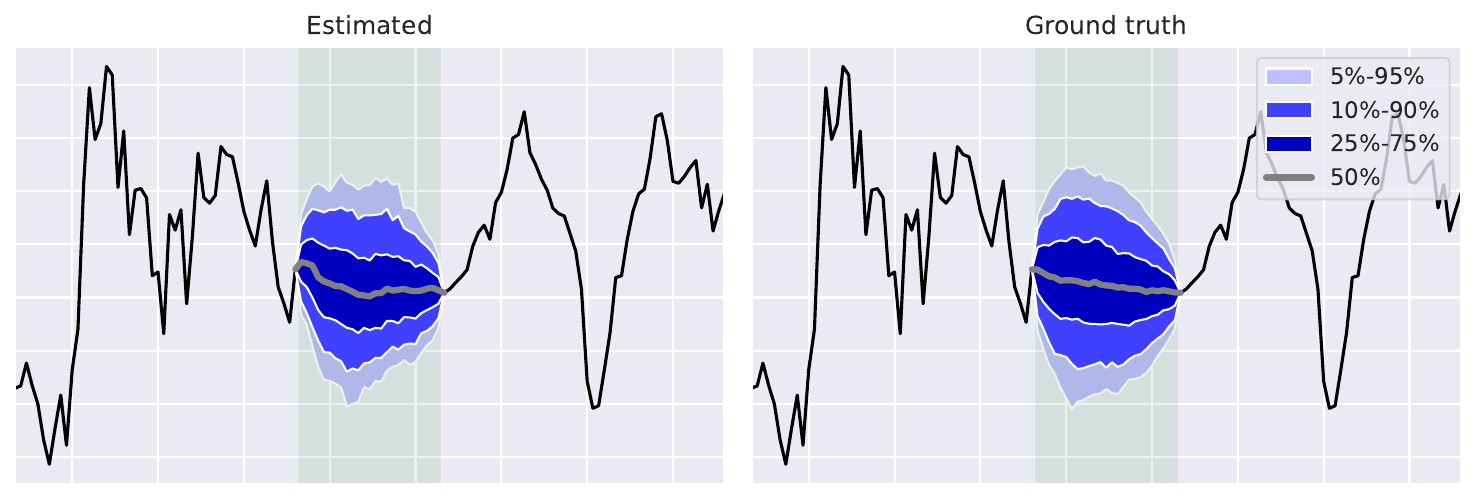}
    \caption{Interpolation - Good performance. Comparison of the interpolation distribution estimated by \tactis{} (left) and the ground truth distribution (right) for the task where \tactis{} most faithfully approximates the Oracle, as measured by the Wasserstein distance between energy score distributions (WD$=0.0009$). The colors correspond to confidence intervals of the distribution. All distributions are estimated using 1000 samples.}
    \label{fig:interpolation-good-samples}
\end{figure}

\begin{figure}
    \centering
    \includegraphics[width=0.9\textwidth]{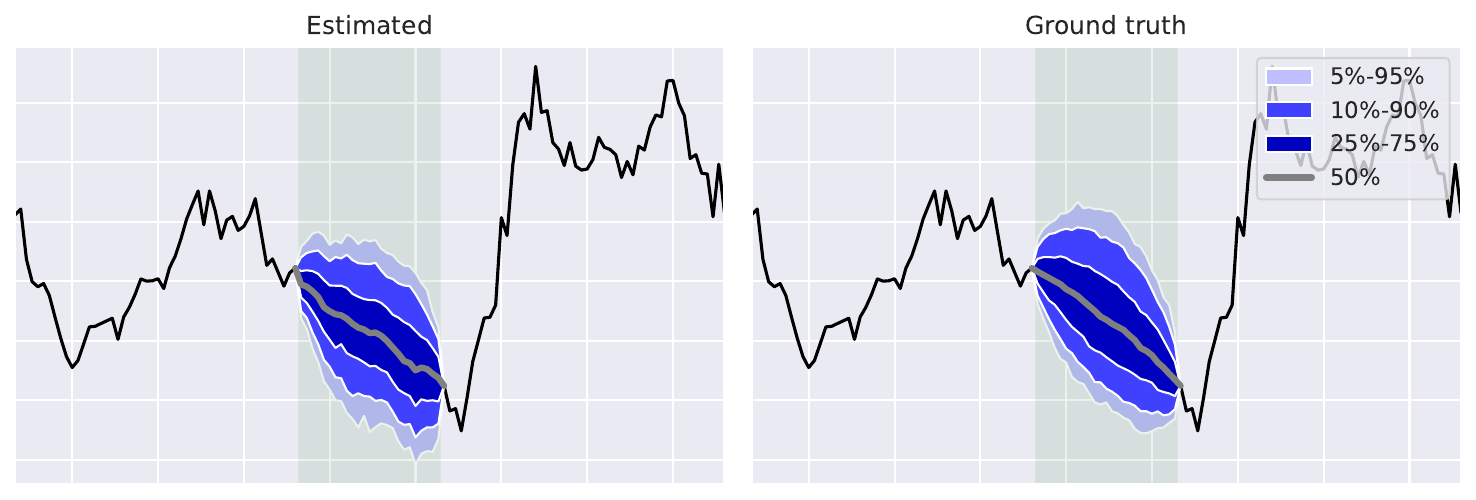}
    \caption{Interpolation - Average performance. Comparison of the interpolation distribution estimated by \tactis{} (left) and the ground truth distribution (right) for a task that is representative of the average performance of \tactis{}, as measured by the Wasserstein distance between energy score distributions (WD$=0.0388$). The colors correspond to confidence intervals of the distribution. All distributions are estimated using 1000 samples.}
    \label{fig:interpolation-average-samples}
\end{figure}

\begin{figure}
    \centering
    \includegraphics[width=0.9\textwidth]{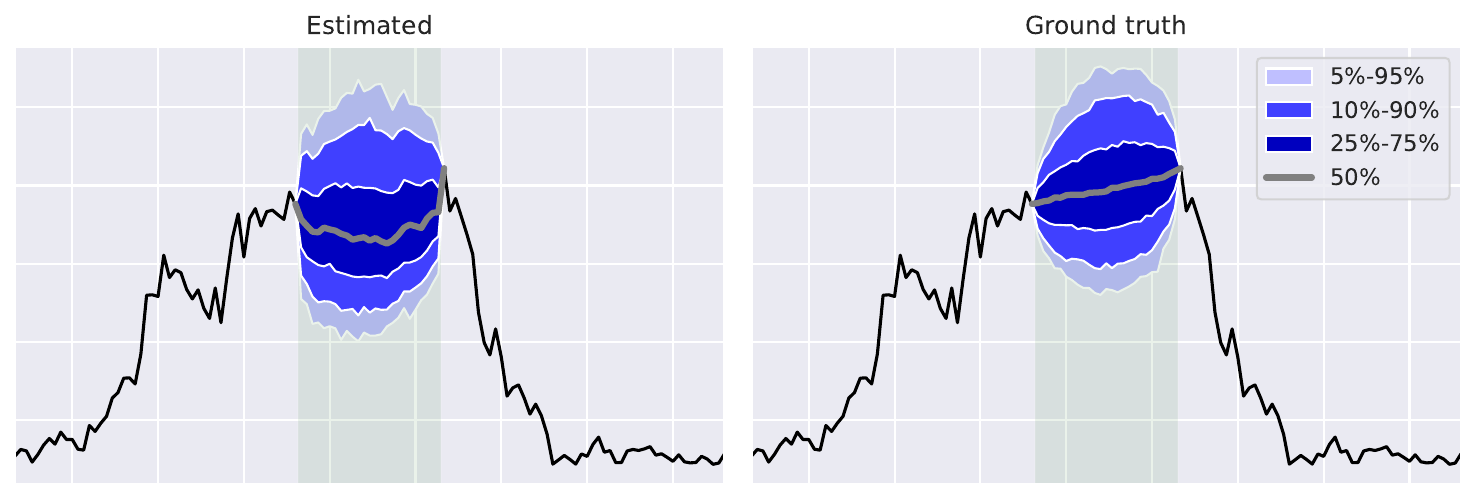}
    \caption{Interpolation - Poor performance. Comparison of the interpolation distribution estimated by \tactis{} (left) and the ground truth distribution (right) for the task where \tactis{} least faithfully approximates the Oracle, as measured by the Wasserstein distance between energy score distributions (WD$=1.4578$). The colors correspond to confidence intervals of the distribution. All distributions are estimated using 1000 samples.}
    \label{fig:interpolation-bad-samples}
\end{figure}

\begin{table}[h!]
    \centering
    \caption{Hyperparameters for \tactis{} for the interpolation experiment.}
    \footnotesize\renewcommand{\arraystretch}{1.25}%
    \begin{tabular}{r l l}
     \toprule
     & Hyperparameter & Selected value \\
     \midrule
     Model    & Encoder transformer embedding size (per head) and feed forward network size & $8$ \\
              & Encoder transformer number of heads & $2$ \\
              & Encoder number of transformer layers & $2$ \\
              & Encoder time series embedding dimensions & $5$ \\
              & Decoder MLP number of layers & $1$ \\
              & Decoder MLP hidden dimensions & $8$ \\
              & Decoder transformer number of heads & $2$ \\
              & Decoder transformer embedding size (per head) & $8$ \\
              & Decoder number transformer layers & $2$ \\
              & Decoder number of bins in conditional distribution & $20$ \\
              & Decoder DSF number of layers & $2$ \\
              & Decoder DSF hidden dimensions & $8$ \\
              & Dropout & $0$ \\
     Data     & Normalization & Standardization \\
              & History length to prediction length ratio (before interpolation range) & $2$ \\
              & History length to prediction length ratio (after interpolation range) & $2$ \\
     Training & Optimizer & RMSprop \\
              & Learning rate & $10^{-3}$ \\
              & Batch size & $64$ \\
              & Weight decay & $10^{-5}$ \\
              & Gradient clipping & $10^3$ \\
     \bottomrule
    \end{tabular}
    \label{table:tactis-interp-parameters}
\end{table}

\subsection{Can \tactis{} Learn from Unaligned/Non-Uniformly Sampled Time Series?}\label{app:unaligned}

The main experiments, presented in \cref{sec:benchmark}, are based on datasets containing only aligned and uniformly-sampled time series.
As such, they cannot determine whether \tactis{} can take advantage of its transformer architecture to support unaligned or non-uniformly sampled time series.
We use this section to bring evidence that \tactis{} is indeed compatible with such time series, using an experiment with a small simulated dataset.

\paragraph{Dataset}
We consider two independent univariate time series, each generated by adding a random walk to a sinusoidal signal, with the second series having a twice-larger random walk amplitude than the first.
From this ground truth, we select a sample of non-uniformly spaced points by selecting one point in each window of 10 time steps in the ground truth, uniformly at random.
These points are revealed to the model and the others are hidden.
To produce unaligned time series, this procedure is performed independently for each series.

\paragraph{Implementation details} To make our implementation of \tactis{} compatible with this setting, we had to use the index of each time point in the ground truth time series to generate the positional encodings. The development of positional encodings better suited for temporal data, e.g., that are generated using a real-valued time stamp, goes beyond the scope of this work. However, any developments could be directly integrated in \tactis{} in a plug-and-play fashion.

\paragraph{Results and conclusions}
\tactis{} was trained to forecast windows of $10$ time points, given a history of the same length.
The hyperparameters used for the model are given in \cref{table:tactis-unaligned-experiment}.
In \cref{fig:unaligned-sine-both}, we show an example forecast produced by \tactis{} on this data.
While this is by no means a thorough study of the performance of \tactis{} in this challenging setting, it does confirm that the model can indeed work with multivariate time series that are unaligned and sampled at various frequencies.

\begin{figure}
    \centering
    \includegraphics[width=0.8\textwidth]{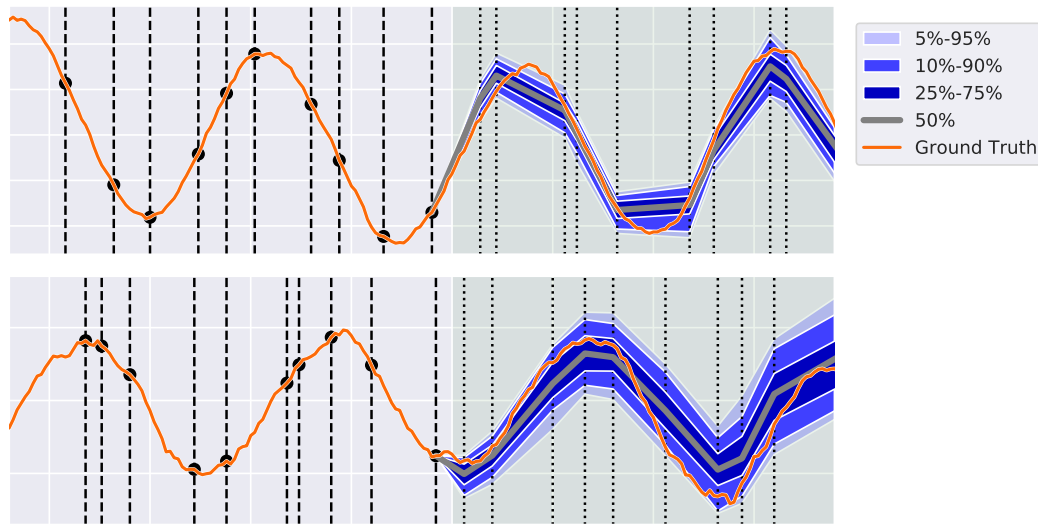}
    \caption{Unaligned and non-uniformly sampled time series. Demonstration of the forecasting ability of \tactistt{} on two independent time series sampled at unaligned and non-uniform time points. The orange line represents the ground truth of the process to be forecasted. The dashed vertical lines and black dots represent the data points given to the model (history), while the dotted lines represent the time points at which the model is asked to produce forecasts.}
    \label{fig:unaligned-sine-both}
\end{figure}

\begin{table}[h!]
    \centering
    \caption{Hyperparameters for \tactis{} for the unaligned/non-uniformly sampled experiment.}
    \footnotesize\renewcommand{\arraystretch}{1.25}%
    \begin{tabular}{r l l}
     \toprule
     & Hyperparameter & Selected value \\
     \midrule
     Model    & Encoder transformer embedding size (per head) and feed forward network size & $8$ \\
              & Encoder transformer number of heads & $2$ \\
              & Encoder number of transformer layers & $2$ \\
              & Encoder time series embedding dimensions & $5$ \\
              & Decoder MLP number of layers & $1$ \\
              & Decoder MLP hidden dimensions & $8$ \\
              & Decoder transformer number of heads & $2$ \\
              & Decoder transformer embedding size (per head) & $8$ \\
              & Decoder number transformer layers & $2$ \\
              & Decoder number of bins in conditional distribution & $20$ \\
              & Decoder DSF number of layers & $2$ \\
              & Decoder DSF hidden dimensions & $8$ \\
              & Dropout & $0$ \\
     Data     & Normalization & Standardization \\
              & History length to prediction length ratio & $1$ \\
     Training & Optimizer & RMSprop \\
              & Learning rate & $10^{-3}$ \\
              & Batch size & $256$ \\
              & Weight decay & $10^{-5}$ \\
              & Gradient clipping & $10^3$ \\
     \bottomrule
    \end{tabular}
    \label{table:tactis-unaligned-experiment}
\end{table}

\subsection{Ablation Study}\label{app:ablation}

In this section, we attempt to measure the relative impact of the following key components of \tactistt{} on the quality of its predictions:
\begin{enumerate}
    \item The self-attention layers in the encoder,
    \item The use of Deep Sigmoidal Flows to estimate marginal distributions in the decoder, and
    \item The use of attentional copulas to model correlations in the decoder.
\end{enumerate}

To achieve this, we compare \tactistt{} to two ablations:
\begin{itemize}
    \item \tactisic{}: replacing the attentional copula with a trivial one, which measures the importance of (3), and
    \item \tactisgc{}: replacing the entire decoder with the Gaussian-copula-like decoder of GPVar~\citep{salinas2019high}, which measures the importance of (2-3).
\end{itemize}
Comparing these two ablations with \tactistt{} also allows a measure of the relative importance of (1). For example, if both ablations perform just as well as \tactistt{}, we may conclude that (1) is the main driver of the model's performance.

Note that it would be challenging to ablate the marginal flows (2) directly without affecting the quality of the learned copula.
For example, suppose that we replaced the marginal flows with an empirical CDF (ECDF), as used in \citet{salinas2019high}. Training via maximum likelihood would push the attentional copula to encode how the univariate forecasts deviate from the ECDF, which would prevent it from converging to a valid copula. Hence, we abstain from conducting such an experiment.

\subsubsection{\tactisgc{}: Using ECDFs and a Gaussian Copula in the Decoder} \label{app:tactis-gc}

Here, we evaluate the \tactisgc{} ablation, where we replace the marginal flows with ECDFs and the attentional copula with a low-rank Gaussian copula, as used in \citet{salinas2019high}.
As is done in \citet{salinas2019high}, the ECDFs are computed, per variable, using the history of each sample and are composed with the inverse CDF of normal distribution of zero mean and unit variance.
Since this ablation adds new components to the model, which come with their own hyperparameters, we re-run the hyperparameter search using the ranges listed in \cref{table:hyper-tactis-gc}.
Of note, we include the use of the ECDF as one of many possible data normalization methods in the hyperparameter search (denoted CDF in the table). It can thus be bypassed if this leads to more accurate predictions, in which case the learned joint distribution is simply a low-rank Gaussian.

\begin{table}[h!]
\centering
\caption{Possible hyperparameters for \tactisgc{}. \besthp{e}, \besthp{f}, \besthp{k}, \besthp{s}, and \besthp{t} respectively indicate the optimal hyperparameters for \texttt{electricity}, \texttt{fred-md}, \texttt{kdd-cup},  \texttt{solar-10min}, and \texttt{traffic}.}
\footnotesize\renewcommand{\arraystretch}{1.25}%
\begin{tabular}{@{}r l l@{}}
 \toprule
 & Hyperparameter & Possible values \\
 \midrule
 Model    & Encoder transformer embedding size (per head) and feed forward network size & $8$\besthp{f}, $16$\besthp{et}, $24$\besthp{ks} \\
          & Encoder transformer number of heads & $1$\besthp{t}, $2$\besthp{efks} \\
          & Encoder number of transformer layers pairs & $1$\besthp{ef}, $2$, $4$\besthp{kst} \\
          & Encoder time series embedding dimensions & $5$\besthp{efkst} \\
          & Decoder MLP number of layers & $1$\besthp{s}, $2$\besthp{kt}, $4$\besthp{ef} \\
          & Decoder MLP hidden dimensions and rank of the low-rank approximation & $8$\besthp{e}, $16$\besthp{kt}, $24$\besthp{fs} \\
          & Dropout & $0$\besthp{est}, $0.1$\besthp{f}, $0.01$\besthp{k} \\
 Data     & Normalization & None\besthp{st}, Standardization\besthp{e}, CDF\besthp{fk} \\
          & History length to prediction length ratio & $1$\besthp{f}, $2$\besthp{et}, $3$\besthp{ks} \\
 Training & Optimizer & RMSprop\besthp{efkst} \\
          & Learning rate & $(10^{-4})$\besthp{s}, $(10^{-3})$\besthp{efkt}, $(10^{-2})$ \\
          & Weight decay & $0$, $(10^{-5})$\besthp{ef}, $(10^{-4})$\besthp{k}, $(10^{-3})$\besthp{st} \\
          & Gradient clipping & $(10^3)$\besthp{efkst}, $(10^4)$ \\
 \bottomrule
\end{tabular}
\label{table:hyper-tactis-gc}
\end{table}

\cref{table:ablation-crps-sum,table:ablation-energy} show the performance of this ablation in comparison with \tactistt{}.
We observe that it performs worse on two datasets: \texttt{electricity} and \texttt{traffic}, slightly worse on one: \texttt{fred-md}, and slightly better on two: \texttt{kdd-cup} and \texttt{solar-10min}.
It should not be surprising that, in some contexts, \tactisgc{} performs well.
The low-rank Gaussian copula limits the learned correlations to stem from a small number of latent variables, which may be particularly well suited in some situations. For example, in \texttt{solar-10min}, it may help forecast the amount of solar energy produced by nearby stations which operate in similar weather conditions.
Nonetheless, since the differences by which \tactisgc{} beats \tactistt{} are much smaller than those by which it loses, we conclude that \tactistt{} is generally superior.

\begin{table}[h!]
\centering
\caption{Means and standard errors of the CPRS-Sum metrics for the ablation experiments.}
\footnotesize\renewcommand{\arraystretch}{1.15}%
    \smallskip
    \setlength{\tabcolsep}{1.45ex}
    \begin{tabular}{@{}rrrrrr@{}}
     \toprule
      \multicolumn{1}{@{}c}{\textbf{Model}} & \multicolumn{1}{c}{\texttt{electricity}} & \multicolumn{1}{c}{\texttt{fred-md}} & \multicolumn{1}{c}{\texttt{kdd-cup}} & \multicolumn{1}{c}{\texttt{solar-10min}} & \multicolumn{1}{c}{\texttt{traffic}} \\
 \midrule
      \tactistt{}  & $0.021 \pm 0.005$ & $0.042 \pm 0.009$ & $0.237 \pm 0.013$ & $0.311 \pm 0.061$ & $0.071 \pm 0.008$ \\
      \tactisgc{}  & $0.054 \pm 0.011$ & $0.050 \pm 0.011$ & $0.214 \pm 0.008$ & $0.283 \pm 0.023$ & $0.125 \pm 0.010$ \\
      \tactisic{}  & $0.022 \pm 0.005$ & $0.042 \pm 0.009$ & $0.303 \pm 0.009$ & $0.381 \pm 0.068$ & $0.080 \pm 0.011$ \\
\bottomrule
\end{tabular}\label{table:ablation-crps-sum}
\end{table}

\begin{table}[h!]
\centering
\caption{Means and standard errors of the energy score metrics for the ablation experiments.}
\footnotesize\renewcommand{\arraystretch}{1.15}%
    \smallskip
    \setlength{\tabcolsep}{1.45ex}
    \begin{tabular}{@{}rrrrrr@{}}
     \toprule
      \multicolumn{1}{@{}c}{\textbf{Model}} & \multicolumn{1}{c}{\texttt{electricity}} & \multicolumn{1}{c}{\texttt{fred-md}} & \multicolumn{1}{c}{\texttt{kdd-cup}} & \multicolumn{1}{c}{\texttt{solar-10min}} & \multicolumn{1}{c}{\texttt{traffic}} \\
               & $\times 10^4$        & $\times 10^5$        & $\times 10^3$        & $\times 10^2$        & $\times 10^0$           \\
 \midrule
      \tactistt{}  & $5.42 \pm 0.57$ & $8.18 \pm 1.83$ & $2.93 \pm 0.22$ & $2.88 \pm 0.23$ & $3.10 \pm 0.13$ \\
      \tactisgc{}  & $8.70 \pm 0.89$ & $9.55 \pm 1.95$ & $2.65 \pm 0.17$ & $2.94 \pm 0.05$ & $4.99 \pm 0.02$ \\
      \tactisic{}  & $5.32 \pm 0.54$ & $8.40 \pm 1.69$ & $5.24 \pm 0.42$ & $9.14 \pm 1.25$ & $10.69 \pm 1.04$ \\
\bottomrule
\end{tabular}\label{table:ablation-energy}
\end{table}

\subsubsection{\tactisic{}: Using a Trivial Copula Instead of an Attentional Copula} \label{app:tactis-ic}

Here, we evaluate the \tactisic{} ablation in which we replace the attentional copula with a trivial copula, where all variables are independent and their distribution is $U_{[0, 1]}$.
The goals of this ablation are twofold: 1) measure the contribution of the marginal flows vs. the dependencies learned by the attentional copulas in the quality of predictions, and 2) make sure that the learned attentional copulas are non-trivial and that they contribute to the forecasts.
Since this ablation does not add new hyperparameters or model parameters, we reuse the trained \tactistt{} models to conduct this experiment.

\cref{table:ablation-crps-sum,table:ablation-energy} show the performance of this ablation in comparison to \tactistt{}.
Interestingly, the CRPS-Sum and energy score metrics paint two different pictures: for CRPS-Sum, the results are quite similar with only two datasets showing a clear advantage for \tactistt{}: \texttt{kdd-cup} and \texttt{solar-10min}. However, for the energy score, there are three datasets for which \tactistt{} dominates \tactisic{}: \texttt{kdd-cup}, \texttt{solar-10min}, and \texttt{traffic}.
This is consistent with CRPS-Sum having a lower discrimination ability than the energy score, as discussed in \citet{koochali2022random}.

The lack of performance degradation on the CRPS-Sum metric shows that learning very good marginal distributions without good correlations is sufficient to achieve good CRPS-Sum values.
In contrast, the energy score is much better at revealing the lack of learned correlations.
Nonetheless, there are two datasets: \texttt{electricity} and \texttt{fred-md}, where we do not see a contrast in energy score, but for which we know that the attentional copula successfully learned non-trivial correlations (see \cref{app:correlations}).
This suggests that either the energy score is also dominated, to some extent, by fitting the marginal distributions very well or that the attentional copula's contribution to forecast quality is less than that of properly fitting the marginals for these datasets.

\subsubsection{Relative Importance of \tactistt{} Key Components}

When comparing the ablations results in \cref{table:ablation-crps-sum,table:ablation-energy} with our benchmarks in \cref{table:results-CRPS-Sum,table:results-energy}, we see that both \tactistt{} and its ablations, \tactisgc{} and \tactisic{}, are quite competitive with the state of the art.
This suggests that the shared portion of their architecture, i.e., the self-attention-based encoder (1), is the primary driver of their good performance.
Now, what can we say about the importance of the marginal flows (2) and the attentional copula (3)?

Our results indicate that these components are important drivers of the good performance but that their individual importance depends on the dataset under consideration.
The importance of the marginal flows (2) is revealed by the \texttt{electricity} and \texttt{fred-md} datasets, where \tactisic{} outperforms \tactisgc{}.
The importance of modelling dependencies between random variables, which is the role of the attentional copula (3), instead of only learning their marginals, is revealed by the \texttt{kdd-cup} and \texttt{solar-10min} datasets, where the \tactisic{} fails in comparison to \tactistt{} and \tactisgc{}.
Finally, the joint importance of the marginal flows (2) and the attentional copula (3), is revealed by the \texttt{traffic} dataset, where \tactistt{} outperforms \tactisic{} and \tactisgc{}.

In summary, we conclude that the main driver of the performance of \tactistt{}'s good performance is its transformer-based architecture, but that each of the components of its decoder, namely the marginal flows and the attentional copula, play a key role in achieving state-of-the-art results on some datasets.

\section{A Deeper Dive into \tactis{} Models}\label{app:tactis-deepdive}

\subsection{Some Good and Bad Forecasts}\label{sec:app-deepdive-forecasts}

In this section, we present a few examples of the probabilistic forecasts generated by \tactistt{} in our forecasting experiments.
Due to the dimensionality of the datasets, it would be infeasible to present all forecasts, so we selected a few examples to demonstrate both the strength and weakness of \tactistt{}.
For each dataset, we thus hand-picked four forecasts that were \emph{particularly good} from the backtesting experiment with the lowest CRPS-Sum and four \emph{particularly bad} forecasts from the backtesting experiment with the highest CRPS-Sum.

\paragraph{The \texttt{electricity} dataset}

\cref{fig:best-electricity} shows some particularly good forecasts for the \texttt{electricity} dataset.
The top- and bottom-left forecasts show that the model could predict that the time series would likely see a sudden increase after a zero-valued gap.
The top- and bottom-right forecasts show the model's ability to follow the daily periodicity of the data.
The top-right forecast is particularly impressive since the model was able to foresee the higher demand that would occur on Monday (2014-12-15), following lower values for Saturday (2014-12-13) and Sunday (2014-12-14), and this, even though our positional encodings do not include day-of-week information.

\begin{figure}
    \centering
    \includegraphics[width=0.95\linewidth]{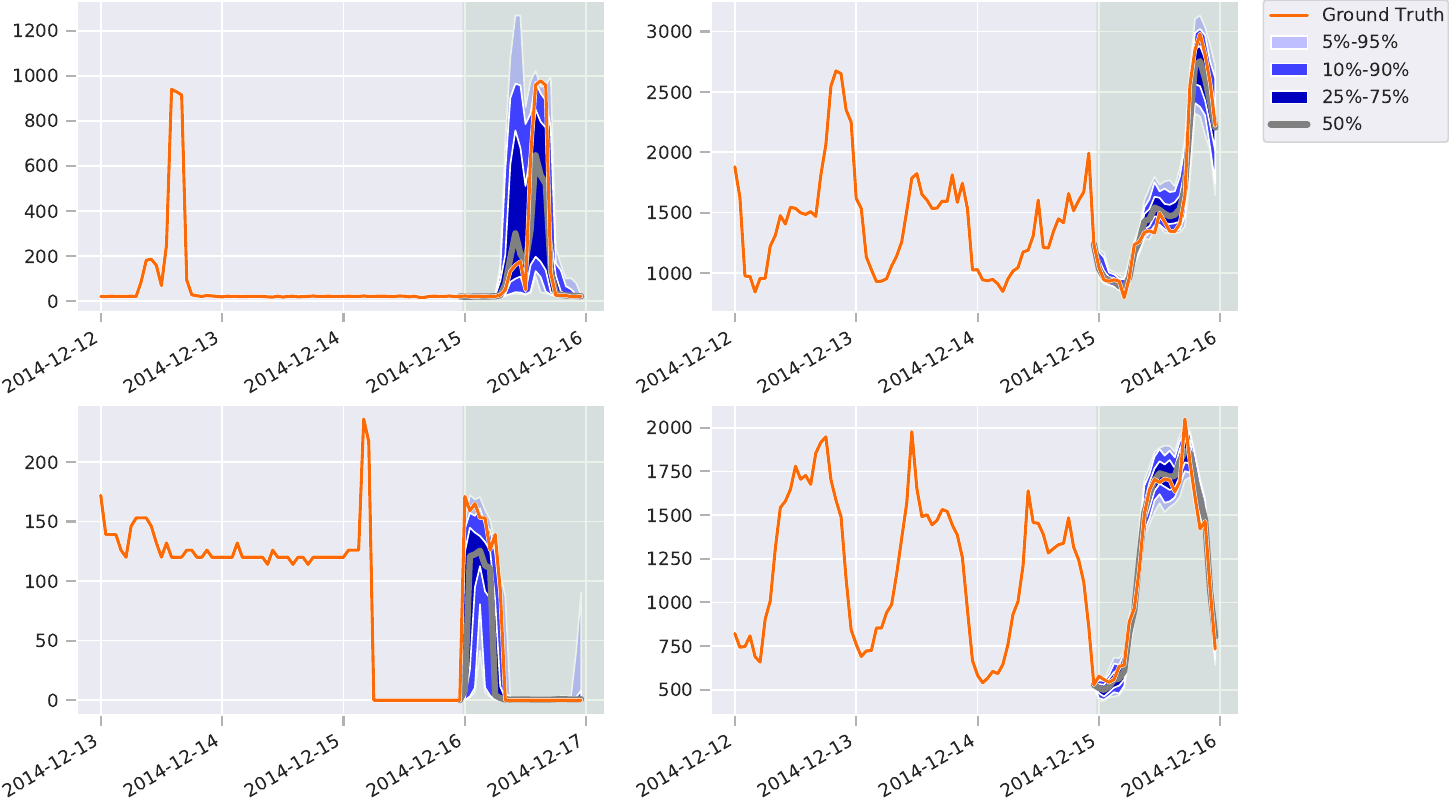}
    \caption{
        Hand-picked example forecasts for the \texttt{electricity} dataset. These examples have been selected to show particularly \textbf{good} forecasts.
        The historical ground truth shown is the one that was made available to the model.
        The data is from 2014, so December 15th is a Monday.
    }\label{fig:best-electricity}
\end{figure}

\cref{fig:worst-electricity} shows some particularly bad forecasts for the \texttt{electricity} dataset.
In the top-left forecast, the distribution of forecasted values is near-uniform, even though the data in this series only take on three possible values.
This illustrates a limitation of the marginal flows, which may poorly fit discrete distributions, especially when only a small number of series in the dataset contain discrete patterns.
In the top-right forecast, the model is extremely confident that the series would obey the same pattern as in the previous days, missing the change in dynamics.
The bottom-left one is similar; the model believes that the forecasted day would be similar to the previous ones, failing to capture the increase that occurs from day to day.
The bottom-right forecast also shows an example where the model assumes that the series has daily periodicity, even though the history reveals this is not the case.
Note that using positional encodings especially designed for temporal data may help circumvent these issues.

\begin{figure}
    \centering
    \includegraphics[width=0.95\linewidth]{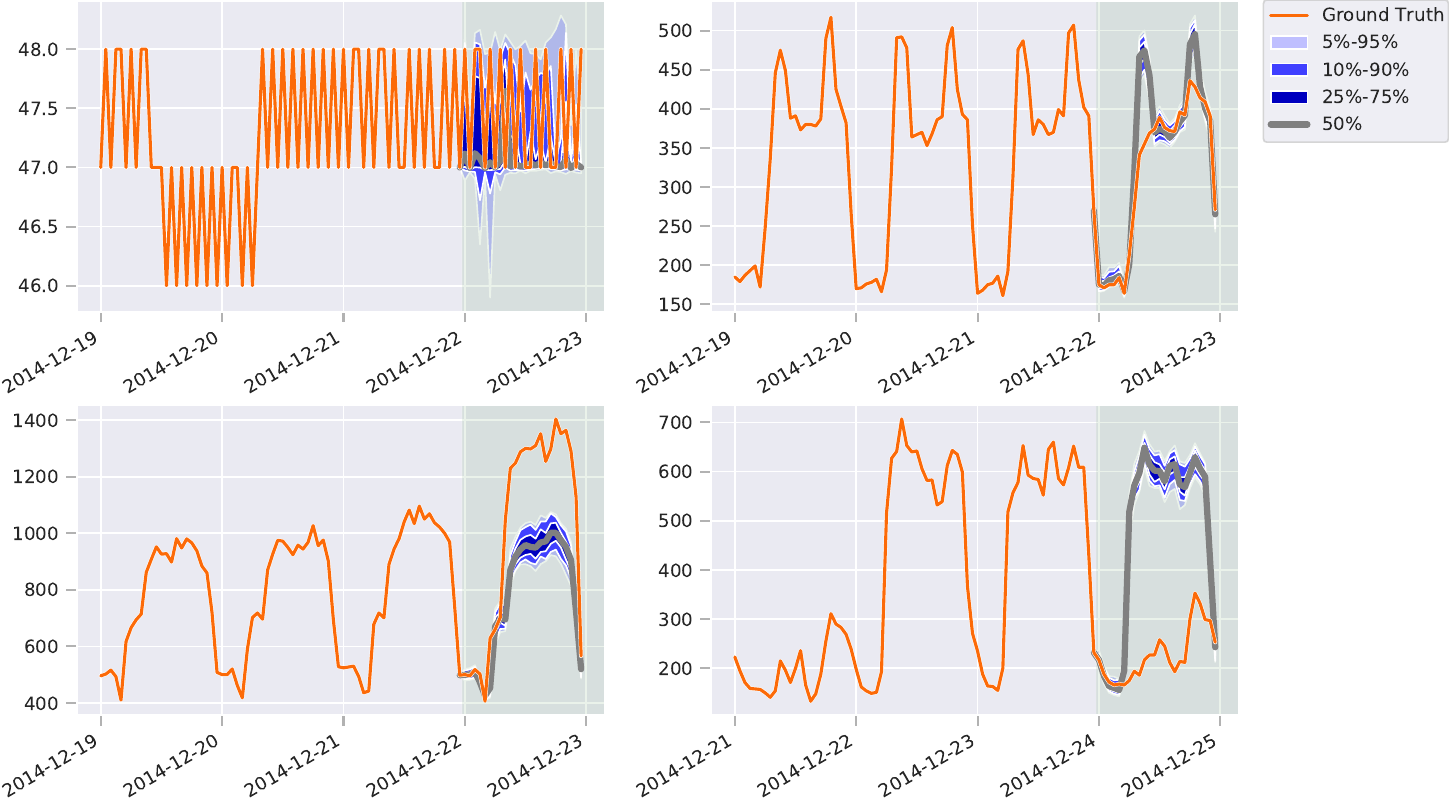}
    \caption{
        Hand-picked example forecasts for the \texttt{electricity} dataset. These examples have been selected to show particularly \textbf{bad} forecasts.
        The historical ground truth shown is the one that was made available to the model.
    }\label{fig:worst-electricity}
\end{figure}

\paragraph{The \texttt{solar-10min} dataset}

\cref{fig:best-solar} shows some particularly good forecasts for the \texttt{solar-10min} dataset.
This dataset has the particularity of having sudden variations (increases or decreases) between consecutive time points.
The forecasts we present in this figure all seem to account for this characteristic by attributing significant likelihood to a wide range of values in their forecasts.
The bottom-left and -right forecasts are particularly interesting since their distributions are oddly shaped: uncertainty is maximal at the beginning of the forecast and decreases afterwards.
Again, this seems to account for the stochasticity that exists within the process while also accounting for the fact that periods of non-zero value always revert to zero periodically.

\begin{figure}
    \centering
    \includegraphics[width=0.95\linewidth]{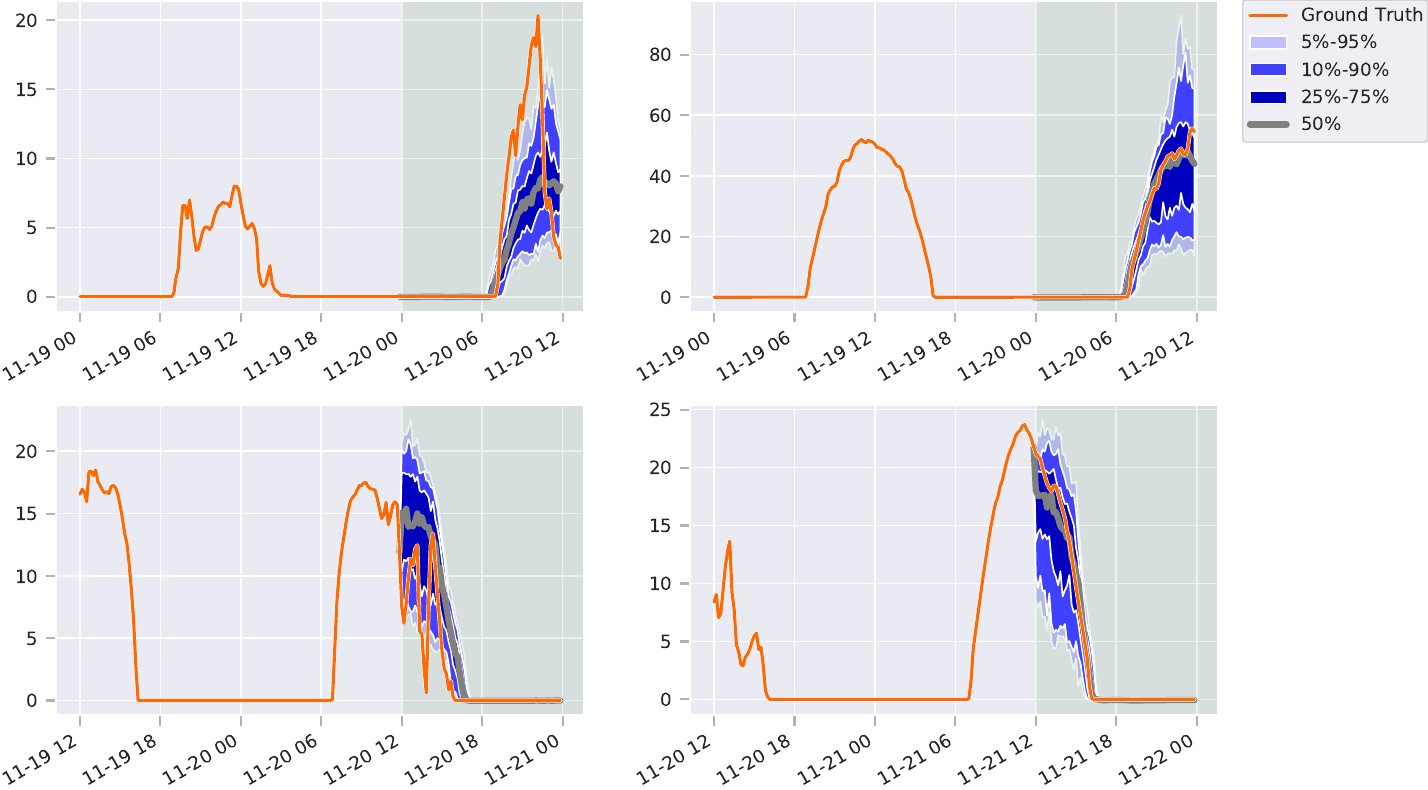}
    \caption{
        Hand-picked example forecasts for the \texttt{solar-10min} dataset. These examples have been selected to show particularly \textbf{good} forecasts.
        The historical ground truth shown is the one that was made available to the model.
    }\label{fig:best-solar}
\end{figure}

\cref{fig:worst-solar} shows some particularly bad forecasts for the \texttt{solar-10min} dataset.
All of the forecasts have the same issue: the model often predicts a peak in values around mid-day and, when this happens, fails to attribute high likelihood to values as low as those observed in the history.

\begin{figure}
    \centering
    \includegraphics[width=0.95\linewidth]{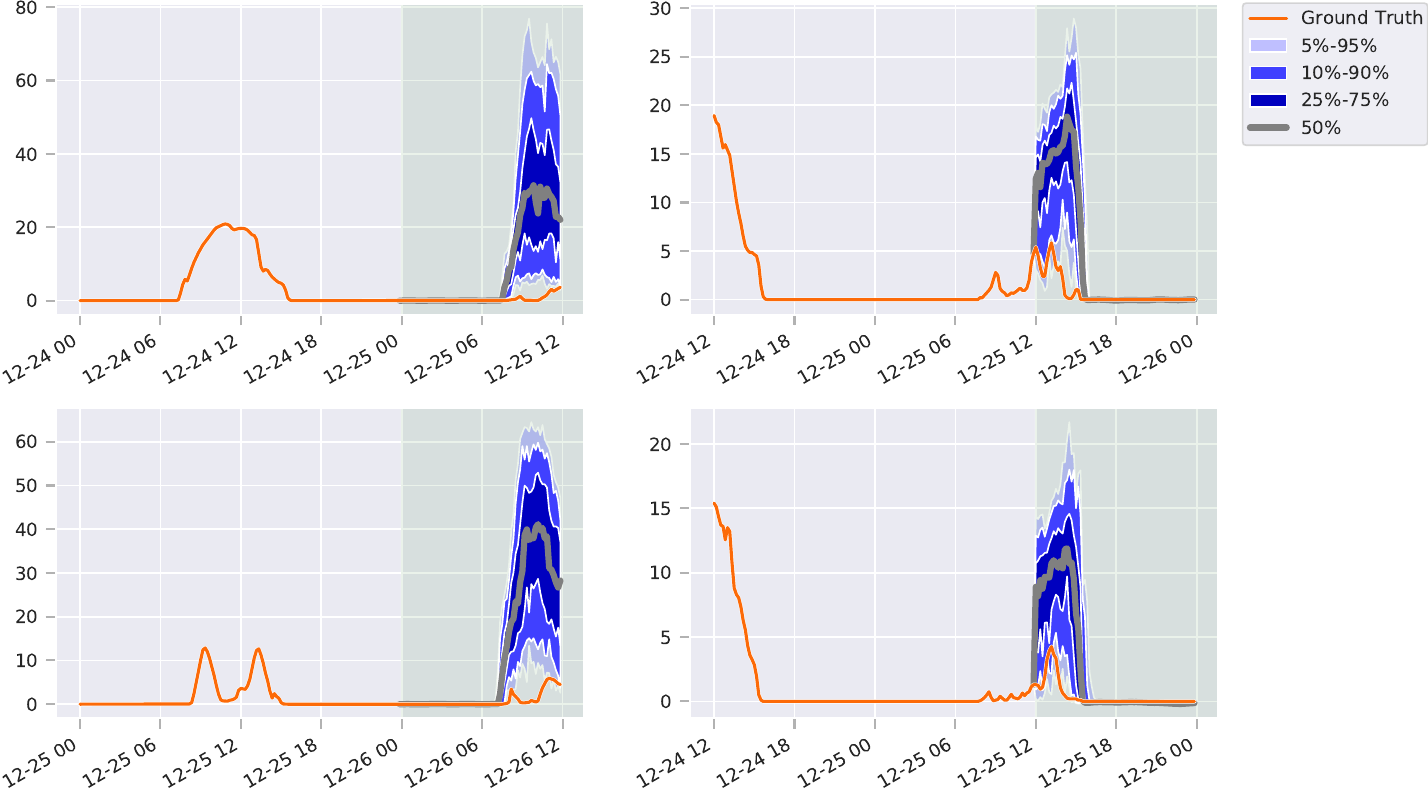}
    \caption{
        Hand-picked example forecasts for the \texttt{solar-10min} dataset. These examples have been selected to show particularly \textbf{bad} forecasts.
        The historical ground truth shown is the one that was made available to the model.
    }\label{fig:worst-solar}
\end{figure}

\paragraph{The \texttt{fred-md} dataset}

\cref{fig:best-fred} shows some particularly good forecasts for the \texttt{fred-md} dataset.
These show that the model can be applied to data with various trends.
Notice how the forecasts' variance gradually increases as we move away from the history, indicating that the model properly accounts for the increasing uncertainty arising from the compounding effects of more and more random events.
Furthermore, the ground-truth values stay within the forecasted range while sometimes deviating from the median, suggesting that the variance is not overestimated.

\begin{figure}
    \centering
    \includegraphics[width=0.95\linewidth]{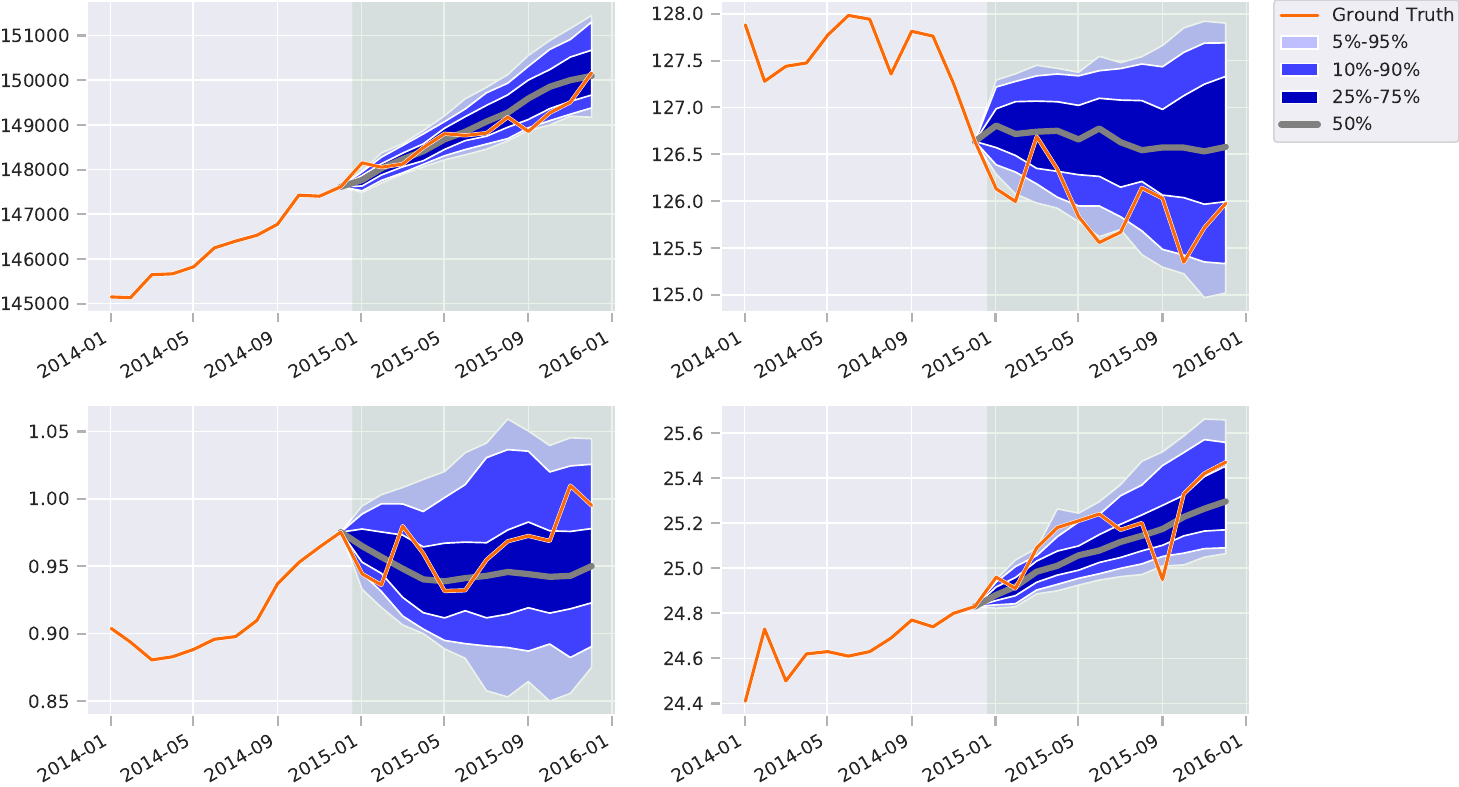}
    \caption{
        Hand-picked example forecasts for the \texttt{fred-md} dataset. These examples have been selected to show particularly \textbf{good} forecasts.
        The historical ground truth shown is the one that was made available to the model.
    }\label{fig:best-fred}
\end{figure}

\cref{fig:worst-fred} shows some particularly bad forecasts for the \texttt{fred-md} dataset.
The top-left and -right forecasts show that the model relied too much on the historical trend and either overreacted to a blip in the data (from 2012-10 to 2012-12 in the top-left) or missed a change in the trend (top-right).
Further, in the bottom-left and -right forecasts, the model completely fails to account for radical changes in the behaviour of the series.

\begin{figure}
    \centering
    \includegraphics[width=0.95\linewidth]{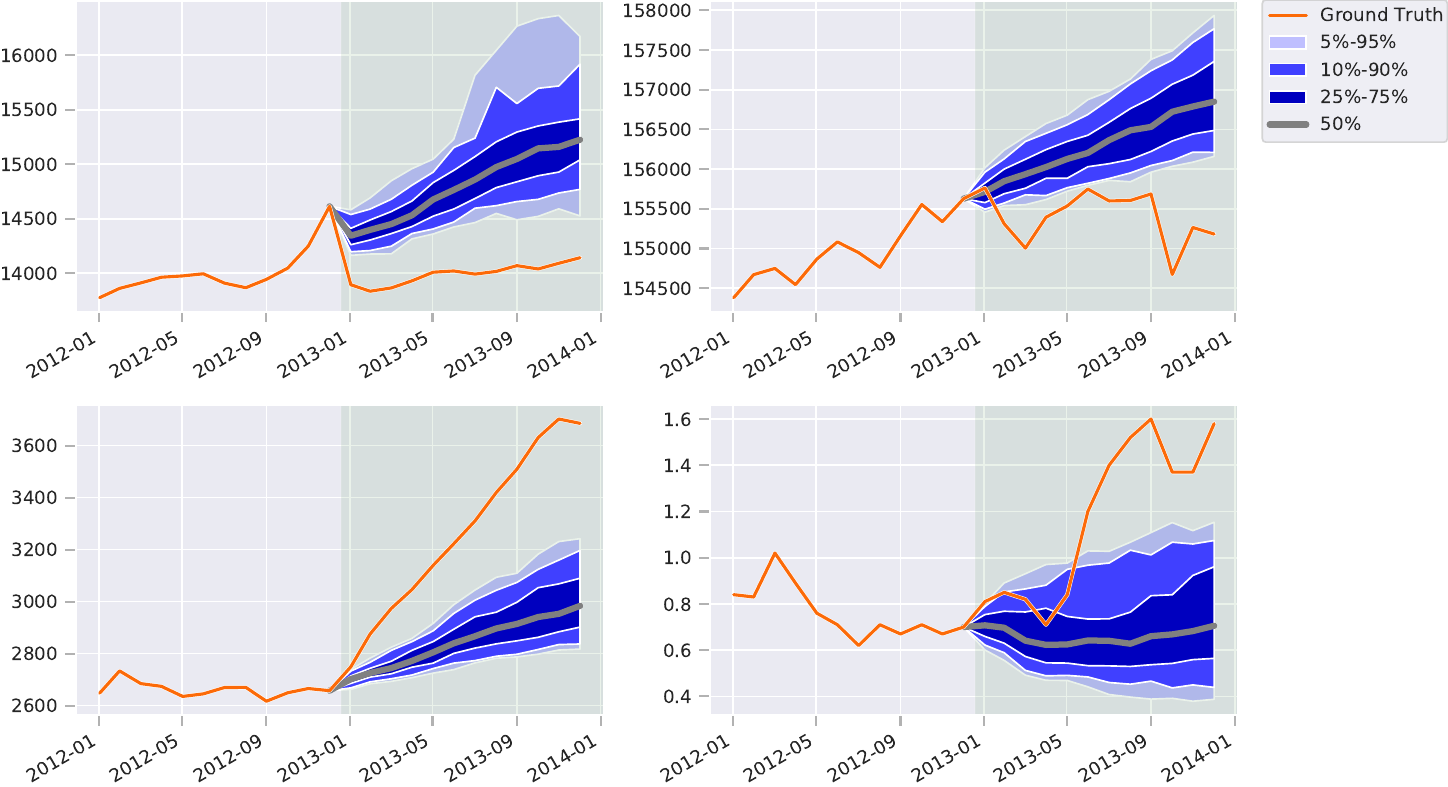}
    \caption{
        Hand-picked example forecasts for the \texttt{fred-md} dataset. These examples have been selected to show particularly \textbf{bad} forecasts.
        The historical ground truth shown is the one that was made available to the model.
    }\label{fig:worst-fred}
\end{figure}

\paragraph{The \texttt{kdd-cup} dataset}

\cref{fig:best-kdd} shows some particularly good forecasts for the \texttt{kdd-cup} dataset.
All four are quite similar in that the model predicts a relatively uniform distribution on a range of possible values, which seems to be accurate based on the oscillations in the ground truth.
Furthermore, we see a gradual increase of variance at time points subsequent to the history, indicating that the model learned that, in this dataset, changes occur gradually rather than suddenly (e.g., in the \texttt{solar-10min} dataset).

\begin{figure}
    \centering
    \includegraphics[width=0.95\linewidth]{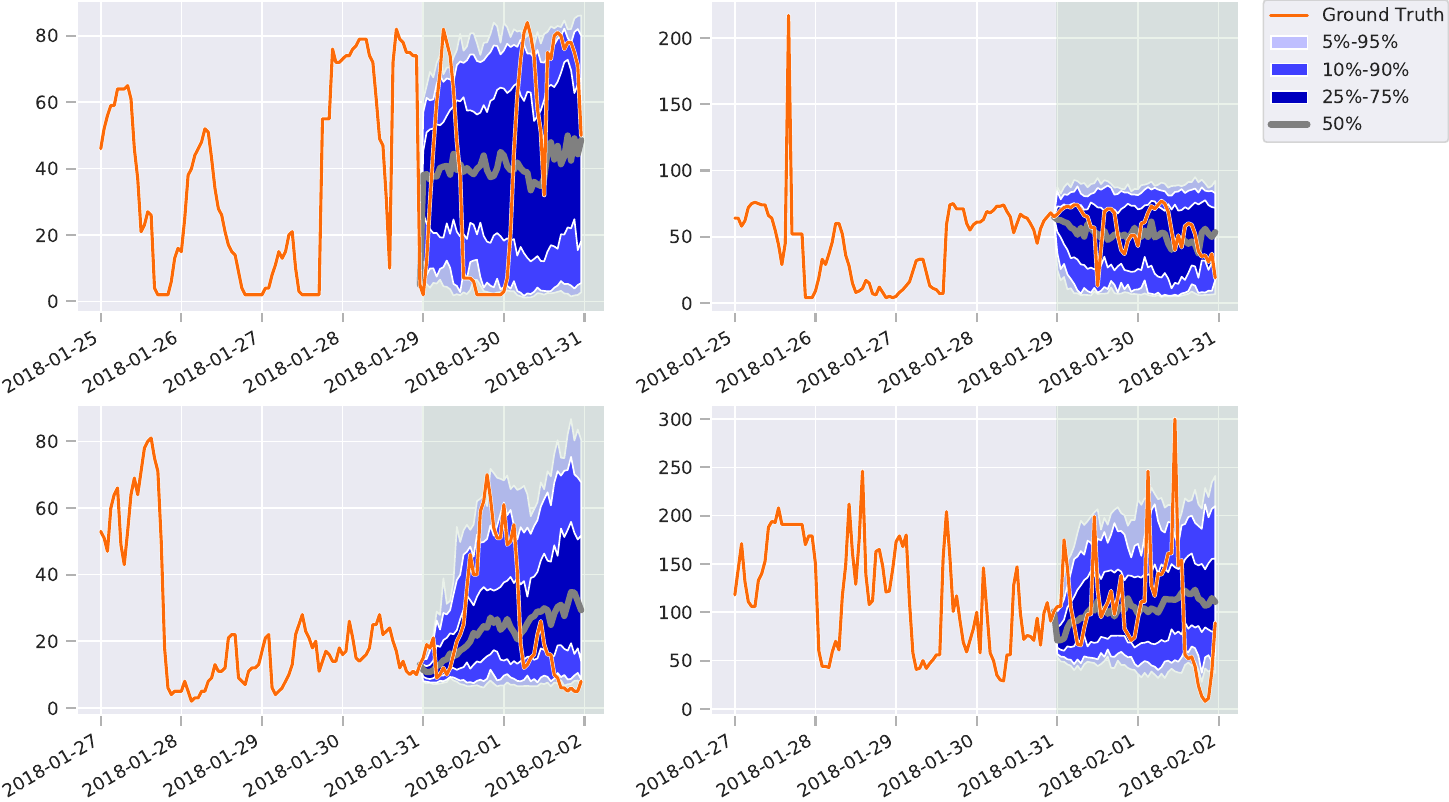}
    \caption{
        Hand-picked example forecasts for the \texttt{kdd-cup} dataset. These examples have been selected to show particularly \textbf{good} forecasts.
        The historical ground truth shown is the one that was made available to the model.
    }\label{fig:best-kdd}
\end{figure}

\cref{fig:worst-kdd} shows some particularly bad forecasts for the \texttt{kdd-cup} dataset.
The top-left forecast shows that the model underestimated the maximum value of the series, not having seen such values in the history.
The bottom-left shows the opposite: the model underestimated the odds of the series remaining at relatively constant and low values.
The top-right forecast assumed a slow return to the recent normal, disregarding the possibility of a further decrease.
The bottom-right forecast failed to predict that the series could increase as much as it did, even though the recent history showed such increases.

\begin{figure}
    \centering
    \includegraphics[width=0.95\linewidth]{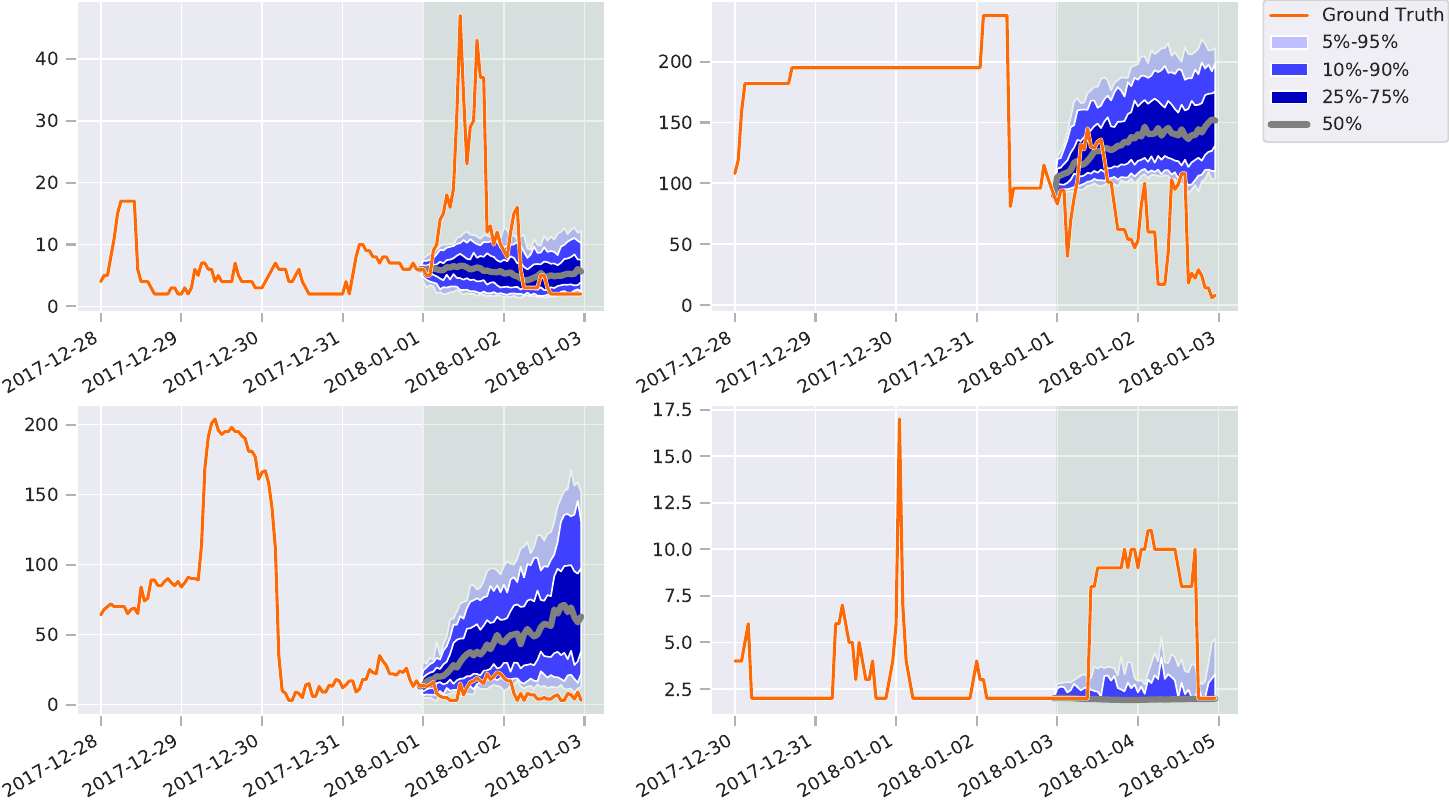}
    \caption{
        Hand-picked example forecasts for the \texttt{kdd-cup} dataset. These examples have been selected to show particularly \textbf{bad} forecasts.
        The historical ground truth shown is the one that was made available to the model.
    }\label{fig:worst-kdd}
\end{figure}

\paragraph{The \texttt{traffic} dataset}

\cref{fig:best-traffic} shows some particularly good forecasts for the \texttt{traffic} dataset.
The forecast on the top left shows that \tactistt{} predicted the morning and evening traffic spikes, even though only the morning spike of the previous day was visible in the history.
The bottom-left forecast shows the model's uncertainty about whether traffic would return to normal after a seemingly anomalous history with most of the data at zero.
The two forecasts on the right are examples where \tactistt{} was able to recognize a pattern in the data and forecast it with very high confidence. 
Note that this pattern abounds in the \texttt{traffic} dataset for many variables and time stamps, which explains why the model could learn to estimate it so accurately.

\begin{figure}
    \centering
    \includegraphics[width=0.95\linewidth]{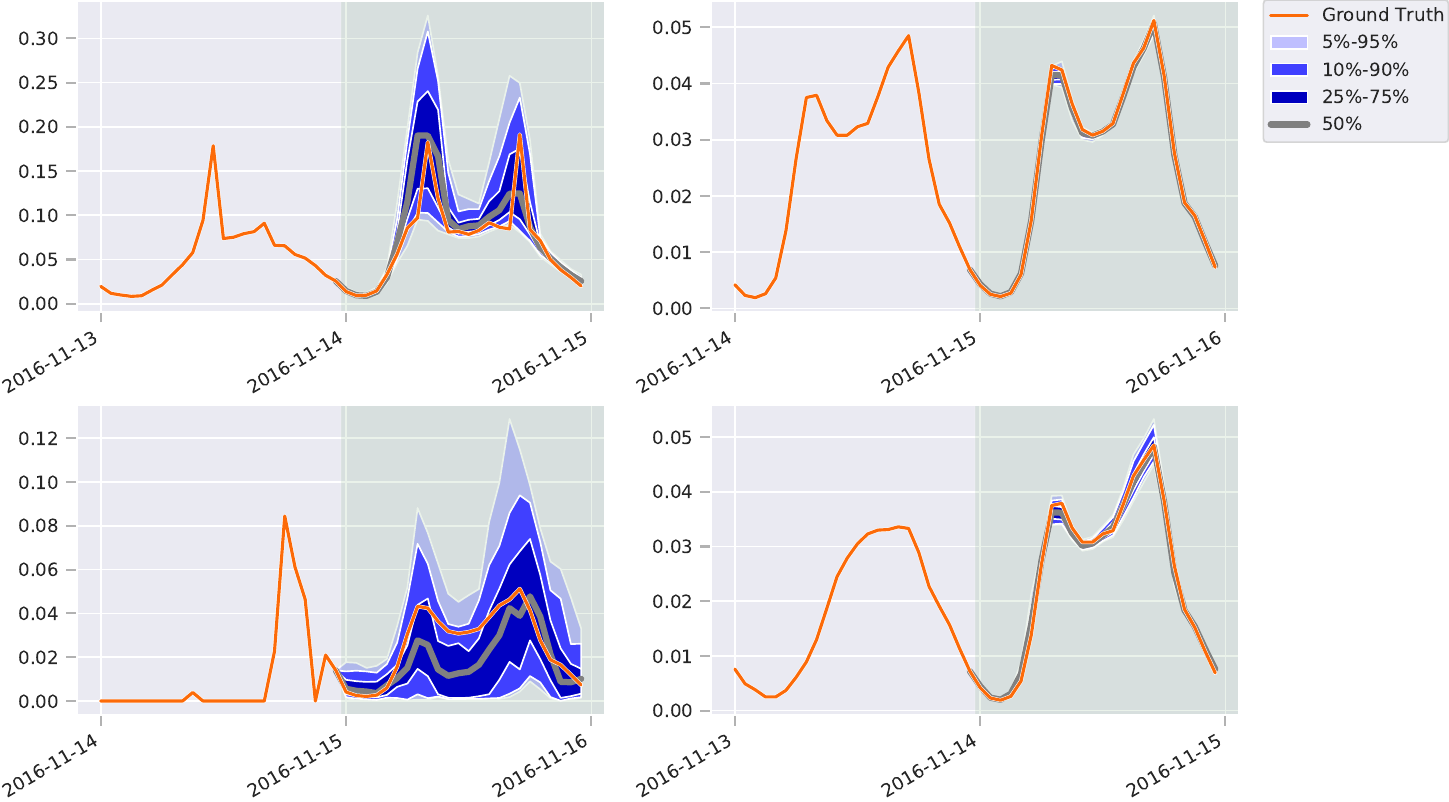}
    \caption{
        Hand-picked example forecasts for the \texttt{traffic} dataset. These examples have been selected to show particularly \textbf{good} forecasts.
        The historical ground truth shown is the one that was made available to the model.
    }\label{fig:best-traffic}
\end{figure}

\cref{fig:worst-traffic} shows some particularly bad forecasts for the \texttt{traffic} dataset.
In the top-left forecast, the model assumed that the forecasted day would match the previous one since similar historical patterns are typically followed by a repetition of the same pattern, thus missing a significant disruption.
For the top-right forecast, the model assumed a quick increase in traffic, ignoring the possibility of a delay before the series deviated from zero.
In the bottom-left forecast, the model dismissed again the possibility of an anomaly, which would cause traffic to go to zero.
Finally, in the bottom-right forecast, the model did not foresee that the series could reach values considerably greater than those in the history.

\begin{figure}
    \centering
    \includegraphics[width=0.95\linewidth]{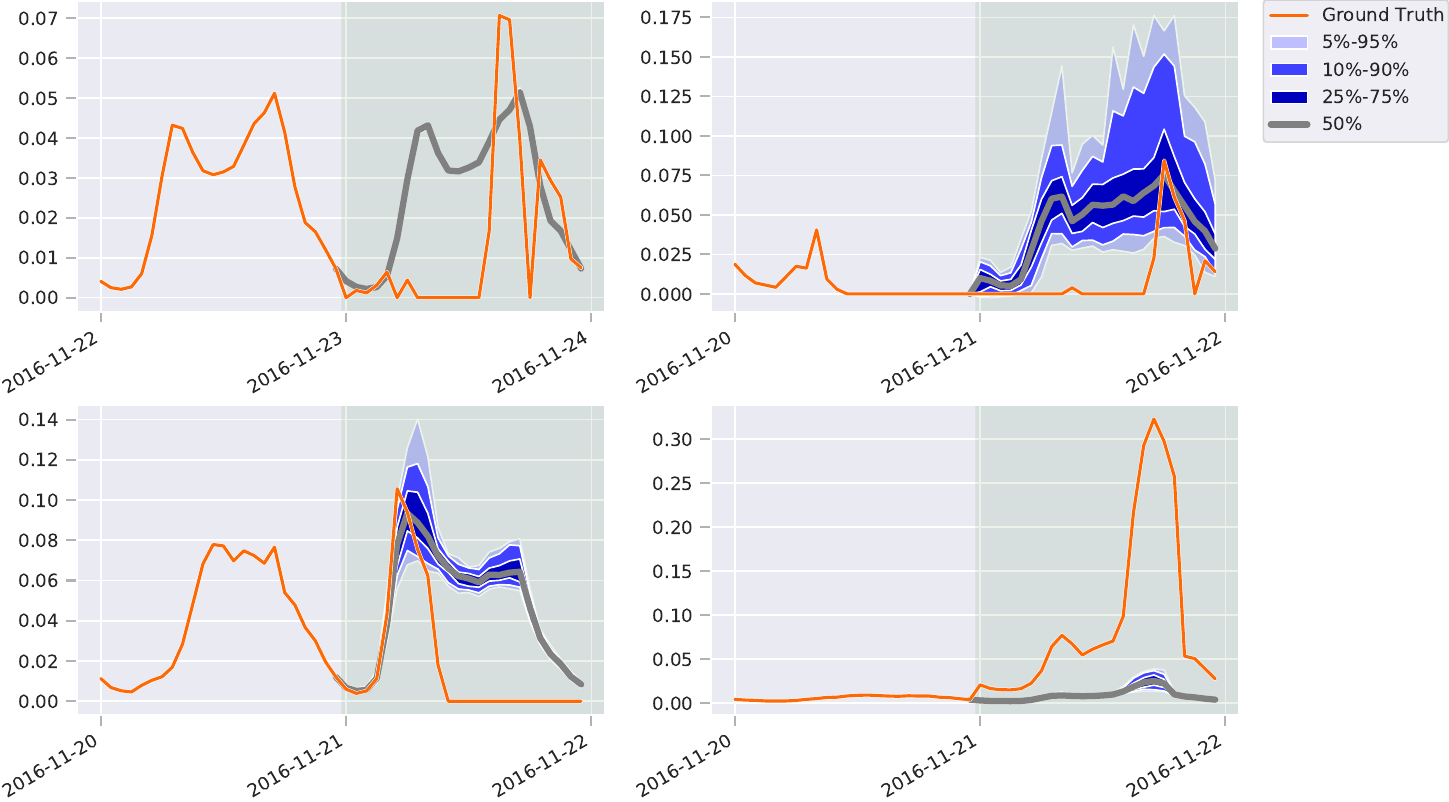}
    \caption{
        Hand-picked example forecasts for the \texttt{traffic} dataset. These examples have been selected to show particularly \textbf{bad} forecasts.
        The historical ground truth shown is the one that was made available to the model.
    }\label{fig:worst-traffic}
\end{figure}

\subsection{Looking into Learned Marginal Distributions}\label{sec:app-deepdive-marginals}

\tactis{} forecasts result from a combination of an attentional copula and the marginal distributions learned for each variable.
Since the marginal distributions are significant contributors to the quality of forecasts (as shown in \cref{app:tactis-ic}), we use this section to look into some of the learned marginal distributions.

\cref{fig:marginal-examples1,fig:marginal-examples2} show a few hand-picked example marginals generated from the learned models.
The marginal cumulative distribution function (CDF) of a variable is directly obtained from $F_{\phi_k}(x^{(m)}_k)$, while its probability density function (PDF) is obtained by differentiating the CDF w.r.t. $x^{(m)}_k$.
Note that these marginals are normalized using the standardization procedure (see \cref{app:data_norm}), so a distribution with a mean of 0 and a variance of 1 would be expected if the model simply took the historical values as the forecast.

\paragraph{Unimodal marginals} As can be seen in \cref{fig:marginal-examples1}, the marginals for \texttt{electricity}, \texttt{fred-md}, and \texttt{traffic} are all unimodal.
The main difference is that the \texttt{fred-md} marginal is quite broad, showing that the variable has an extensive range of possible values that are outside the range of historical values. 
In contrast, the \texttt{traffic} marginal is very narrow, demonstrating that the model is quite confident in the value of the variable.

\paragraph{Multimodal marginals} As can be seen in \cref{fig:marginal-examples2}, the marginals for \texttt{kdd-cup} and \texttt{solar-10min} are both multimodal, something that would not have been possible using many common parametric distributions, such as the normal or gamma distributions.
Furthermore, both marginals have a significant spike at their minimal value, showing that the model has learned that some variables have a non-zero probability of being exactly at their lower bound in these datasets.
For example, in the \texttt{solar-10min} dataset, this may be caused by zero solar energy production on a cloudy day.

\begin{figure}
    \centering
    \includegraphics[width=0.95\linewidth]{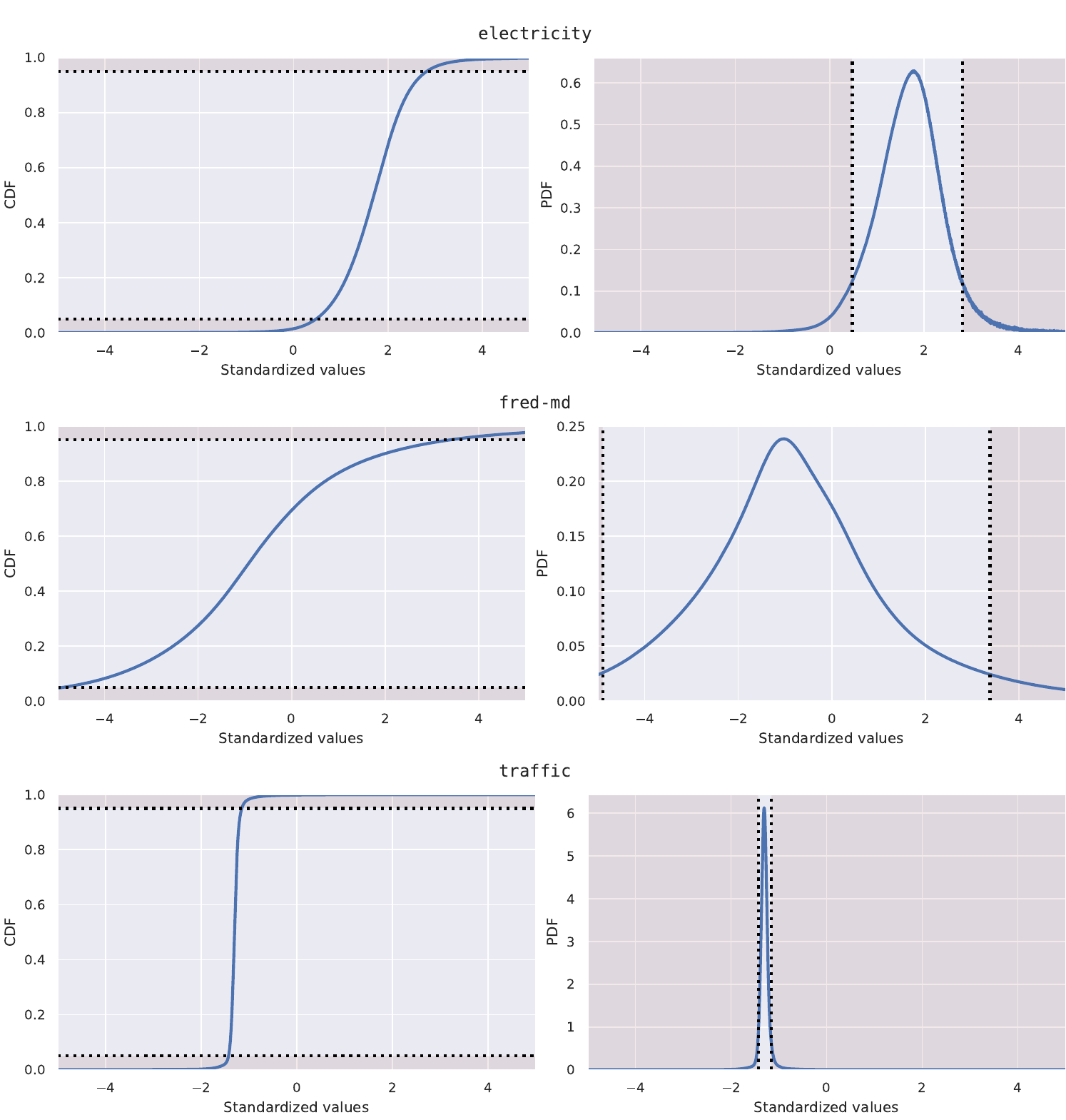}
    \caption{
        Cumulative distribution functions (CDF) and probability density functions (PDF) of hand-picked example marginals generated by \tactistt{} for \texttt{electricity}, \texttt{fred-md}, and \texttt{traffic}.
        These examples have been selected to show various behaviours of unimodal marginals.
        The red areas represent the part of the distribution which has been removed from the sampling, as mentioned in \cref{app:inverting_flow}.
    }\label{fig:marginal-examples1}
\end{figure}

\begin{figure}
    \centering
    \includegraphics[width=0.95\linewidth]{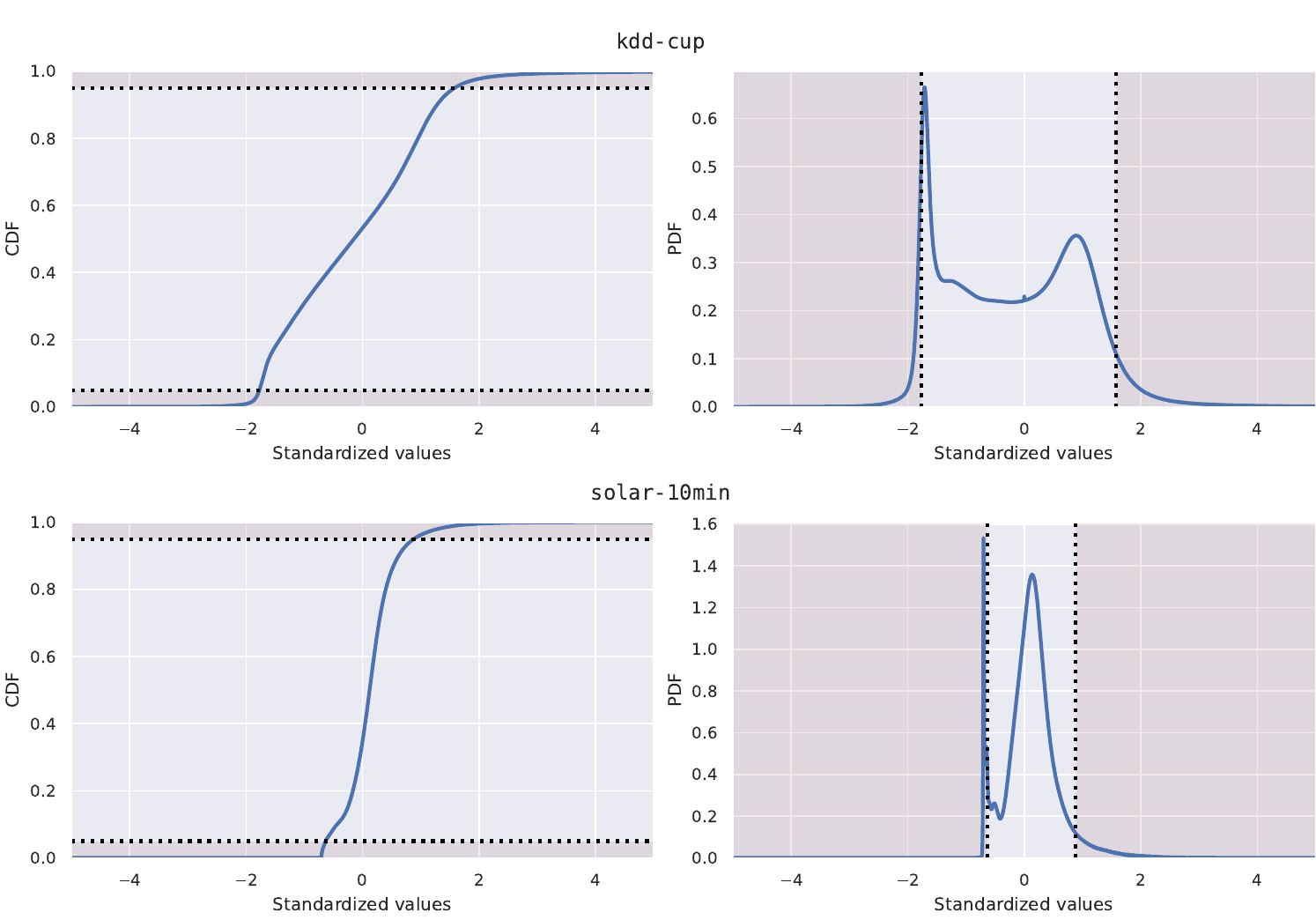}
    \caption{
        Cumulative distribution functions (CDF) and probability density functions (PDF) of hand-picked example marginals generated by \tactistt{} for \texttt{kdd-cup} and \texttt{solar-10min}.
        These examples have been selected to show various behaviours of multimodal marginals.
        The red areas represent the part of the distribution which has been removed from the sampling, as mentioned in \cref{app:inverting_flow}.
    }\label{fig:marginal-examples2}
\end{figure}

\subsection{Learning Dependencies Between Variables} \label{app:correlations}

In \cref{app:tactis-ic}, we have shown that the \tactisic{} ablation, which removes all dependencies between the variables, still leads to decent results for the metrics.
This could indicate that \tactistt{} makes predictions using only its marginal distributions without having learned anything noteworthy in its attentional copula.
To verify if this is the case, we can inspect the correlations between variables in the predictive distribution of the model (see \cref{sec:bg-copula}).
If the copula were unused, we would observe zero Pearson product-moment correlation coefficients between all variables.
Otherwise, we would observe non-zero values.
Rest assured, the following experiments show that \tactistt{} does learn correlations between variables, making use of its attentional copula.

\paragraph{Intra-series correlations}

In this first experiment, we inspect the correlations learned between various time steps within the same series (intra-series).
To achieve this, we compute the correlation coefficient between the last forecasted time step \footnote{Selected due to typically being the most variable.} and all other earlier time steps of the forecast. 
These are then averaged over the multiple series and the multiple forecasts for each trained model (one per backtesting period and trial).
\cref{fig:time-correlations} shows the distribution of these averaged correlation coefficients for each studied dataset.
From these, it is clear that \tactistt{} used its attentional copula to learn intra-series dependencies.
As expected, the correlation between time steps decreases with time.
An exception for time differences nearing 24 hours can be observed, e.g., in \texttt{traffic}.
This could be explained by stochastic events impacting multiple days but only at certain times during the day.

\begin{figure}
    \centering
    \includegraphics[width=0.95\linewidth]{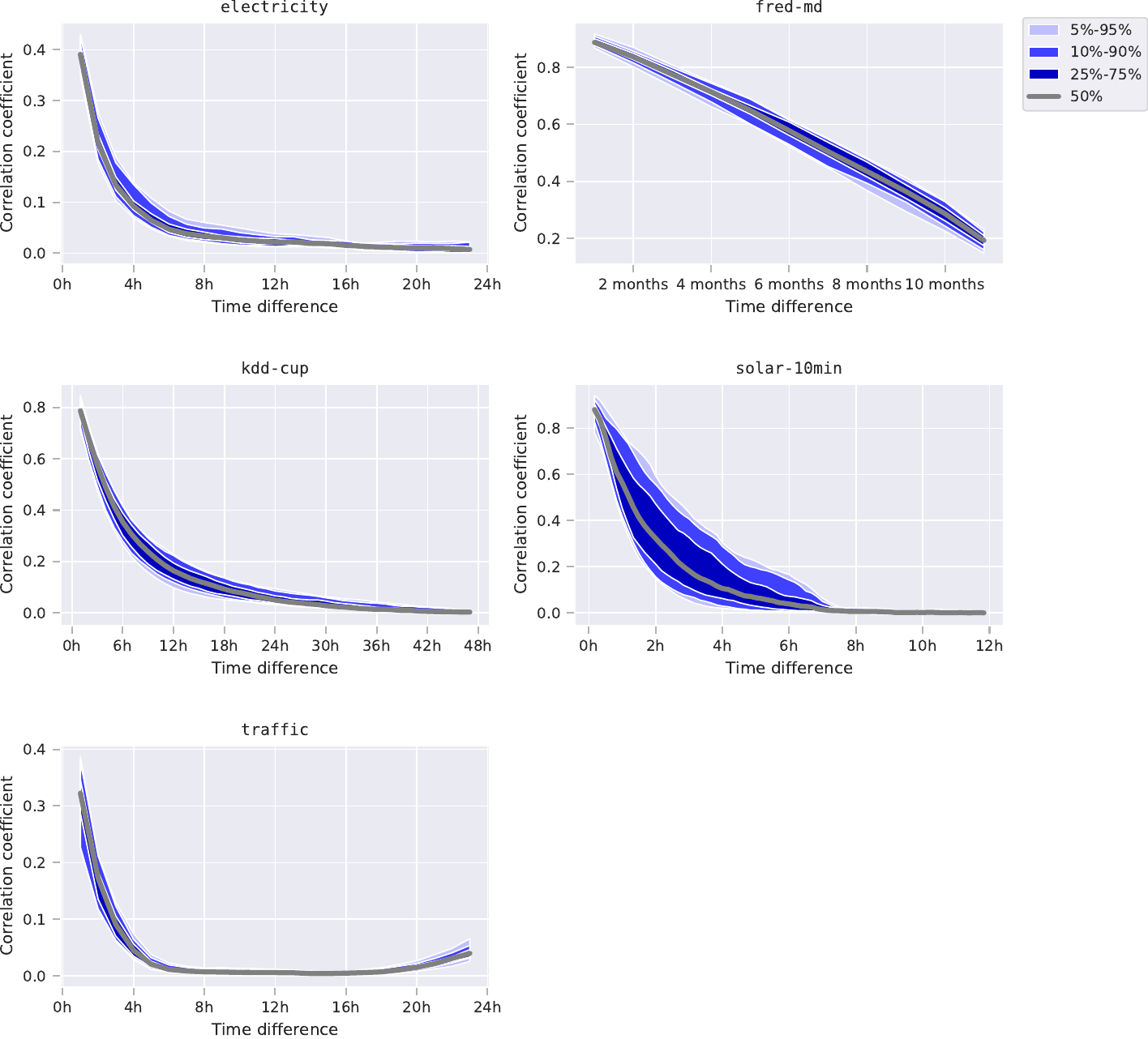}
    \caption{
        Distribution of the correlation coefficients between the last forecasted time step and prior time steps as a function of the time difference between both.
        The correlations are averaged over the multiple series and forecasts, and the distribution is taken over the multiple trained models for each dataset.
    }\label{fig:time-correlations}
\end{figure}

\paragraph{Inter-series correlations}

We now look into the ability of \tactistt{} to model dependencies between the series (inter-series) using its attentional copula.
To achieve this, we use samples taken at the last time step in the forecast to measure inter-series correlations.
\cref{fig:space-correlations-electricity,fig:space-correlations-fred,fig:space-correlations-kdd,fig:space-correlations-solar,fig:space-correlations-traffic} show the average correlation coefficients between all series, averaged over all forecasts, backtesting periods, and trials.
While the inter-series correlations tend to be of lesser magnitude than intra-series correlations, \tactistt{} did indeed learn some of the structure of the datasets.
In particular, it isolated two clusters of interdependent variables in \texttt{kdd-cup}, which makes sense given that the series were collected in two cities (Beijing and London).
Similarly, it was able to recognize that \texttt{fred-md} data contains multiple groups of series with strong dependencies inside each group.
The \texttt{traffic} result is somewhat lacklustre since we expected more spatial correlations for this dataset, which captures road occupancy levels at multiple intersections, some of which must be nearby.
This indicates that while \tactistt{} results are impressive, there is still improvement to be made to learn the dependencies in some of these datasets.

\begin{figure}
    \centering
    \includegraphics[width=0.95\linewidth]{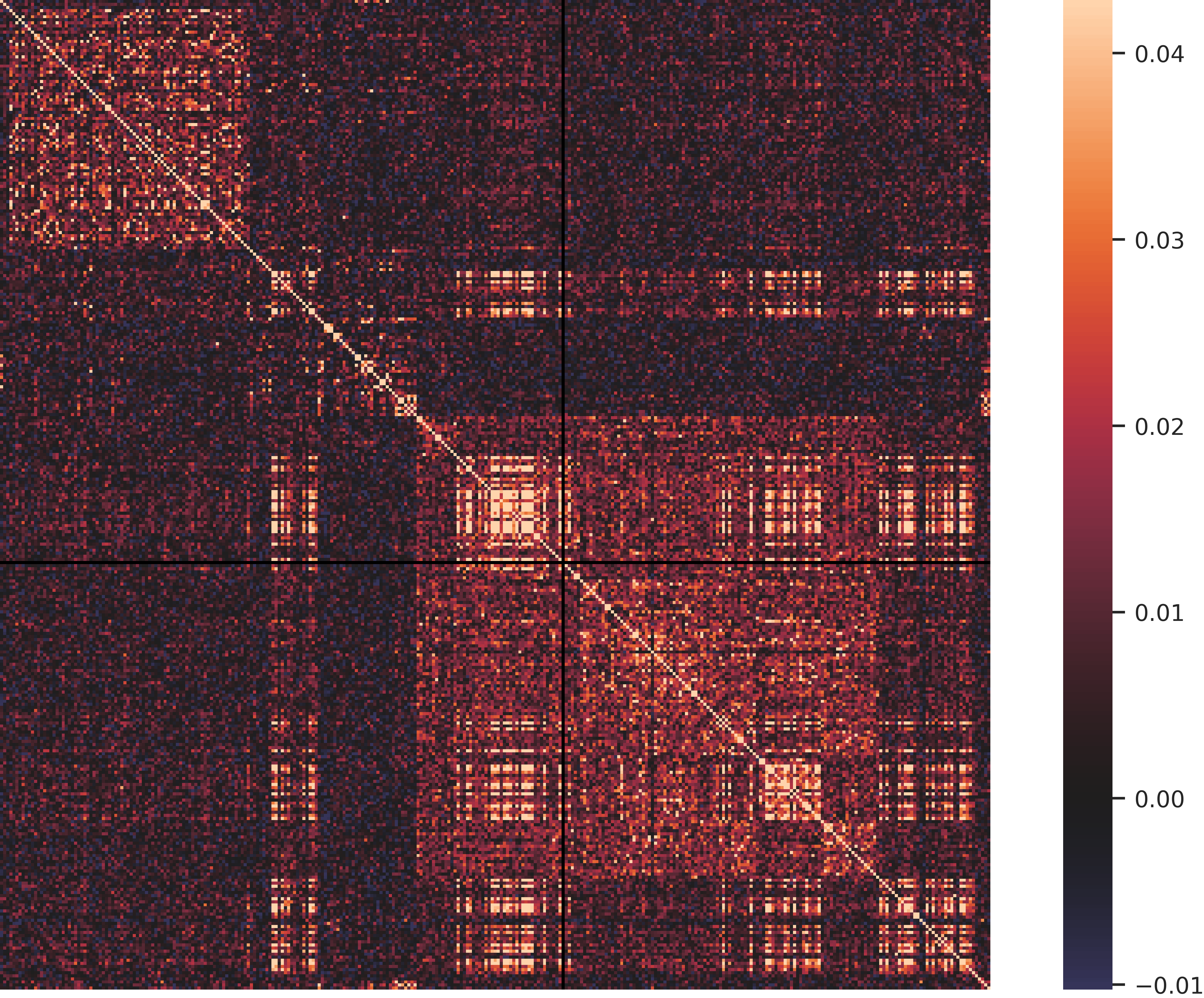}
    \caption{
        Average correlation coefficients between all series at the last forecasted time step for the \texttt{electricity} dataset.
        The bounds of the colour map are set to the 0.02 and 0.98 quantiles of the data.
        The black lines are caused by a variable with a constant forecast, thus preventing it from having correlation coefficients.
    }\label{fig:space-correlations-electricity}
\end{figure}

\begin{figure}
    \centering
    \includegraphics[width=0.95\linewidth]{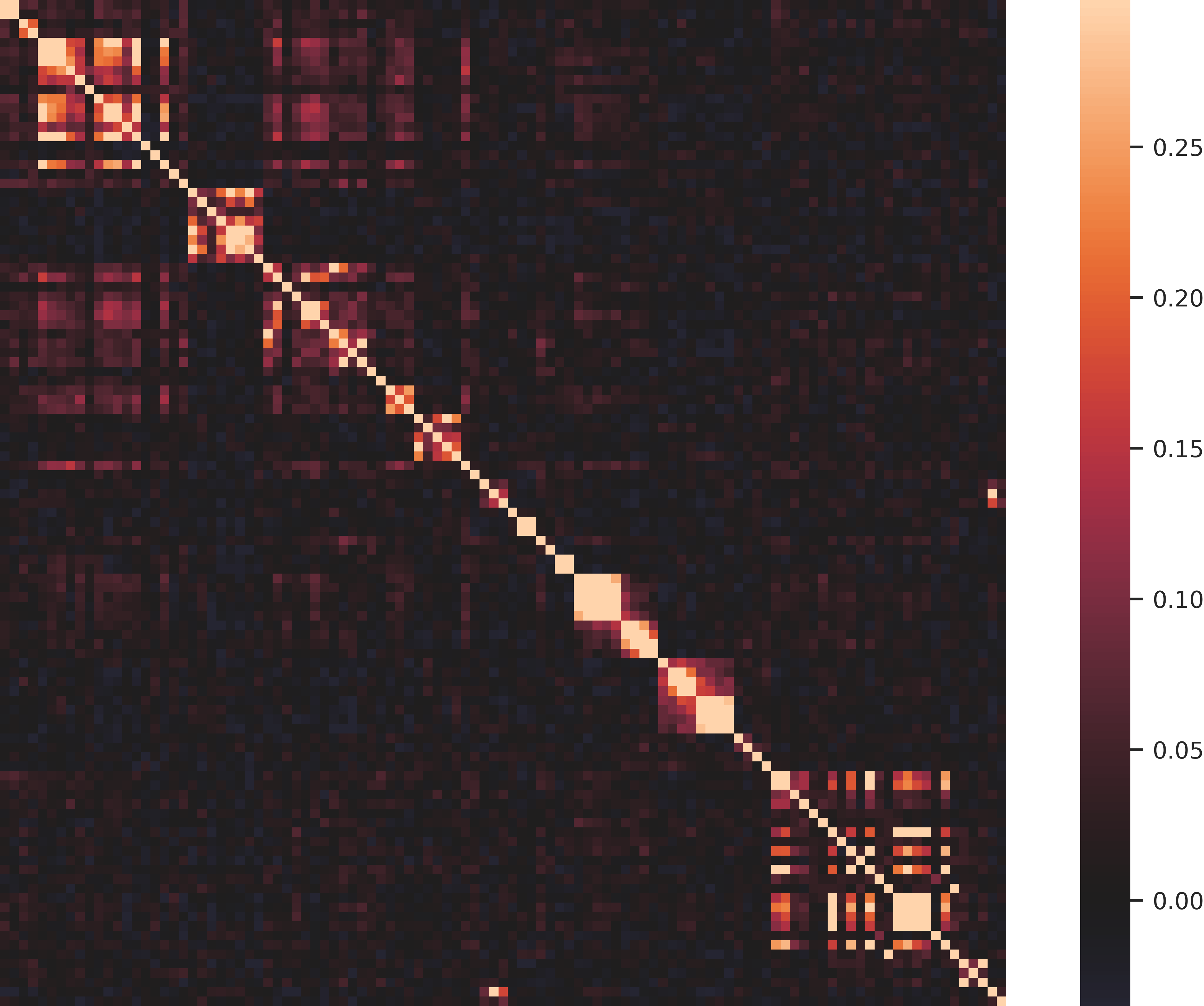}
    \caption{
        Average correlation coefficients between all series at the last forecasted time step for the \texttt{fred-md} dataset.
        The bounds of the color map are set to the 0.02 and 0.98 quantiles of the data.
    }\label{fig:space-correlations-fred}
\end{figure}

\begin{figure}
    \centering
    \includegraphics[width=0.95\linewidth]{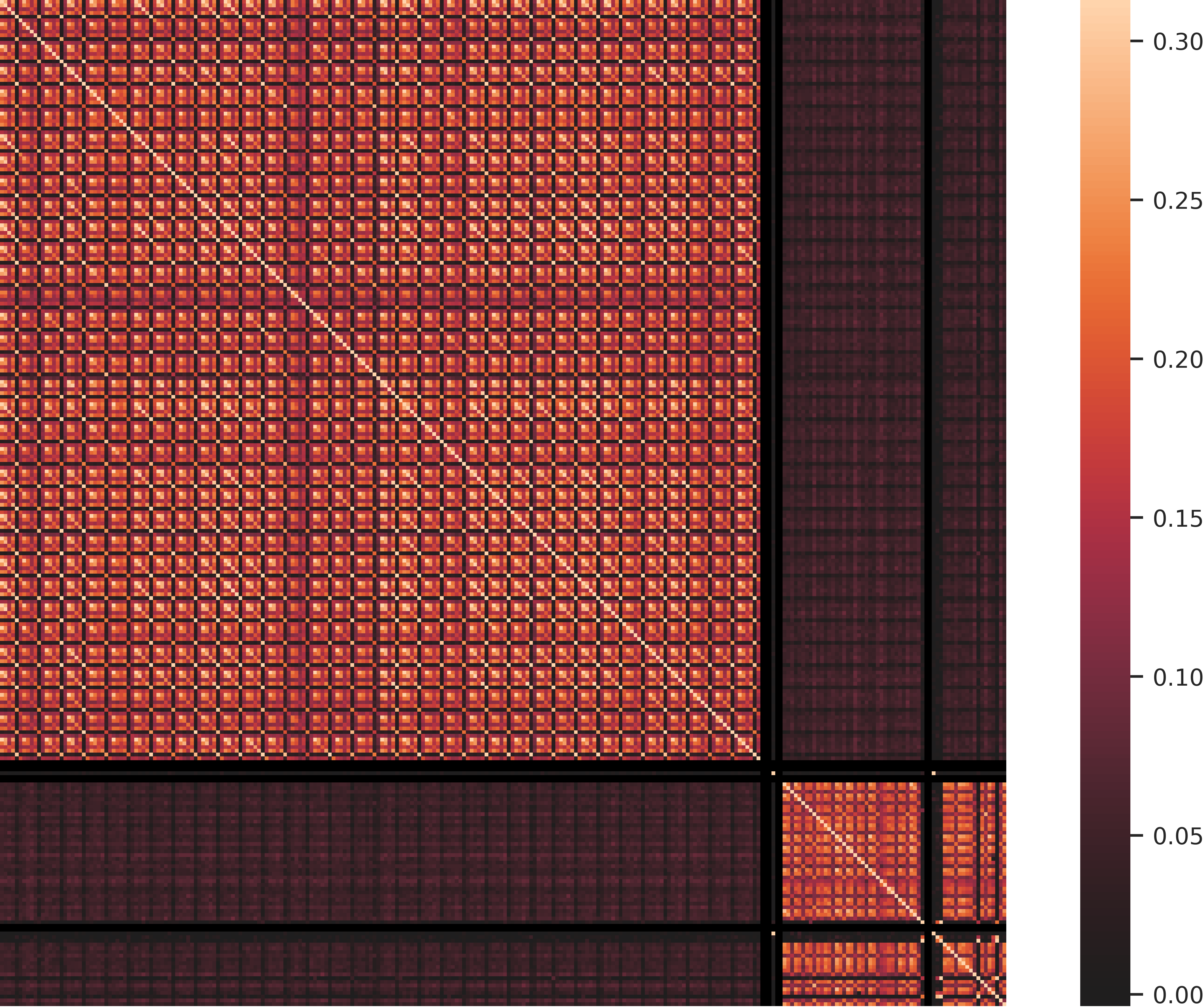}
    \caption{
        Average correlation coefficients between all series at the last forecasted time step for the \texttt{kdd-cup} dataset.
        The bounds of the colour map are set to the 0.02 and 0.98 quantiles of the data.
        The black lines are caused by variables with constant forecasts, thus preventing them from having correlation coefficients.
    }\label{fig:space-correlations-kdd}
\end{figure}

\begin{figure}
    \centering
    \includegraphics[width=0.95\linewidth]{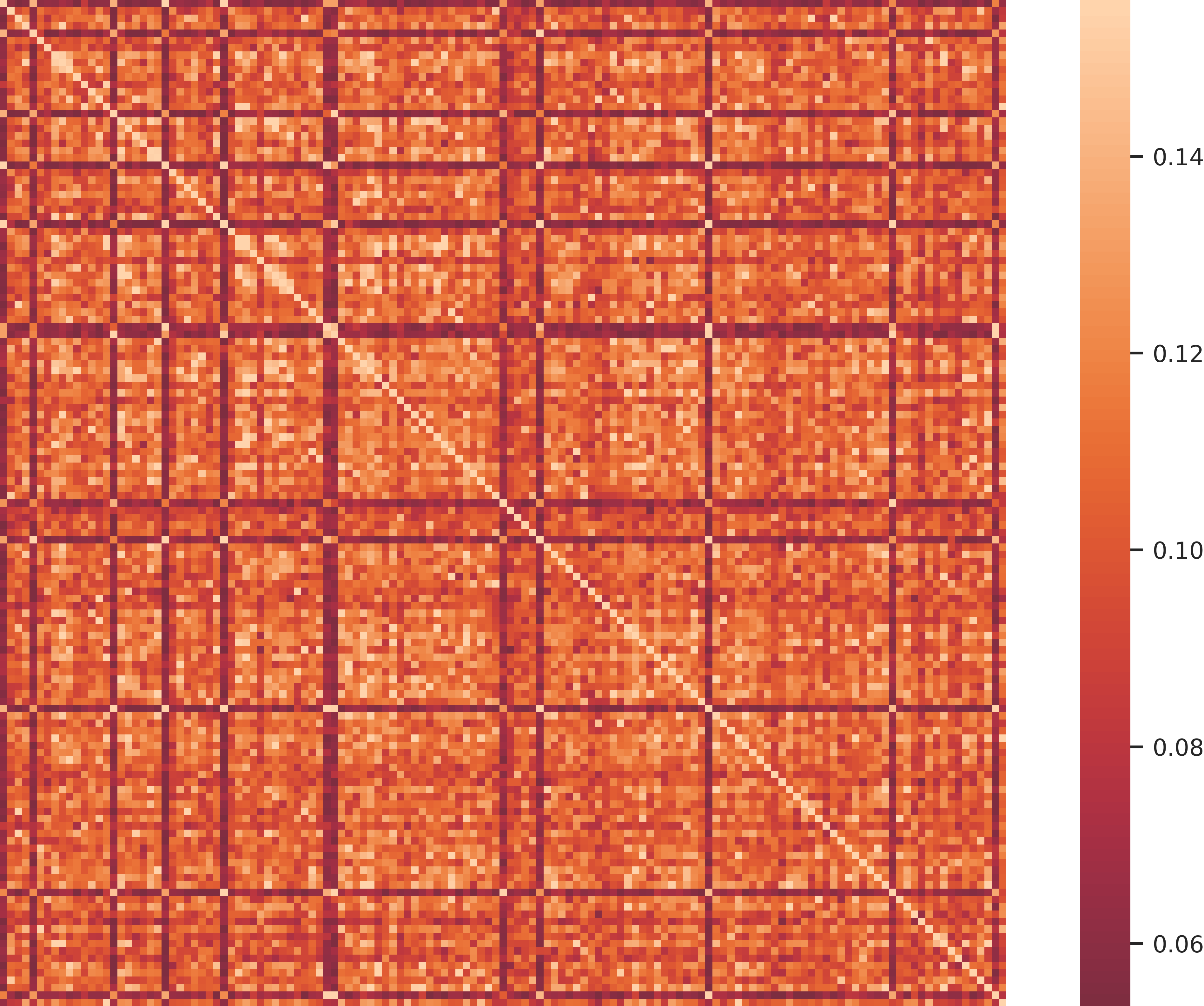}
    \caption{
        Average correlation coefficients between all series at the last forecasted time step for the \texttt{solar-10min} dataset.
        The bounds of the color map are set to the 0.02 and 0.98 quantiles of the data.
    }\label{fig:space-correlations-solar}
\end{figure}

\begin{figure}
    \centering
    \includegraphics[width=0.95\linewidth]{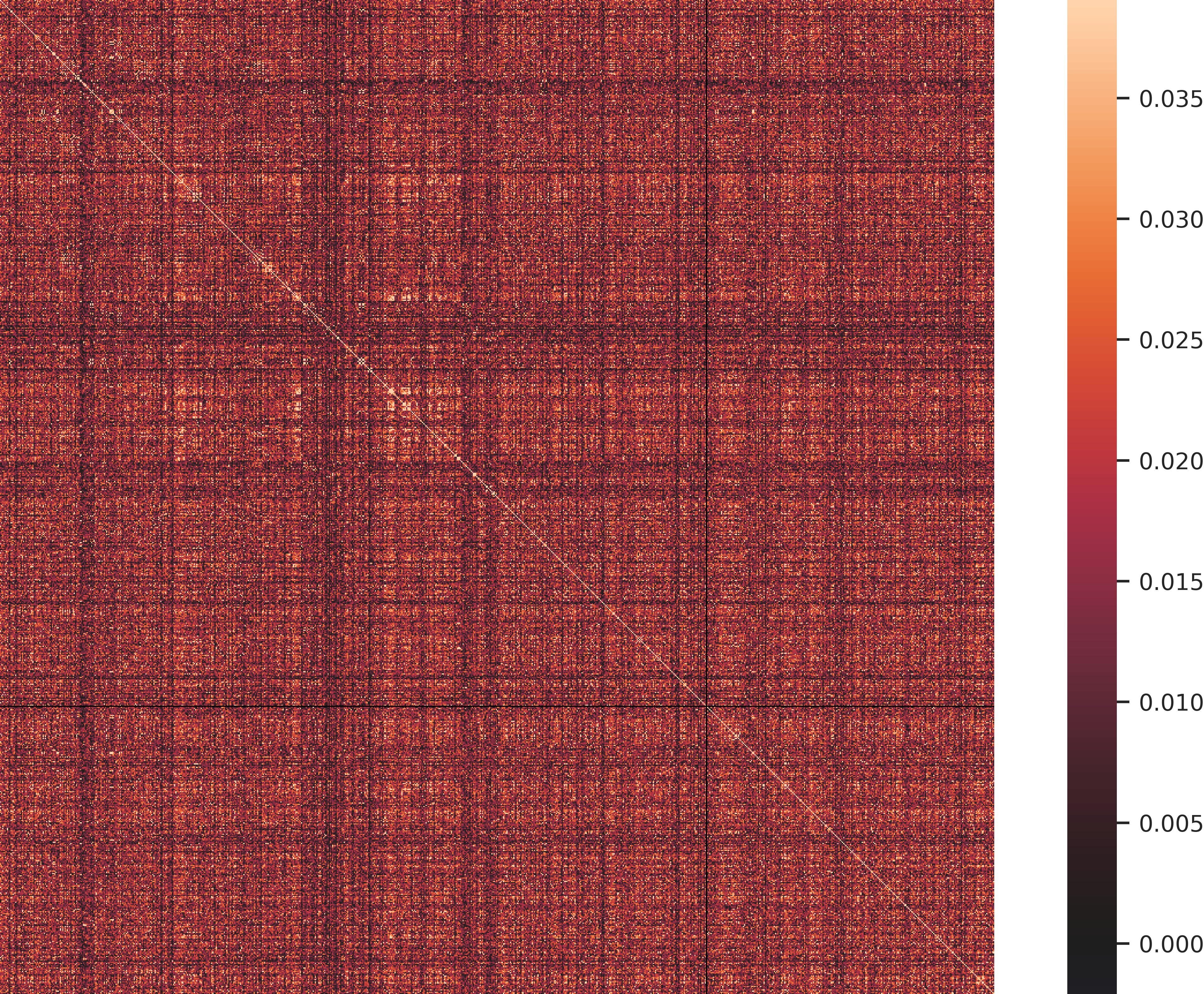}
    \caption{
        Average correlation coefficients between all series at the last forecasted time step for the \texttt{traffic} dataset.
        The bounds of the colour map are set to the 0.02 and 0.98 quantiles of the data.
        The black lines are caused by variables with constant forecasts, thus preventing them from having correlation coefficients.
    }\label{fig:space-correlations-traffic}
\end{figure}

\subsubsection{Impact of Bagging on Inter-Series Correlations} \label{app:baggingexp}

In \cref{table:bagging-maintext}, we have shown that reducing the number of series available at once during training has no negative impact on the quality of the various metrics.
However, does it affect the models' ability to learn inter-series dependencies?
In this section, we repeat the inter-series correlations experiment for the forecasts obtained by models trained using various bagging sizes.
The results for bagging sizes 1, 2, 5, and 10 are shown in \cref{fig:space-correlations-bags}.
As expected, training with a bagging size of 1 prevents the model from correctly learning inter-series dependencies.
This occurs because the model never sees data from multiple series during training; thus, the attention mechanism cannot learn to block the flow of information between two independent series.
The result is a matrix where all series are quite correlated; in fact, the correlation values are similar to those in \cref{fig:time-correlations}, suggesting that the model may consider all variables to be from the same series.
For a bagging size of 2, the correlations between some series start to fade, showing that the attention mechanism is becoming more selective about dependency patterns.
Finally, after the bagging size is increased to 5 or 10, the inter-series correlations seem to converge, suggesting that further increasing the bag size would not change the learned correlations.

\begin{figure}
    \centering
    \includegraphics[width=0.95\linewidth]{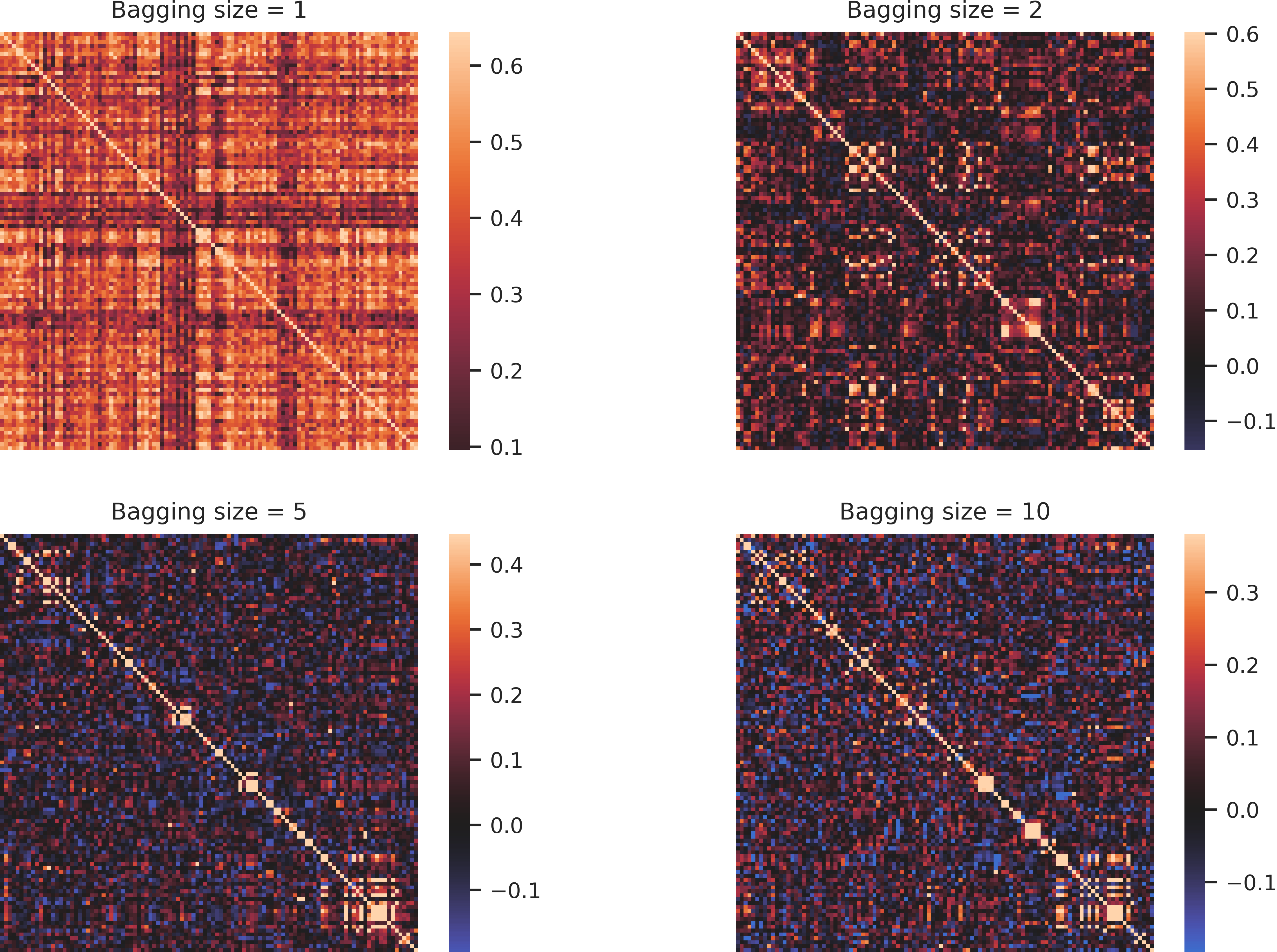}
    \caption{
        Average correlation coefficients between all series at the last forecasted time step for the \texttt{fred-md} dataset for various bagging sizes.
        The bounds of the color maps are set to the 0.02 and 0.98 quantiles of the data.
    }\label{fig:space-correlations-bags}
\end{figure}

One concern raised by this experiment is that the energy score could not reveal the highly incorrect correlations learned for a bagging size of 1.
The energy score for a bagging size of 1 is equal to $8.43 \pm 1.76$, which is very close to the one obtained for a bagging size of 20 ($8.30 \pm 1.68$) and even better than the one obtained for a bagging size of 5 ($9.36 \pm 1.88$).
One possible explanation for this lack of sensitivity is that the energy score is less sensitive at high dimensions, as explored in \citet{pinson2013discrimination}.
Further investigation is required to determine whether this is indeed the case in this experiment or if this is due to another cause.

\end{document}